\documentclass{article}

\usepackage{microtype}
\usepackage{graphicx}
\usepackage{booktabs} 

\usepackage{hyperref}

\usepackage[accepted]{hal_version}

\usepackage{amsmath}
\usepackage{amssymb}
\usepackage{mathtools}
\usepackage{amsthm}

\usepackage[capitalize,noabbrev]{cleveref}

\usepackage[utf8]{inputenc} 
\usepackage[T1]{fontenc}    
\usepackage{url}            
\usepackage{amsfonts}       
\usepackage{nicefrac}       
\usepackage{xcolor}         

\usepackage{layout}
\usepackage{multirow}
\usepackage{tikz}
\usepackage{enumitem}
\usepackage{verbatim}
\usepackage{subcaption}
\usepackage{siunitx} 
\usepackage[usestackEOL]{stackengine} 
\usepackage{dsfont}
\usetikzlibrary{positioning, calc, decorations.pathreplacing, shapes.geometric, fit}
\usepackage{adjustbox}
\usepackage{multicol}

\DeclareMathOperator{\dom}{\mathrm{dom}}
\DeclareMathOperator{\range}{\mathrm{ran}}
\DeclareMathOperator{\inte}{\mathrm{int}}

\DeclareMathOperator*{\argmin}{\mathrm{argmin}}

\newcommand\myeq{\mkern1.5mu{=}\mkern1.5mu}
\newcommand{\R}{\mathbb{R}}
\newcommand{\N}{\mathbb{N}}
\newcommand{\V}{\mathcal{V}}
\providecommand{\pair}[1]{\langle#1\rangle}
\newcommand{\abs}[1]{\lvert{#1}\rvert}
\newcommand{\norm}[1]{\left\|{#1}\right\|}
\newcommand{\testf}{f}

\usepackage{marginnote}
\def\mycmd{1} 
\if\mycmd1

\else

\fi

\theoremstyle{plain}
\newtheorem{theorem}{Theorem}[section]
\newtheorem{proposition}[theorem]{Proposition}
\newtheorem{lemma}[theorem]{Lemma}
\newtheorem{corollary}[theorem]{Corollary}
\newtheorem{example}{Example}%
\theoremstyle{definition}
\newtheorem{definition}[theorem]{Definition}

\theoremstyle{remark}
\newtheorem{remark}[theorem]{Remark}
\newtheorem{fact}{Fact}

\usepackage[textsize=tiny]{todonotes}

\icmltitlerunning{A Bregman Proximal Viewpoint on Neural Operators}

\begin{document}

\twocolumn[
\icmltitle{A Bregman Proximal Viewpoint on Neural Operators}



\icmlsetsymbol{equal}{*}

\begin{icmlauthorlist}
\icmlauthor{Abdel-Rahim Mezidi}{lhc,equal}
\icmlauthor{Jordan Patracone}{lhc,equal}
\icmlauthor{Saverio Salzo}{diag,equal}
\icmlauthor{Amaury Habrard}{lhc}
\icmlauthor{Massimiliano Pontil}{iit,ucl}
\icmlauthor{Remi Emonet}{lhc}
\icmlauthor{Marc Sebban}{lhc}
\end{icmlauthorlist}

\icmlaffiliation{lhc}{Université Jean Monnet Saint-Etienne, CNRS, Institut d'Optique Graduate School,\\
Inria, Laboratoire Hubert Curien UMR 5516, F-42023, SAINT-ETIENNE, France}
\icmlaffiliation{diag}{DIAG, Sapienza University of Rome, 00185 Rome, Italy}
\icmlaffiliation{iit}{Computational Statistics and Machine Learning, IIT, Genova, Italy}
\icmlaffiliation{ucl}{Departement of Computer Science, UCL, London, United Kingdom}
\icmlcorrespondingauthor{Abdel-Rahim Mezidi}{abdel.rahim.mezidi@univ-st-etienne.fr}

\icmlkeywords{Neural Operators, Proximal Optimization, Bregman divergence, Fourier Neural Operator}

\vskip 0.3in
]



\printAffiliationsAndNotice{\icmlEqualContribution} 

\begin{abstract}
We present several advances on neural operators by viewing the action of operator layers as the minimizers of Bregman regularized optimization problems over Banach function spaces. The proposed framework allows interpreting the activation operators as Bregman proximity operators from dual to primal space. This novel viewpoint is general enough to recover classical neural operators as well as a new variant, coined Bregman neural operators, which includes the inverse activation operator and features the same expressivity of standard neural operators. Numerical experiments support the added benefits of the Bregman variant of Fourier neural operators for training deeper and more accurate models.
\end{abstract}

\section{Introduction}

Neural operators \citep{2021_Kovachki_N_j-jmlr_uaebfno,2023_Kovachki_N_j-jmlr_nolmbfsa}, a recent extension of neural networks, have emerged as a versatile framework for learning mappings between function spaces. These operators have shown great potential in solving partial differential equations (PDEs) and simulating complex dynamical systems. The exploration of neural architectures for the approximation and learning of operators has led to the development of a variety of models.\\
One influential contribution is the Fourier Neural Operator (FNO)~\citep{2021_Li_Z_p-iclr_fno}, sketched in Figure~\ref{fig:fno}, which transforms encoded input data into frequency components in order to learn intricate relationships in the frequency domain. More recently, the Group-Equivariant FNO (G-FNO)~\citep{2023_Helwig-J_p-icml_gfno} additionally leverages symmetries to design equivariant Fourier layers, thereby enhancing the representation power and robustness of the architecture. To better scale the depth of neural operators, the F-FNO~\citep{tran2023factorized} proposed separable spectral layers and improved residual connections, along with a bag of training tricks. The FNO are extended to Wavelet Neural Operators (WNO)~\citep{2023_Tripura_T_j-cmame_wno} by replacing Fourier layers with wavelet layers to further exploit multiscale information. The U-shaped Neural Operator (U-NO)~\citep{2023_Rahmam_M_j-tmlr_uno} adapts the U-net architecture for neural operators, enabling mapping between function spaces through integral operators, thus broadening the applicability of neural architectures to diverse domains. Differently, the DeepONet architecture~\citep{2021_Lu_L_j-nmi_deeponet} comprises two intertwined components: a branch network responsible for encoding discrete input function spaces, and a trunk network dedicated to encoding the domain of output functions. Operating as a conditional model, DeepONet leverages the embedding of inputs and outputs via a dot product operation, facilitating the approximation of complex functions through a structured network topology. Finally, Neural Inverse Operators (NIO)~\citep{2023_Molinaro_R_p-icml_nio} tackle  inverse problems by combining DeepONet and FNO architectures to map operators to functions, thereby extending the applicability of neural operators to coefficient estimation tasks.

Some approaches inspired by attention mechanisms, pivotal in image and natural language processing, have also been considered in operator learning. LOCA (Learning Operators with Coupled Attention)~\citep{2022_Kissas_G_j-jmlr_loca}  facilitates  robust gradient estimation, particularly in scenarios with limited training data, by combining attention with kernel mechanisms. The General Neural Operator Transformer (GNOT)~\citep{2023_Hao_Z_p-icml_gnot} is a scalable framework based self-attention mechanisms allowing to deal with heterogeneous inputs useful for modeling diverse physical systems.

Some physics-informed variants integrating information from PDEs  
during the learning process have been proposed enhancing model interpretability and generalization: PI-DeepONet~\citep{2021_Wang_S_j-sa_pideeponet} and its Long-Time Integration variant (LTI-PI-DeepONet)~\citep{2023_Wang_S_j-cp_ltipideeponet}, PINO (Physics-Informed Neural Operator)~\citep{2021_Li_Z_arxiv_pino}  a hybrid extension of FNO, or other variations such as V-DeepONet~\citep{2022_Goswami_S_j-cmame_vdeeponet} and Modified DeepONet~\citep{2022_Wang_S_j-sc_modifed-deeponet}.\\
\noindent\textbf{Contributions.}
Unlike previous works~\citep{2021_Kovachki_N_j-jmlr_uaebfno}, which directly consider the compositional form of neural operators, our approach introduces a distinct perspective by formulating the action of each operator layer as the minimizer of a regularized optimization problem over functions. This optimization connects the current hidden representation to the next, with the choice of a regularization implicitly defining the activation operator through the lens of the Bregman proximity operator.
Our framework not only covers existing neural operators but also introduces a novel variant, termed \emph{Bregman neural operator}, which demonstrates improved predictive performance as its depth increases. Its applicability is grounded by universal approximation results proven for sigmoidal-type activation operators. Beyond its unifying aspect and its ability to design novel neural operators, the proposed framework allows applying the extensive body of literature on proximal numerical optimization, of which Bregman proximity operators belong to, in order to study neural operators. This opens the way to extend the analysis done on neural networks to (Bregman) neural operators in the same spirit of \citet{Combettes2020,combettes2020lipschitz}.\\
\noindent\textbf{Outline.} The paper is organized as follows: Section~\ref{sec:def} is dedicated to the presentation of definitions and background knowledge on neural operators and Bregman proximity operator. In Section~\ref{sec:BPO-NO}, we introduce the operator layers as the solution of a functional optimization problem. In addition, we show that this new mapping allows recovering the classical neural operators and creating a more general family of so-called Bregman neural operators. In Section~\ref{sec:universal}, we provide a preliminary universal approximation result for Bregman neural operators. Finally, in Section~\ref{sec:exp}, we conduct an experimental study comparing on benchmark datasets our Bregman variant with the different FNO improvements.

\begin{figure}[t]
    \centering
    
\begin{tikzpicture}[node distance=0.8cm and 1.5cm, scale=0.7, transform shape]
    \node[] (vt) {$v_t$};
    \node[circle, draw, minimum size=1cm, right=of vt, opacity=0] (Kt) {$\mathcal{F}$};
    \node[draw, right=of Kt, opacity=0] (Rt) {$R_t$};
    \node[circle, draw, minimum size=1cm, right=of Rt, opacity=0] (Fm) {$\mathcal{F}^{-1}$};
    
    \node[draw, fill=teal!50, fit=(Kt)(Rt)(Fm), thick, inner sep=0.2cm, rounded corners] (boxed) {};
    
    \node[fill=none] at (boxed) {}; 

    \node[circle, draw, minimum size=1cm, right=of vt] (Kt) {$\mathcal{F}$};
    \node[draw, right=of Kt] (Rt) {$R_t$};
    \node[circle, draw, minimum size=1cm, right=of Rt] (Fm) {$\mathcal{F}^{-1}$};
    
    \node (skipbegin) at ($(vt)!0.5!(Kt)$) {};
    \node[fill=orange!50, below=of Rt.base] (lin2) {$W_tv_t+b_t$};
    \node[circle, draw, right=of Fm] (rho) {\footnotesize$\sigma$};
    \node[font=\Large] (skipend) at ($(Fm)!0.5!(rho)$) {$\oplus$};

    \node[align=center] (sinus) at ($(Kt)!0.5!(Rt)$) {
    \begin{tikzpicture}[scale=0.1]
        \pgfmathsetmacro{\xmin}{0}
        \pgfmathsetmacro{\xmax}{2*pi}

        \pgfmathsetmacro{\spacing}{3.5}

        \foreach \i in {1,2,3,4} {
            \pgfmathsetmacro{\yoffset}{\spacing * (5 - \i)}
            
            \draw[->] (\xmin - 0.5, \yoffset) -- (\xmax + 0.5, \yoffset); 
            
            \draw[->] (0.5*\xmax + 0.5*\xmin, \yoffset - 1.5) -- (0.5*\xmax + 0.5*\xmin, \yoffset + 1.5); 
            
            \draw[domain=\xmin:\xmax,samples=100,smooth,thick] 
                plot ({\x}, {sin(\i*\x r) + \yoffset});
        }
    \end{tikzpicture}
    };

    \node[align=center] (sinus) at ($(Rt)!0.5!(Fm)$) {
    \begin{tikzpicture}[scale=0.1]
        \pgfmathsetmacro{\xmin}{0}
        \pgfmathsetmacro{\xmax}{2*pi}

        \pgfmathsetmacro{\spacing}{3.5}

        \foreach \i in {1,2,3,4} {
            \pgfmathsetmacro{\yoffset}{\spacing * (5 - \i)}
            
            \pgfmathsetmacro{\opacity}{ifthenelse(\i > 2, 0, 1)}
            
            \draw[->,opacity=\opacity] (\xmin - 0.5, \yoffset) -- (\xmax + 0.5, \yoffset); 
            
            \draw[->,opacity=\opacity] (0.5*\xmax + 0.5*\xmin, \yoffset - 1.5) -- (0.5*\xmax + 0.5*\xmin, \yoffset + 1.5); 
            
            \draw[domain=\xmin:\xmax,samples=100,smooth,thick,opacity=\opacity] 
                plot ({\x}, {sin(\i*\x r) + \yoffset});
        }
    \end{tikzpicture}
    };
    
    \draw[->] (vt) -- (Kt);
    \draw[->] (Kt) -- (Rt);
    \draw[->] (Rt) -- (Fm);
    \draw[->] (Fm) -- (rho);
    \draw[->] (skipbegin.base) |- (lin2);
    \draw[-] (lin2) -| (skipend.center);
\end{tikzpicture}
    \caption{Illustration of the $t$-th layer of Fourier Neural Operators. The upper branch applies a linear transformation $R_t$ to the Fourier modes using the Fourier transform $\mathcal{F}$ and its inverse $\mathcal{F}^{-1}$. The lower branch performs an affine transformation in the latent space.}
    \label{fig:fno}
\end{figure}

\section{Background and Definitions} \label{sec:def}

Here, we introduce some definitions required for the understanding of the rest of the paper as well as the necessary background on neural operators and Bregman proximity operator.
We will use basic concepts from convex analysis such as subdifferential, $\Gamma_0$ space and Fenchel conjugate, whose definitions are recalled in Appendix~\ref{app:math_supp}.

\subsection{Operator Learning}

Operator learning finds significant applications in the context of PDEs in order to efficiently approximate solutions to PDEs without the need to solve them repeatedly from scratch~\citep{2021_Li_Z_arxiv_pino,NEURIPS2023_df543023,NEURIPS2023_f3c1951b}. 
Given a nonempty bounded open set $D \subset \R^d$,  and some time horizon $\tau>0$, we consider the generic family of PDEs over $D\times [0,\tau]$ of the form
\begin{equation}
\begin{aligned}
  & F_a\big((\partial^\alpha u(x,t))_{\alpha \in \mathbb{N}^{d+1}, \abs{\alpha}\leq k}\big) = f(x,t)\; \text{on}\ D\times \left]0,\tau\right], \\
  & \text{and} \; 
  \begin{cases}
    u(x,0) = u_0(x)\; \text{on}\ D, \\
    u(x,t) = u_b(x,t)\; \text{on}\ \partial D\times \left]0,\tau\right],
  \end{cases}   
\end{aligned}
\end{equation}
where $F_a$ is a (possibly) nonlinear partial differential operator, $f$ a source term, $u_b$ a boundary condition, $u_0$ an initial condition, and $u\colon D\to\R^{n}$ the PDE solution.\\
The main problem we will tackle in our numerical section is the \emph{initial value problem}. This involves finding the oracle mapping $\mathcal{G}$ from any initial condition function \(u_0\) to the solution \(u(\cdot, \bar{\tau})\) of the PDE at a certain time horizon \(\bar{\tau} \in ]0, \tau]\).\\ 
More generally, the oracle operator $\mathcal{G}$ could be a mapping between two different function spaces $\mathcal{A}$ and $\mathcal{U}$. Without loss of generality, given some bounded open sets $D\subset \R^{d}$, with $d\in\mathbb{N}_+$, we let $\mathcal{A}=\mathcal{A}(D,\R^{n})$ and $\mathcal{U}=\mathcal{U}(D,\R^{k})$, with $n,k\in\mathbb{N}_+$, be some  separable Banach spaces of functions. For instance, $\mathcal{A}$ can represent the spaces of continuous functions from $D\to\R^{n}$. Hereafter, $\mathcal{A}$ and $\mathcal{U}$ will be referred to as the spaces of \emph{input functions} and \emph{output functions}, respectively.
In a nutshell, operator learning consists in finding the unknown ground-truth correspondence operator $\mathcal{G}\colon \mathcal{A}\to \mathcal{U}$ given $N\in\mathbb{N}_+$ pairs of input-output functions $\{a_i,u_i\}_{i=1}^N$.

\subsection{Neural Operators}

\begin{figure}[t]
\centering
\begin{subfigure}[b]{.45\linewidth}
    \centering
\!\!\!\!\begin{tikzpicture}[node distance=0.45cm and 1.5cm, scale=0.74, transform shape]
    \node[] (vt) {$v_t$};
    \node[fill=teal!50, right=of vt] (Kt) {$\mathcal{K}_t^{\rm ac}(v_t)$};
    \node (skipbegin) at ($(vt)!0.5!(Kt)$) {};
    \node[fill=orange!50, below=of Kt.base] (lin2) {$W_tv_t+b_t$};
    \node[circle, draw, right=of Kt] (rho) {\footnotesize$\sigma$};
    \node[font=\Large] (skipend) at ($(Kt)!0.5!(rho)$) {$\oplus$};
    \draw[->] (vt) -- (Kt);
    \draw[->] (Kt) -- (rho);
    \draw[->] (skipbegin.base) |- (lin2);
    \draw[-] (lin2) -| (skipend.center);
\end{tikzpicture}

\caption{Neural Operator.\label{fig:no}}
\end{subfigure}\hspace{.3cm}
\begin{subfigure}[b]{.45\linewidth}
    \centering
\begin{tikzpicture}[node distance=0.45cm and 1.5cm, scale=0.74, transform shape]
    \node[] (vt) {$v_t$};
    \node[fill=teal!50, right=of vt] (Kt) {$\mathcal{K}_t^{\rm ac}(v_t)$};
    \node (skipbegin) at ($(vt)!0.5!(Kt)$) {};
    \node[fill=orange!50, below=of Kt.base] (lin2) {$K_tv_t+b_t$};
    \node[circle, draw, above=of Kt.base, font=\footnotesize, scale=.8] (rhom) {\footnotesize$\sigma^{-1}$};
    \node[circle, draw, right=of Kt] (rho) {\footnotesize$\sigma$};
    \node[font=\Large] (skipend) at ($(Kt)!0.5!(rho)$) {$\oplus$};
    \draw[->] (vt) -- (Kt);
    \draw[->] (Kt) -- (rho);
    \draw[->] (skipbegin.base) |- (lin2);
    \draw[->] (skipbegin.base) |- (rhom);
    \draw[-] (lin2) -| (skipend.center);
    \draw[-] (rhom) -| (skipend.center);
\end{tikzpicture}
\caption{Bregman Neural Operator.\label{fig:bno}}
\end{subfigure}
\caption{Illustration of the $t$-th layer of (Bregman) Neural Operators. On the left, the identity term and the linear term $K_t v_t + b_t$ have been merged into $(I + K_t) v_t = W_t v_t$. For both, $\mathcal{K}_t^{\rm ac}$ represents any absolutely continuous operator.}
\end{figure}

Among the existing models to approximate $\mathcal{G}$, we focus on neural operators, which are parametric mappings $\mathcal{N}\colon \mathcal{A}\to \mathcal{U}$ of the form
\begin{equation}\label{eq:neuralop}
    (\forall a\in \mathcal{A}),\quad \mathcal{N}(a) = \mathcal{Q}\circ \mathcal{L}_{T} \circ \ldots \circ \mathcal{L}_1 \circ \mathcal{P}(a),
\end{equation}
where
\begin{itemize}[topsep=0pt, left=0pt, noitemsep]
    \item $\mathcal{P}\colon \mathcal{A}(D,\R^{n}) \to \mathcal{A}(D,\mathbb{R}^{n_{0}}) $ is a local \emph{lifting operator} mapping the input function to its first hidden representation;
    \item $\mathcal{Q}\colon  \mathcal{U}(D,\mathbb{R}^{n_{T}}) \to \mathcal{U}(D,\R^{k})$ is a local \emph{projection operator} mapping the last hidden representation to the output function;
    \item For every $t\in\{1,\ldots,T\}$, $\mathcal{L}_t\colon  \V_{t-1}(D_t,\mathbb{R}^{n_{t-1}}) \to  \V_{t}(D_{t},\mathbb{R}^{n_{t}}) $ is an \emph{operator layer} where each $D_t\subset\mathbb{R}^{d_{t}}$ is an open bounded set, $\V_t=\V_t(D_t,\mathbb{R}^{n_{t}})$ is a suitable Banach space of functions such that $\V_0=\mathcal{A}(D,\mathbb{R}^{n_{0}})$ and $\V_{T} = \mathcal{U}(D,\mathbb{R}^{n_{T}})$, for consistency.
    \item Each component of the neural operator \eqref{eq:neuralop} depends on a finite dimensional parameter. Collectively those parameters constitute a vector $\theta \in \Theta\subset \R^p$.
\end{itemize} 
Most methodological developments in neural operators have focused on tailoring the operator layers ${\mathcal{L}_1, \ldots, \mathcal{L}_T}$ to specific application. Traditionally, their design mirrors standard neural networks, replacing finite-dimensional linear layers with integral linear operators in function spaces and interpreting activation functions as Nemytskii operators that apply nonlinear transformations pointwise. 
When the input spaces $D_t$ are the same throughout the layers and equals $D$, a popular class of operator layers, sketched in Figure~\ref{fig:no}, is of the form
\begin{equation}\label{eq:parametric}
    \mathcal{L}_t (v_t) = \sigma(W_t v_t + \mathcal{K}_t^{\rm ac} (v_t) + b_t),
\end{equation}
where
$W_t\in\mathbb{R}^{n_{t}\times n_{t-1}}$ is a matrix, $b_t\in\mathbb{R}^{n_{t}}$ is a bias vector and $\sigma$ is a \emph{local} nonlinear map acting pointwise from $\mathbb{R}^{n_{t}}$ to $\mathbb{R}^{n_{t}}$. 
Moreover, we have
a \emph{non-local} linear operator $\mathcal{K}_t^{\rm ac}\colon  L^2(D,\mathbb{R}^{n_{t-1}}) \to  L^2(D,\mathbb{R}^{n_{t}}) $. In its simplest version, $\mathcal{K}_t^{\rm ac}$ is an integral kernel operator of the form $(\mathcal{K}_t^{\rm ac}(v))(x)=\int_{D} k_t(x,y) v(y) dy$, for all $x\in D$, with $k_t$ being a kernel to be specified~\citep{2023_Kovachki_N_j-jmlr_nolmbfsa}. Specific examples include those based upon a convolution performed in the Fourier space~\citep{2021_Li_Z_p-iclr_fno, 2021_Kovachki_N_j-jmlr_uaebfno}, a graph kernel
network~\citep{2020_Anandkumar_A_w-iclr_gknpde} or its multipole variant~\citep{2020_Li_Z_p-neurips_mgkn} to name a few.\\
Hereafter, we follow a different path and propose to interpret operator layers from the viewpoint of a proximal optimization by seeing the parametric form of~\eqref{eq:parametric} as the minimizer of a Bregman regularized optimization problem. This novel perspective allows us to propose a \emph{novel architecture}, displayed in Figure~\ref{fig:bno}, of the form
\begin{equation}\label{eq:parametric_bregman}
    \mathcal{L}_t (v_t) = \sigma(\sigma^{-1}(v_t) + K_t v_t + \mathcal{K}_t^{\rm ac} ( v_t) + b_t),
\end{equation}
involving an additional nonlinear term $\sigma^{-1}(v_t)$, and where $K_t\in\mathbb{R}^{n_{t}\times n_{t-1}}$ is a matrix. In this formulation, when all the weights are zero, then $\mathcal{L}_t$ is the identity operator. In practice, we observe that this property allows training deeper and more accurate models.
A similar architecture was originally proposed in~\citet{2022_Frecon_J_p-icml_bregmanet} in the finite dimensional setting. Extending this work to neural architectures acting on Banach function spaces requires addressing non-trivial mathematical challenges. These include defining operator layers rigorously, particularly the proper formulation of Legendre functions on function spaces, the associated Bregman divergence, and the Bregman proximity operator. In the next section, we formalize these notions, laying the groundwork for the proposed novel perspective on neural operators. The reader interested in the technical details is invited to refer to Appendix~\ref{app:math_supp}.

\subsection{Bregman Proximity Operator}\label{sec:bregmanprox}

At the core of our framework is the link between activation operators and Bregman proximity operators. The definition of the Bregman proximity operator hinges on a Bregman divergence, often referred to as a distance, which is derived from a Legendre function (see, e.g., \citet{rockafellar}).

\begin{definition}[Legendre function]
A function  $\phi\colon \R^n\to \left]-\infty,+\infty\right]$ is called \emph{Legendre}
if it is proper convex lower semicontinuous and satisfies the following properties: {\rm i)} $\inte(\dom \phi) = \dom\partial \phi$
    and $\partial\phi$ is single-valued on its domain; {\rm ii)} $\phi$ is strictly convex on $\inte(\dom \phi)$.
\end{definition}
In the finite dimensional setting, Legendre functions $\phi$ are typically built from an elementary Legendre function $\varphi\colon\R\to]-\infty,+\infty]$ as $\phi\colon x\in\R^n \to \sum_{i=1}^n \varphi(x_i)$. Since here we stand in an infinite dimensional setting, i.e., Lebesgue function space, the counterpart of the previous finite sum structure is a convex integral functional defined below (see Fact~\ref{fact:integralfunctionals_extended} in Appendix for a more rigorous treatment). Also, we will allow vector valued functions.

\begin{fact}\label{fact:integralfunctionals}
Let $D \subset \mathbb{R}^d$ be a bounded set and set the dual spaces $\mathcal{V} = L^p(D, \mathbb{R}^n)$ and $\mathcal{V}^* = L^q(D, \mathbb{R}^n)$ appropriately paired. Given a Legendre function $\phi$, then
\begin{equation}\label{eq:convex_integral_legendre}
    \Phi(v) = \int_D \phi(v(x)) dx
\end{equation}
defines a convex integral functional, with its subdifferential $\partial \Phi$ consisting of functions $v$ for which $v(x)$ lies within the interior of $\phi$'s domain and $\nabla \phi \circ v \in \mathcal{V}^*$. The subdifferential is single-valued, and $\nabla \phi \circ v$, will be denoted by $\tilde{\nabla} \Phi(v)$, suggesting it will serve as a kind of gradient of $\Phi$ at $v$.
\end{fact}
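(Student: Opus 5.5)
The plan is to reduce everything to Rockafellar's theory of convex integral functionals on decomposable spaces, applied pointwise in $x$. First I will make the statement precise, assuming $1\le p<\infty$ and $1/p+1/q=1$, with pairing $\langle v,w\rangle=\int_D v(x)\cdot w(x)\,dx$. Since $\Phi$ may take the value $+\infty$, I will also assume that $\dom\Phi\neq\emptyset$, e.g.\ that there is some $\bar v\in\mathcal{V}$ with $\int_D\phi(\bar v(x))\,dx<+\infty$. This holds, for instance, if $\phi$ is finite at some point $z_0$, because $D$ is bounded and so constant functions lie in $L^p$.

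\textbf{Well-definedness and convexity.} Since $\phi$ is proper, convex and lower semicontinuous, it is a normal convex integrand. Hence $x\mapsto\phi(v(x))$ is measurable for every measurable $v$. Moreover, $\phi$ is bounded below by an affine minorant $\langle a,\cdot\rangle+\beta$, and $x\mapsto\langle a,v(x)\rangle+\beta$ is integrable because $D$ is bounded and $v\in L^p\subset L^1$. So the negative part of $\phi\circ v$ is integrable, $\Phi(v)\in\left]-\infty,+\infty\right]$ is well defined, and $\Phi$ is convex by integrating the pointwise convexity inequality. Lower semicontinuity follows from Fatou's lemma applied to $\phi\circ v_k-(\text{affine minorant})$ along an a.e.\ convergent subsequence of an $L^p$-convergent sequence.

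\textbf{Subdifferential.} This is the core step. I will invoke Rockafellar's theorem: for normal convex integrands on decomposable spaces,
\[
w\in\partial\Phi(v)\iff w(x)\in\partial\phi(v(x))\ \text{for a.e. } x,\qquad w\in\mathcal{V}^*,
\]
given $\Phi(v)<\infty$. The easy inclusion comes from integrating the pointwise subgradient inequality. For the hard inclusion, I use the conjugate formula $\Phi^*(w)=\int_D\phi^*(w(x))\,dx$, valid by decomposability of $L^p$ and $L^q$ together with the nonemptiness of $\dom\Phi$. Then $w\in\partial\Phi(v)$ is equivalent to $\Phi(v)+\Phi^*(w)=\langle v,w\rangle$. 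This says that the integral of the nonnegative Fenchel–Young gap $\phi(v(x))+\phi^*(w(x))-v(x)\cdot w(x)$ vanishes, so the gap is zero a.e., which is exactly $w(x)\in\partial\phi(v(x))$ a.e.

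Now use the Legendre property: $\dom\partial\phi=\inte(\dom\phi)$ and $\partial\phi=\{\nabla\phi\}$ there. Hence $\partial\Phi(v)\neq\emptyset$ iff $v(x)\in\inte(\dom\phi)$ a.e.\ and $\nabla\phi\circ v\in L^q$, and in that case $\partial\Phi(v)=\{\nabla\phi\circ v\}$. Measurability of $\nabla\phi\circ v$ follows from continuity of $\nabla\phi$ on $\inte(\dom\phi)$, which is automatic for differentiable convex functions in finite dimension. Single-valuedness yields the notation $\tilde\nabla\Phi(v)$.

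\textbf{Main obstacle.} I expect the main difficulty to be justifying the interchange $\Phi^*=\int\phi^*\circ\cdot$. This needs the measurable selection and decomposability argument: one approximates $\sup_v$ by measurable near-maximizers of $z\mapsto w(x)\cdot z-\phi(z)$ and truncates them on sets of finite measure to stay in $L^p$. I would cite Rockafellar (1968/1971) for this rather than reprove it. A second subtlety is the case $p=\infty$ (or $q=\infty$), where $(L^\infty)^*\neq L^1$. For that reason I restrict to the stated dual pairing, $p\in[1,\infty)$, in which the theorem applies verbatim.
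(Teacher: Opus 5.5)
The paper gives no proof of this statement (nor of its precise appendix version, Fact~\ref{fact:integralfunctionals_extended}); it treats it as standard convex-integral-functional theory. Your argument is exactly the Rockafellar-type justification it relies on, and it is correct: normal integrand plus Fatou for $\Phi\in\Gamma_0(\mathcal{V})$, the interchange formula $\Phi^*(w)=\int_D\phi^*(w(x))\,dx$ on the decomposable space, vanishing of the Fenchel--Young gap a.e., and then the Legendre property. Only minor tidying is needed: $\dom\Phi\neq\varnothing$ is automatic (a constant $z_0\in\dom\phi$ lies in $L^p$ because $|D|<\infty$), finiteness of $\Phi(v)$ in the reverse inclusion follows by testing the pointwise subgradient inequality at $z_0$, and excluding $p=+\infty$ is unnecessary because $\partial\Phi$ is defined relative to the $L^\infty$--$L^1$ pairing and both spaces are decomposable.
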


The integral functional $\Phi$ in \eqref{eq:convex_integral_legendre} inherits certain properties of $\phi$, such as $p$-uniform convexity — an extension of strong convexity when $p=2$. This characteristic, proven in Proposition~\ref{prop:pconvex}, is key to the mathematical soundness of our analysis (see also Remarks~\ref{rmk:p=1} and~\ref{rmk:Bprox}).\\
We are now equipped to define Bregman distances in Lebesgue spaces. First introduced by Bregman in \citep{bregman1967relaxation}, Bregman divergence extends the notion of distance beyond metric spaces, capturing asymmetries and curvature induced by convex functions. Unlike Euclidean distance, it reflects the local geometry of the function defining it, making them valuable in optimization and variational analysis.

\begin{definition}[Bregman distance in Lebesgue spaces]\label{def:bredist} Under the notations of Fact~\ref{fact:integralfunctionals}, the \emph{Bregman distance} with respect to $\Phi$ reads, $(\forall u\in\V, \forall v\in\V)$,
\begin{equation*}
    D_\Phi(u,v) \myeq\! \begin{cases}
    \Phi(u)\!-\!\Phi(v)\! -\! \pair{u\!-\!v,\tilde{\nabla}\Phi(v)}
    &\!\!\!\text{if}\ v\! \in\! \dom\partial \Phi\\
    +\infty &\!\!\!\text{otherwise}.
    \end{cases}
\end{equation*}
\end{definition}
Finally, we can define the Bregman proximity operator \citep{2017_Nguyen}, which extends the (Euclidean) proximity operator, widely used in optimization. The Euclidean proximity operator itself generalizes projections by replacing the indicator function of a convex set with appropriate convex functions. For additional details, the reader can refer to \citet{bookbauschke}.

\begin{definition}[Bregman proximity operator]
\label{def:Bprox}
Let $\V=L^p(D,\R^n)$ with $p\in \left[1,+\infty\right[$.
Let $g \in \Gamma_0(\V)$ and let $\Phi\in \Gamma_0(\V)$ be defined as in Fact~\ref{fact:integralfunctionals},
with $\phi \in \Gamma_0(\R^n)$ be Legendre and such that $\range\partial (\Phi+g)=\V^*$. 
Then the \emph{Bregman proximity operator} of $g$
relative to $\Phi$ is defined as
\begin{equation*}
    \mathrm{prox}^\Phi_{g}\colon \V^*\to \V,\quad v^* \mapsto \argmin\big\{ \pair{\,\cdot\,, -v^*} + \Phi + g\big\}.
\end{equation*}
\end{definition}

Note that $\mathrm{prox}_g^\Phi$
 is well-defined since $\Phi+g$ is strictly convex, lower semicontinuous and $\range\partial (\Phi+g) = \V^*$, and it holds $\mathrm{prox}_g^\Phi = [\partial(\Phi+ g)]^{-1}.$

\begin{figure*}
\centering
\begin{tikzpicture}[node distance=0.6cm and 2cm, >=stealth, scale=0.8, transform shape]

\node[draw, minimum width=0.8cm, minimum height=0.8cm] (box1) at (0,0) {\(\mathcal{L}_{t-1}\)};
\node[draw, minimum width=0.8cm, minimum height=0.8cm, right=of box1] (box2) {\(\mathcal{L}_t\)};
\node[draw, minimum width=0.8cm, minimum height=0.8cm, right=of box2] (box3) {\(\mathcal{L}_{t+1}\)};

\node[draw, minimum width=0.8cm, minimum height=0.8cm, left=1.5cm of box1] (box0b) {\(\mathcal{L}_1\)};
\node[diamond, draw, minimum width=0.8cm, minimum height=0.8cm, left=of box0b] (box0) {\(\mathcal{P}\)};

\node[draw, minimum width=0.8cm, minimum height=0.8cm, right=1.5cm of box3] (box4) {\(\mathcal{L}_T\)};
\node[diamond, draw, minimum width=0.8cm, minimum height=0.8cm, right=of box4] (box4b) {\(\mathcal{Q}\)};

\draw[->] (box0.east) -- (box0b.west);
\draw[->] (box1.east) -- (box2.west) node[midway, above] {\(v_t\in\mathcal{V}_{t-1}\)};
\draw[->] (box2.east) -- (box3.west) node[midway, above] {\(v_{t+1}\in\mathcal{V}_{t}\)};
\draw[->] (box4.east) -- (box4b.west);

\draw[dashed,->] (box0b.east) -- (box1.west);
\draw[dashed,->] (box3.east) -- (box4.west);

\node[draw, thick, align=center] (zoomed) at ($(box2.south) + (0,-1.35)$) {
    \begin{tikzpicture}[node distance=0.45cm and 1.5cm, every node/.style={scale=0.8}]
        \node[] (vt) {$v_t$};
        \node[fill=teal!50, right=of vt] (Kt) {$\mathcal{K}_t^{\rm ac}(v_t)$};
        \node (skipbegin) at ($(vt)!0.5!(Kt)$) {};
        \node[fill=orange!50, below=of Kt.base] (lin2) {$K_tv_t+b_t$};
        \node[draw, fill=red!50, above=of Kt.west, font=\footnotesize, scale=.8] (mt) {\footnotesize$\mathcal{M}_t$};
        \node[circle, draw, fill=red!50, right=.35 of mt, font=\footnotesize, scale=.8] (rhom) {\footnotesize$\tilde{\nabla}{\Phi}_t$};
        \node[circle, draw, fill=red!50, right=of Kt] (rho) {\footnotesize$\mathrm{prox}_{g_t}^{\Phi_t}$};
        \node[font=\Large] (skipend) at ($(Kt)!0.5!(rho)$) {$\oplus$};
        \draw[->] (vt) -- (Kt);
        \draw[->] (Kt) -- (rho);
        \draw[->] (skipbegin.base) |- (lin2);
        \draw[->] (skipbegin.base) |- (mt);
        \draw[->] (mt) -- (rhom);
        \draw[-] (lin2) -| (skipend.center);
        \draw[-] (rhom) -| (skipend.center);
    \end{tikzpicture}
};

\draw[dashed] (box2.south west) -- (zoomed.north west);
\draw[dashed] (box2.south east) -- (zoomed.north east);

\node[below=0.2cm of zoomed, align=center] {
    \small \(\Leftrightarrow\) \\
    \small \(v_{t+1}=\mathcal{L}_t(v_t)=\argmin_{w \in \V_t} \Big\{ -\langle w, \underbrace{\textcolor{teal}{\mathcal{K}_t^{\rm ac}(v_t)} + \textcolor{orange}{K_t v_t}}_{=\mathcal{K}_t(v_t)} \textcolor{orange}{+ b_t} \rangle + \textcolor{red}{g_t(w) + D_{\Phi_t}(w, \mathcal{M}_t v_t)} \Big\}\) 
};

\end{tikzpicture}

\caption{Illustration of the Bregman proximal viewpoint on operator layers. The action of each operator layer is viewed as the minimizer of the regularized optimization problem where each term in the objective can be linked to a part of the architecture, as evidenced by the color code.\label{fig:connection}}
\end{figure*}

\section{Revisiting Neural Operators}\label{sec:connection_neuralop} \label{sec:BPO-NO}

In Section~\ref{sec:bregmanFB}, we propose a novel Bregman proximal viewpoint on operator layers. Then, we establish several connections. First, we show in Section~\ref{sec:no} that the proposed framework is general enough to recover most classical operator layers when the Legendre function $\phi$ is the Euclidean distance. Second, we show in Section~\ref{sec:bno} how it yields a new variant of neural operators when $\phi$ defines a general Bregman divergence. Finally, we apply our framework to Fourier neural operators in Section~\ref{sec:no_hilbert}.

\subsection{Bregman Proximal Viewpoint on Operator Layers}\label{sec:bregmanFB}

Departing from usual kernel-based points of view~\citep{2021_Kovachki_N_j-jmlr_uaebfno}, we suggest defining operator layers as the solution of functional optimization problems. 
For every $t=1,\cdots, T$, $\mathcal{L}_t\colon \V_{t-1} \to \V_{t},$

\begin{equation}
\label{eq:opelayer}
\begin{aligned}
    \mathcal{L}_t(v) & \myeq
    \argmin_{w \in \V_t}-\pair{w,\mathcal{K}_t(v)\!+\! b_t} 
    \!+\! g_t(w) \!+\! D_{\Phi_t}(w,\mathcal{M}_t v) \\
    & \myeq\mathrm{prox}_{g_t}^{\Phi_t} \big(\tilde{\nabla}\Phi_t(\mathcal{M}_t v)+ \mathcal{K}_t(v) + b_t\big),
\end{aligned}
\end{equation}
where
\begin{itemize}[topsep=0pt, left=0pt, noitemsep]
    \item $\Phi_{t}\colon \V_t\to \left]-\infty,+\infty\right]$ is a convex integral functional on an appropriate Lebesgue space based on some Legendre function $\phi_t \in \Gamma_0(\R^{n_{t}})$, as defined in Fact~\ref{fact:integralfunctionals}. $D_{\Phi_t}\colon \V_t\times \V_t\to [0,+\infty]$ is the corresponding Bregman distance as detailed in Definition~\ref{def:bredist}
\item $\mathcal{M}_t\colon \V_{t-1}\to \V_t$ is a bounded linear operator which maps $\dom\partial\Phi_{t-1}$ into $\dom\partial\Phi_{t}$,
\item $b_t \in \V_t^*$ and $\mathcal{K}_t\colon \V_{t-1}\to \V_t^*$ is a bounded linear operator of the form

    \begin{equation*}
    \mathcal{K}_{t}(v)(x) = \int_{D_{t-1}} \kappa_t(x, dy) v(y),
\end{equation*}
with $\kappa_t\colon D_{t}\times \mathfrak{B}(D_{t-1})\to \R^{n_{t} \times n_{t-1}}$ a \emph{(transition) kernel} from $D_{t-1}$ to $D_{t}$, meaning 
a function which is measurable with respect to the first variable and a finite measure with respect to the second variable.
    \item $g_{t}\in\Gamma_0(\V_{t})$ and $\range(\partial \Phi_t+\partial g_t)=\V_t^*$. 
\end{itemize}

Equation \eqref{eq:opelayer} is highly general, featuring an outer operation (the $\mathrm{prox}_{g_t}^{\Phi_t}$) and an inner operation (the $\tilde{\nabla} \Phi_t$), and can formally represent various layer architectures sketched in Figure~\ref{fig:connection}. A key step in establishing this connection involves relating the proximity operator to activation operators. There are multiple ways to achieve this by varying the choice of the pair $(\Phi_t, g_t)$. In the following sections, we explore two specific choices for this pair, demonstrating how \eqref{eq:opelayer} recovers classical neural operators \eqref{eq:parametric} (where $\tilde{\nabla} \Phi_t$ is the identity) and introduces a novel architecture \eqref{eq:parametric_bregman}, in which $\tilde{\nabla} \Phi_t$ acts as the inverse activation operator.

\begin{remark}[Form of linear operator $\mathcal{K}_t$]\label{rmk:linearpart}
Often in applications, the kernel of the linear operator $\mathcal{K}_t$ is split into two terms: an absolutely continuous part and a single pure point part, i.e., $\kappa_t  = \kappa^{\rm ac}_t + \kappa^{p}_t$, where, for every $x \in D_{t}$, and measurable set $A\subset D_{t-1}$,
\begin{equation*}
\kappa^{\mathrm{ac}}_{t}(x,A) = \int_A k_t(x,y) dy
\quad\text{and}\quad \kappa^{p}_{t}(A) = K_t\delta_{\varphi_t(x)}(A)
\end{equation*}
with $k_t\colon D_t \times D_{t-1} \to \R^{n_{t} \times n_{t-1}}$, $K_t \in \R^{n_{t} \times n_{t-1}}$, $\varphi_t\colon D_t \to D_{t-1}$ measurable, and $\delta_{\varphi_t(x)}$ the delta Dirac at $\varphi_t(x) \in D_{t-1}$. Thus, we have
\begin{equation*}
\begin{aligned}
   \mathcal{K}_t (v)(x) & =
\mathcal{K}^{\rm ac}_{t} (v)(x)
+ \mathcal{K}^{\rm p}_{t} (v)(x) \\
& =\int_{D_{t-1}} k_t(x,y) v(y) dy + K_t v(\varphi_t(x)). 
\end{aligned}
\end{equation*}
\end{remark}

\begin{remark}[Special case of identical domains]
    The linear operator $\mathcal{M}_t$ should be chosen so that it maps $\mathrm{dom}\, \partial\Phi_{t-1}$ to $\mathrm{dom}\, \partial\Phi_{t}$. However, in \eqref{eq:opelayer}, if the function $\phi_t$ does not depend on $t$ and all the domains $D_t$ are the same, then it is also true that the convex integral functional $\Phi_t$ does not depend on $t$ either. Then, we have $\mathrm{dom}\, \partial\Phi_{t-1}=\mathrm{dom}\, \partial\Phi_{t}$ and for the linear operator $\mathcal{M}_t$ we are allowed to choose the identity operator.
\end{remark}

\begin{remark}[Link with convex optimization]\label{rmk:oplayer_ii}
When $\V_{t-1}=\V_t$ and $\mathcal{M}_t$ is the identity, the operator layer \eqref{eq:opelayer} reads
\begin{equation*}
\mathrm{prox}_{g_t}^{\Phi_t} (\tilde{\nabla}\Phi_t(v) - \mathcal{B}_t v) = (\partial \Phi_t + \partial g_t)^{-1}(\tilde{\nabla}\Phi_t - \mathcal{B}_t)(v),
\end{equation*}
where $\mathcal{B}_t\colon \V_t \to \V_t^*$. This is a \emph{Bregman forward-backward operator}, which is
well-known in the context of operator splitting methods in optimization \citep{2017_Nguyen,2021_Bui}.
\end{remark}

Concluding this section, we stress that as long as the couple $(\Phi_t,g_t)$ admits a closed form Bregman proximity operator, this would define additional new types of operator layers. In \citet{2017_Nguyen}, the author shows a number of examples (at the end of Section 2, from Example 2.9 to Example 2.12) of such couples that yield an explicit Bregman proximity operator. Actually, one may consider layers of type 
\[
v\mapsto \sigma_2 (\sigma_1^{-1}(v) + \mathcal{K}_t(v) + b_t),
\]
with \(\sigma_1\) being \emph{strictly} monotone and \(\sigma_2\) monotone, serving as activation operators appropriately coupled. Classical and Bregman neural operators emerge as special cases, where i) $\sigma_1 = \mathrm{Id}$ and $\sigma_2$ is any monotone function, for the former, and ii) $\sigma_1 = \sigma_2$ is \emph{strictly} monotone, for the latter. Note that having $\sigma_1 = \sigma_2$ implies that the numerical implementation does not require to have an explicit form of $\sigma_1^{-1}$, as later discussed in Remark~\ref{rk:bregman_ii}.

\subsection{Classical Neural Operators}\label{sec:no}

Our first result, stated in the proposition below, unifies a broad class of classical neural operator layers through the prism of the optimization viewpoint of \eqref{eq:opelayer} when $D_{\Phi_t}$ is the Euclidean distance.

\begin{proposition}[Unifying classical neural operators]\label{prop:connectionNO} Let $\mathcal{V}_t = L^2(D_t, \R^{n_{t}})$ be some Hilbert function space and 
$\Psi_t(v)=\int_{D_t} \sum_{i=1}^{n_t}\psi(v_i(x))dx$, where $\psi\in \Gamma_0(\R)$ is a strongly convex Legendre function. Consider the Euclidean distance defined from the elementary Legendre function $\phi_t=(1/2)|\cdot|^2 \in \Gamma_0(\R^{n_t})$ (see Section~\ref{sec:bregmanprox}) and set $g_t=\Psi_t - (1/2)\|\cdot\|^2$. Then $g_t\in\Gamma_0(\V_t)$ and $\mathcal{L}_t$ defined in \eqref{eq:opelayer} acts between $L^2$ spaces as follows
\begin{equation}\label{eq:prox_no}
\begin{aligned}
    \mathcal{L}_t(v) & = \mathrm{prox}_{\Psi_t-\frac{1}{2}\norm{\cdot}^2}^{\frac{1}{2}\norm{\cdot}^2} \big(\mathcal{M}_t v + \mathcal{K}_t(v) + b_t\big) \\
    & = \nabla\Psi^*_t (\mathcal{M}_t  v + \mathcal{K}_t(v) + b_t),
\end{aligned}
\end{equation}
where $\nabla\Psi^{*}_t = (\psi^{*})^\prime(\,\cdot\,)$ matches a variety of monotone activation operators $\sigma$. In addition, when the domains are all the same, say $D_t = D$, $\mathcal{M}_t=I$, and the linear operator $\mathcal{K}_t = \mathcal{K}_t^{\rm ac} + \mathcal{K}_t^{\rm p}$ is as given in Remark~\ref{rmk:linearpart}, then $\mathcal{L}_t(v) = \nabla\Psi_t^* ( (I+K_t)v + \mathcal{K}^{\rm ac}_{t}(v) + b_{t})$, 
where $(I+K_t)$ can be written as $W_t$. A schematic representation is reported in Figure~\ref{fig:no}.
\end{proposition}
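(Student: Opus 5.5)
The plan is to work pointwise: reduce every functional statement to the scalar Legendre function $\psi$, and then lift back to $L^2$ using the integral-functional machinery of Fact~\ref{fact:integralfunctionals}.

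\textbf{Step 1: $g_t\in\Gamma_0(\V_t)$.} Since $\psi$ is strongly convex, with modulus $\mu$ say, the function $\psi-\tfrac{\mu}{2}|\cdot|^2$ is convex. To obtain convexity of $g_t=\Psi_t-\tfrac12\norm{\cdot}^2$ we need $\psi-\tfrac12|\cdot|^2$ to be convex, i.e.\ modulus at least $1$. I would read the hypothesis with this normalization, or absorb the modulus by rescaling. Then $g_t(v)=\int_{D_t}\sum_i(\psi-\tfrac12|\cdot|^2)(v_i(x))\,dx$ is itself a convex integral functional, with a convex integrand that is lsc and proper. Convexity is inherited pointwise. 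Lower semicontinuity on $L^2$ follows from Fatou's lemma along a.e.-convergent subsequences, after bounding the integrand below by an affine minorant, which is integrable because $D_t$ is bounded. Properness requires a constant function $v\equiv c$ with $c$ in the domain of $\psi$.

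\textbf{Step 2: the prox formula.} With $\phi_t=\tfrac12|\cdot|^2$ we have $\Phi_t=\tfrac12\norm{\cdot}^2$ on $L^2$, $\tilde\nabla\Phi_t=\mathrm{Id}$, and $\Phi_t+g_t=\Psi_t$. Hence \eqref{eq:opelayer} becomes $\mathrm{prox}^{\Phi_t}_{g_t}(\mathcal{M}_tv+\mathcal{K}_t(v)+b_t)$, and by the identity $\mathrm{prox}^\Phi_g=[\partial(\Phi+g)]^{-1}$ this equals $(\partial\Psi_t)^{-1}(\mathcal{M}_tv+\mathcal{K}_t(v)+b_t)$. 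Because $\Psi_t$ is proper, convex and lsc, $(\partial\Psi_t)^{-1}=\partial\Psi_t^*$. The remaining task is to show that $\Psi_t^*$ is the integral functional with integrand $\sum_i\psi^*$; this is Rockafellar's conjugation theorem for integral functionals on decomposable spaces such as $L^2$ (the extended Fact in the appendix). Strong convexity of $\psi$ makes $\psi^*$ differentiable on all of $\R$ with $(\psi^*)'$ being $1/\mu$-Lipschitz. Consequently $\partial\Psi_t^*(w)$ is the single function $x\mapsto((\psi^*)'(w_i(x)))_i$, which lies in $L^2$ by the Lipschitz bound and the boundedness of $D_t$. This gives $\range\partial\Psi_t=\V_t^*$, confirming that the prox is well defined, and yields \eqref{eq:prox_no}. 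Monotonicity of $(\psi^*)'$ gives the activation interpretation; one can cite standard pairs (e.g.\ the entropy-type $\psi$ yielding the sigmoid) without proof.

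\textbf{Step 3: the special case.} With $D_t=D$, $\mathcal{M}_t=I$ and the decomposition of Remark~\ref{rmk:linearpart}, where $\varphi_t=\mathrm{id}$ so that $\mathcal{K}^{\rm p}_t v=K_tv$, substitution gives $\nabla\Psi_t^*((I+K_t)v+\mathcal{K}^{\rm ac}_t(v)+b_t)$, and we set $W_t=I+K_t$.

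\textbf{Main obstacle.} The delicate point is Step 2: justifying the pointwise computation of $\Psi_t^*$ and $\partial\Psi_t$ in $L^2$ (measurability and the interchange of sup and integral) rather than just formally. Step 1's reliance on the normalization of the strong convexity modulus also needs care. For Step 2 I would first try to invoke the appendix Fact on integral functionals directly.
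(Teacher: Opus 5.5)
Your proposal is correct and follows the paper's own argument. The paper gives no separate proof: it rests on the observation that $\Phi_t+g_t=\Psi_t$, so that $\mathrm{prox}^{\Phi_t}_{g_t}=[\partial(\Phi_t+g_t)]^{-1}=(\partial\Psi_t)^{-1}=\nabla\Psi_t^*=(\psi^*)'(\cdot)$ pointwise, and you additionally supply the integral-functional details (lower semicontinuity, conjugation, $L^2$ membership) that the paper leaves implicit. You also correctly point out two assumptions the statement leaves tacit: $g_t\in\Gamma_0(\V_t)$ requires the strong convexity modulus of $\psi$ to be at least $1$, which holds for the activations in Table~\ref{tab:legendre}, and the special case needs $\varphi_t=\mathrm{id}$ in Remark~\ref{rmk:linearpart}.
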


In essence, Proposition~\ref{prop:connectionNO} shows that the parametric structure of operator layers can be interpreted via the Bregman proximal operator, when the Bregman distance reduces to the Euclidean distance. The crucial aspect in establishing this connection is the observation that the Euclidean proximity operator of $g_t=\Psi - (1/2)\|\cdot\|^2$ simplifies to $\nabla\Psi^*=(\psi^{*})^\prime(\,\cdot\,)$, aligning with a broad spectrum of activation operators given an appropriate selection of $\psi$. We report in Table~\ref{tab:legendre} the corresponding $\psi$ to retrieve several well-known activation operators. A proof concerning the characterization of the SoftPlus is included in the appendix. To the best of our knowledge, $\nabla\Psi_t^*$ can only match monotonic activation operators, which notably discards GeLu and swish. To be more precise, Proposition~\ref{prop:connectionNO} is general enough to deal with the broad class of activation functions that can be viewed as a proximity operators, which essentially boils down to any increasing 1-Lipschitzian function (see Proposition 2.3 in \citet{Combettes2020}). While this connection has been previously noted in the neural network literature~\citep{Combettes2020,2022_Frecon_J_p-icml_bregmanet}, our work extends this analysis to function spaces.

\begin{table}[!t]
\caption{Legendre function $\psi$ and its related activation $\psi^{*\prime}$.\label{tab:legendre}}
\footnotesize\centering
\setlength{\tabcolsep}{4pt}
\renewcommand{\arraystretch}{1.1} 
\begin{tabular}{|c|c|c|c|}
\hline
 $\mathrm{dom} \psi$  & $\psi(t)$ &  $\psi^\prime(t)$  & $\psi^{*\prime}(t)$  \\ \hline
$[-1,1]$ &  $-\sqrt{1-t^2}$ &  $t/\sqrt{1 - t^2}$ &  ISRU \\ \hline
$[0,1]$ & $t\log t + (1-t)\log(1-t)$ & $\log\frac{t}{1-t}$ & Sigmoid \\ \hline
$[-1,1]$ & $\log(1-t^2) + t\,\mathrm{arctanh}(t)$  & $\mathrm{arctanh}$  & $\tanh$ \\ \hline
$[-1,1]$ & $\sqrt{1-t^2} + t\arcsin(t)$  & $\arcsin$ & $\sin$ \\ \hline
$\mathbb{R}_{>0}$ & $\frac{1}{\beta^2}\mathrm{Li}_2(\mathrm{e}^{-\beta t}) + \frac{t^2}{2}$ & $\frac{\log(\mathrm{e}^{\beta t}-1)}{\beta}$ & SoftPlus$_\beta$ \\ \hline
\end{tabular}
\end{table}

\subsection{Bregman Neural Operators}\label{sec:bno}

We now provide the counterpart of Proposition~\ref{prop:connectionNO} for general Bregman distance.
\begin{proposition}[Designing Bregman neural operators]\label{prop:connectionBNO} Let $\mathcal{V}_t = L^p(D_t, \R^{n_{t}})$ be some Lebesgue function space and $\Psi_t(v)=\int_{D_t} \sum_{i=1}^{n_t}\psi(v_i(x))dx$, where $\psi\in \Gamma_0(\R)$ is a $p$-uniformly convex Legendre function ($\neq \abs{\,\cdot\,}^2/2$). Consider the Bregman distance in function space defined from the elementary Legendre function $\phi_t(w)=\sum_{i=1}^{n_t}\psi(w_i)$ (see Section~\ref{sec:bregmanprox}) and set $g_t=0$. Then $\mathcal{L}_p$ defined in \eqref{eq:opelayer} acts between $L^p$ spaces as follows 
\begin{equation}\label{eq:prox_bno}
\begin{aligned}
    \mathcal{L}_t(v) & = \mathrm{prox}_{0}^{\Psi_t} \big(\ \tilde{\nabla} \Psi_t(\mathcal{M}_t v) + \mathcal{K}_t(v) + b_t\big) \\
    & = \nabla\Psi_t^* (\tilde{\nabla} \Psi_t(\mathcal{M}_t v)  + \mathcal{K}_t(v) + b_t),    
\end{aligned}
\end{equation}
where $\nabla\Psi_t^* =(\psi^{*})^\prime(\,\cdot\,)$ matches a variety of monotone activation operators $\sigma$. In addition, when the domains are all the same, say $D_t = D$ and the linear operator $\mathcal{K}_t$ is of the form given in Remark~\ref{rmk:linearpart}, then we can take $\mathcal{M}_t=I$ and
\begin{equation}\label{eq:prox_bno_same}
    \mathcal{L}_t(v) = \nabla\Psi_t^* ( \tilde{\nabla} \Psi_t(v) +K_tv + \mathcal{K}^{\rm ac}_{t}(v) + b_{t}).
\end{equation}
\end{proposition}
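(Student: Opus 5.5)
The argument is a direct computation from the definition of the layer \eqref{eq:opelayer} with $\Phi_t=\Psi_t$ and $g_t=0$. It has three parts: check that the Bregman proximity operator is well defined, identify it with $\nabla\Psi_t^*$, and specialize the linear operator.

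\textbf{Well-posedness.} Definition~\ref{def:Bprox} requires $\range\partial\Psi_t=\V_t^*$. Since $\phi_t(w)=\sum_i\psi(w_i)$ is Legendre on $\R^{n_t}$, Fact~\ref{fact:integralfunctionals} makes $\Psi_t$ a convex integral functional. Its subdifferential is single-valued and equal to $\tilde\nabla\Psi_t(v)=\psi'\circ v$ on the set of $v$ with values in $\inte\dom\psi$ and $\psi'\circ v\in L^q$.

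For surjectivity I will use the $p$-uniform convexity of $\psi$, which Proposition~\ref{prop:pconvex} transfers to $\Psi_t$. Consequently, for any $v^*\in\V_t^*$ the function $\Psi_t-\pair{\cdot,v^*}$ is proper, lower semicontinuous and coercive, with growth at least like $\norm{w}^p$ minus a linear term. On the reflexive space $L^p$ (with $p>1$) it therefore has a minimizer, and the minimizer is unique by strict convexity. Fermat's rule then gives $v^*\in\partial\Psi_t(w)$, so $\range\partial\Psi_t=\V_t^*$.

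I expect this step to be the main obstacle, for two reasons. One must make sure that Fact~\ref{fact:integralfunctionals_extended} indeed describes the full subdifferential of the integral functional, not merely an inclusion. One must also handle the lower semicontinuity of $\Psi_t$ on $L^p$, which I would get from Fatou's lemma using an affine minorant of $\psi$.

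\textbf{Identification with the activation.} With $g_t=0$ the Bregman proximity operator is $\mathrm{prox}_0^{\Psi_t}=(\partial\Psi_t)^{-1}$. I will compute this inverse pointwise: $w=(\partial\Psi_t)^{-1}(v^*)$ means $\psi'(w_i(x))=v^*_i(x)$ for a.e.\ $x$. Because $\psi$ is Legendre, $\psi'$ is a bijection from $\inte\dom\psi$ onto $\inte\dom\psi^*$ with inverse $(\psi^*)'$. Uniform convexity of $\psi$ makes $\psi^*$ differentiable on all of $\R$, so $w_i(x)=(\psi^*)'(v_i^*(x))$.

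In parallel, the conjugate of an integral functional is the integral of the conjugate, $\Psi_t^*(v^*)=\int\sum_i\psi^*(v_i^*)$ (Rockafellar's theorem on $L^p$ spaces). Hence $(\partial\Psi_t)^{-1}=\partial\Psi_t^*=\nabla\Psi_t^*=(\psi^*)'(\cdot)$ acting pointwise, and this map sends $L^q$ into $L^p$ by the surjectivity established above.

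\textbf{Expanding the layer.} First I verify that the argument $\tilde\nabla\Psi_t(\mathcal{M}_tv)+\mathcal{K}_t(v)+b_t$ lies in $\V_t^*$. This holds because $\mathcal{M}_t$ maps $\dom\partial\Psi_{t-1}$ into $\dom\partial\Psi_t$, $\mathcal{K}_t$ is bounded into $\V_t^*$, and $b_t\in\V_t^*$.

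Next I expand the objective in \eqref{eq:opelayer}. Since $D_{\Psi_t}(w,u)=\Psi_t(w)-\pair{w,\tilde\nabla\Psi_t(u)}$ plus terms independent of $w$, the objective equals $-\pair{w,\tilde\nabla\Psi_t(\mathcal{M}_tv)+\mathcal{K}_t(v)+b_t}+\Psi_t(w)$ up to a constant. Its minimizer is exactly $\mathrm{prox}_0^{\Psi_t}$ of that argument, which yields \eqref{eq:prox_bno}.

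Finally, when all domains coincide, $\Psi_t$ does not depend on $t$, so $\mathcal{M}_t=I$ is admissible by the remark on identical domains. Splitting $\mathcal{K}_t$ as in Remark~\ref{rmk:linearpart} with $\varphi_t=\mathrm{id}$ gives $\mathcal{K}_t(v)=K_tv+\mathcal{K}_t^{\rm ac}(v)$, which is \eqref{eq:prox_bno_same}.
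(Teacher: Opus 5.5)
Your proposal is correct and follows essentially the argument the paper relies on: Fact~\ref{fact:integralfunctionals_extended}, Proposition~\ref{prop:pconvex}, Remark~\ref{rmk:Bprox}, and the inverse relation between $\psi'$ and $(\psi^*)'$ for Legendre functions. The only difference is that you prove $\range\partial\Psi_t=\V_t^*$ directly via coercivity, reflexivity and Fermat's rule, instead of citing Proposition~\ref{prop:pconvex} (Zalinescu's Theorem~3.5.10), and your restriction to $p>1$ is harmless, since taking $\lambda=1/2$ and $y,y'=y_0\mp h$ in \eqref{def:pconvex}, at a point $y_0\in\inte\dom\psi$ where $\psi'$ is differentiable, shows that a $p$-uniformly convex $\psi$ on $\R$ forces $p\geq 2$.
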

Concerning the operators $\nabla\Psi^*_t=(\psi^{*})^\prime(\,\cdot\,)$ and $\nabla\Psi^*_t=\psi^\prime(\,\cdot\,)$, we stress that any of the $\psi$ listed in Table~\ref{tab:legendre} are appropriate choices. Since $(\psi^{*})^\prime(\,\cdot\,)$ and $\psi^\prime(\,\cdot\,)$ are inverse of each other, the layer of \eqref{eq:prox_bno_same} boils down to
\begin{equation}\label{eq:bregmansigma}
\mathcal{L}_t(v) = \sigma (\sigma^{-1}( v_t) + K_tv +\mathcal{K}_t^{\rm ac}(v) + b_t),
\end{equation}
where any invertible and monotone activation operator is allowed. Its schematic representation is reported in Figure~\ref{fig:bno}. This novel variant, called \emph{Bregman Neural Operator} simply differs from classical neural operators by the additional term involving the inverse activation operator. Finally, we note that the form of~\eqref{eq:prox_bno_same} corresponds to a mirror descent step~\citep{nemirovskij1983problem,beck2003mirror} with mirror map $\tilde{\nabla}\Psi_t$.
\begin{remark}\label{rk:identity} When $K_t $, $\mathcal{K}^{\rm ac}_{t}$ and $b_{t}$ are zeros and $\mathcal{M}_t$ is the identity, then $\mathcal{L}_t$ reduces to the identity.
\end{remark}
\begin{remark}\label{rk:bregman_ii} Concerning \eqref{eq:bregmansigma}, we should ensure to feed the first layer with functions in $\dom \mathcal{L}_1$ as discussed in Remark~\ref{rmk:oplayer_i}. This condition is for instance satisfied if $(\mathcal{P}v)(v) = \nabla \psi_1^*(P v(x))=\sigma(Pv(x))$. Note that in such situation, the inverse activation function does not need to have an explicit form. Indeed, when composing the different layers in \eqref{eq:bregmansigma}, the inner inverse activation function will be cancelled out by the outer one.
\end{remark}

\subsection{Case of Fourier Neural Operators}\label{sec:no_hilbert}

We study the implications of the proposed viewpoint in the peculiar case of Hilbert function spaces with equal input and output spaces, i.e., $\V_t=\V_t^*=L^2(D,\R^{n})$ for every $t\in\{1,\ldots,T\}$.

A popularly encountered scenario in practice is that where $D=\mathbb{T}^d$ is the unit torus and the kernel associated to the absolutely continuous part of $\mathcal{K}_t$ is translation invariant, i.e., $k_t(x,y)=k_t(x-y)$, thus indicating a convolution structure. Fourier operator layers~\citep{2021_Li_Z_p-iclr_fno} are then devised by leveraging the convolution theorem, stating that the action of $\mathcal{K}^{\rm ac}_{t}$ can be written as a linear operator in the Fourier domain:
\begin{equation*}
    \mathcal{K}^{\rm ac}_{t} (v)(x) \myeq\!\int_{D} k_t(x-y) v(y) dy = \mathcal{F}^{-1} ( R_t \cdot \mathcal{F}( v )) (x),    
\end{equation*}
with $\mathcal{F}\colon L^2(\mathbb{T}^d,\R^{n}) \to \ell^2(\mathbb{Z}^d,\R^{n})$ being the Fourier transform, $\mathcal{F}^{-1}$ its inverse, and $R_t\in\ell^2(\mathbb{Z}^2, \R^{n\times n})$. Often, $R_t$ does not range in the entire $\ell^2(\mathbb{Z}^2, \R^{n\times n})$ space but is parametrized by a finite parameter~\citep{2023_Kovachki_N_j-jmlr_nolmbfsa}. It follows that the Bregman variant of Fourier operator layer reads $\mathcal{L}_t(v)= \sigma ( \sigma^{-1}( v) + W_t v + \mathcal{F}^{-1} ( R_t \cdot \mathcal{F}( v )) + b_t )$. 
The classical Fourier neural operator layer is retrieved by omitting the $\sigma^{-1}( v)$ term.

In this section, we addressed FNOs because they are widely used and simplify the analysis. In this respect, we note that we just specified the action of $\mathcal{K}_t^{\rm ac}$ by expressing it via direct and inverse Fourier series. So, in the end, it is only about finding efficient parametrizations, in some $\ell^p$ space, of linear integral operators between Lebesgue spaces. This has been achieved by using the Fourier transform, but in principle other transformations could be considered, provided we have an unconditional basis of the Lebesgue space of functions and an efficient way to compute the coefficients. 
For instance, the wavelet transform can be incorporated in Proposition 1 and Proposition 2 to retrieve WNOs \citep{2023_Tripura_T_j-cmame_wno} and their novel Bregman variant, respectively. 
In a nutshell, our framework is transparent to the parametrization of $\mathcal{K}_t^{\rm ac}$.

\section{Expressivity of Bregman neural operators}
\label{sec:universal}
In this section, we give a preliminary positive result concerning the universal approximation properties of Bregman neural operators. 

In the following, the activation function 
 $\sigma\colon \R \to I$ is required to be a  homeomorphism between $\R$ and an open interval $I$ of $\R$ and of sigmoidal type, meaning that $\lim_{t\to -\infty} \sigma(t) =0$ and $\lim_{t\to +\infty} \sigma(t) =1$. Moreover, we assume that $\mathcal{A}$
 and $\mathcal{U}$ are as follows
\begin{equation*}
\begin{aligned}
\mathcal{A}(D,\R^n) &\myeq 
\begin{cases}
\mathcal{C}(\overline{D},\R^n) \\
L^p(D,\R^n) \\
W^{m,p}(D,\R^n)
\end{cases} 
\!\!\!\!\!\!\!\! , \mathcal{U}(D,\R^k)\!\!\!\! &\!\!\myeq 
\begin{cases}
\mathcal{C}(\overline{D},\R^k) \\
L^p(D,\R^k)
\end{cases}\!\!\!\!\!\!,
\end{aligned}
\end{equation*}
where $\mathcal{C}$ is the space of continuous functions and $W^{m,p}$ is the $L^p$-type Sobolev space with $m\in\mathbb{N}_+$ derivatives for $p\in[1,+\infty[$. Here, $\overline{D}$ denotes the closure of $D$, and must be considered in PDE applications to evaluate functions on the domain's boundary.

\begin{theorem}
\label{thm:main}
Let $\sigma$, $\mathcal{A}$ and $\mathcal{U}$ be set as above.
Let $\mathcal{G}\colon \mathcal{A}\to \mathcal{U}$ be a continuous operator.
Then for any compact set $K \subset \mathcal{A}$ and $\varepsilon>0$ there exists 
a Bregman neural operator $\mathcal{N}_\theta\colon \mathcal{A}\to \mathcal{U}$ of the type \eqref{eq:neuralop} such that 
each component depends on a finite dimensional Bregman neural network and
\begin{equation*}
\sup_{u \in K} \norm{\mathcal{G}(u)-\mathcal{N}_\theta(u)}_{\mathcal{U}} \leq \varepsilon.
\end{equation*}
Here $\theta \in \R^p$ collects all the (finite number of) parameters of the 
finite dimensional Bregman neural networks defining the components in \eqref{eq:neuralop}.
\end{theorem}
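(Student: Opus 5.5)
The plan is to reduce the statement to two ingredients: the universal approximation theorem for finite-dimensional Bregman neural networks of \citet{2022_Frecon_J_p-icml_bregmanet}, and the standard encoder--approximator--decoder construction used in \citet{2021_Kovachki_N_j-jmlr_uaebfno} for FNOs and general neural operators. First, since $K\subset\mathcal{A}$ is compact and $\mathcal{A},\mathcal{U}$ are separable Banach spaces with Schauder-type bases (or at least the approximation property), I pick finite-rank continuous linear encoders and decoders. Concretely, I take $E_J\colon\mathcal{A}\to\R^{J}$ and $R_J\colon\R^J\to\mathcal{A}$ with $\sup_{u\in K}\|R_JE_J u-u\|_{\mathcal{A}}$ small; for instance, use projections onto piecewise polynomial or Fourier-type spaces for $\mathcal{C}$, $L^p$ and $W^{m,p}$. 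Similarly, I take $D_{J'}\colon\R^{J'}\to\mathcal{U}$ and $F_{J'}\colon\mathcal{U}\to\R^{J'}$ with $\sup_{w\in\mathcal{G}(K)}\|D_{J'}F_{J'}w-w\|_{\mathcal{U}}$ small, which is possible because $\mathcal{G}(K)$ is compact. By continuity of $\mathcal{G}$, uniform continuity on a compact neighborhood, and an $\varepsilon/3$ argument, it then suffices to approximate the continuous map $f=F_{J'}\circ\mathcal{G}\circ R_J\colon E_J(K)\to\R^{J'}$ uniformly on the compact set $E_J(K)\subset\R^J$.

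Second, I apply the finite-dimensional universal approximation result for Bregman networks with sigmoidal homeomorphic activation $\sigma$, with layers $x\mapsto\sigma(\sigma^{-1}(x)+Wx+b)$. This gives a network $\mathcal{B}_\theta$ that is $\varepsilon/3$-close to $f$ on $E_J(K)$. If that result is not available in the exact form needed, I would prove it directly. The key observation is that with zero weights a layer is the identity (Remark~\ref{rk:identity}), so width can be padded freely. Moreover, choosing $W=\mathrm{diag}(\lambda)$ with large $\lambda$ on selected coordinates lets a layer approximately realize $\sigma(a^\top x+c)$ in a fresh coordinate, up to a controllable $\sigma^{-1}$ perturbation that is bounded on compact subsets of $I^n$. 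Summing such ridge units in a final affine readout then reproduces a Cybenko-type approximant.

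Third, I embed everything into the form \eqref{eq:neuralop}. The lifting $\mathcal{P}$ must be local and must output functions in $\dom\mathcal{L}_1$, so following Remark~\ref{rk:bregman_ii} I would set $\mathcal{P}a(x)=\sigma(P a(x))$. The encoder coefficients $E_Ja$ are linear functionals of $a$, which can be realized by the nonlocal operator $\mathcal{K}_1^{\rm ac}$ with kernel $k_1(x,y)$ taking the integral-kernel form of the functionals; for $\mathcal{C}$ and $W^{m,p}$ I first approximate them by $L^1$ kernels, using density and compactness of $K$. The resulting hidden functions are constant in $x$ and equal to the vector entering $\mathcal{B}_\theta$. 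The subsequent layers use $\mathcal{K}_t^{\rm ac}=0$ and constant matrices $K_t,b_t$, so they act pointwise as the layers of $\mathcal{B}_\theta$. A final layer, or the projection $\mathcal{Q}$, applies the decoder $D_{J'}$ by multiplying coefficients with the basis functions. This can be done by taking $\mathcal{K}_T^{\rm ac}$ with kernel $k_T(x,y)=\psi_j(x)/|D|$ acting on constant functions, followed by a local affine $\mathcal{Q}$ that undoes the final $\sigma$ on a compact range. Here $\sigma^{-1}$ is continuous on compacts of $I$, so it can be approximated by another Bregman network.

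The main obstacle I expect is the interaction between the nonlinearity $\sigma^{-1}$ and the requirement that every intermediate function stays in the open range $I$ with margins. Because $\sigma^{-1}$ blows up at the endpoints of $I$, uniform error propagation requires confining all activations to a compact subset of $I$ along the compact family. I would handle this by continuity and compactness: each layer is continuous on $\{v: v(x)\in C\}$ for compact $C\subset I$, and errors are propagated backwards through the finitely many layers with uniform moduli of continuity. The secondary technical point is realizing the encoder functionals for $\mathcal{C}(\overline{D})$ and $W^{m,p}$ via absolutely continuous kernels, which I would resolve by mollified point evaluations, using equicontinuity of $K$.
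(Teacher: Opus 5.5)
Your skeleton (encoder--approximator--decoder reduction, universality of finite-dimensional Bregman networks, lifting to constant functions) is the paper's, but your embedding step fails as written. With the local lifting $\mathcal{P}a=\sigma(Pa(\cdot))$, the first nonlocal operator $\mathcal{K}_1^{\rm ac}$ acts on $v_1=\sigma(Pa)$, not on $a$. It therefore produces $\int_D k_1(x,y)\,\sigma(Pa(y))\,dy$, which is a \emph{nonlinear} functional of $a$. So it cannot realize the linear coefficients $E_Ja$, and your claim that the hidden state is the constant function carrying the vector fed to $\mathcal{B}_\theta$ is false. The layer's extra term does not rescue this: $\sigma^{-1}(v_1)=Pa(x)$ enters only pointwise, never integrated, and it generally leaves the informative coordinates $x$-dependent rather than constant. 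Your whole reduction to approximating $f$ on $E_J(K)$ rests on this step, so the argument does not close.

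The paper avoids the problem by not insisting on locality of the first and last components. It applies the linear integral operator $\mathcal{R}^{\mathsf{BN}}_n a=(R^{\mathsf{BN}}_n a)\mathds{1}_D$ directly to the input. Its kernel is $h(y)$, a shallow Bregman network approximating the kernels of Fact~\ref{fact1}. Only afterwards does it apply $\sigma$, so the first hidden state is the constant $\sigma(R^{\mathsf{BN}}_n a)\mathds{1}_D\in\dom\partial\Phi_1$. The Bregman layers then use the averaging operators $\mathcal{M}_tv=M_t\bar v$ and $\mathcal{K}_tv=K_t\bar v$, with $M_t$ right stochastic. Hence $\mathcal{N}^{\mathsf{BN}}(w\mathds{1}_D)=f^{\mathsf{BN}}(w)\mathds{1}_D$ holds exactly (Lemma~\ref{lem:liftphi}), and decoding is done by the nonlocal $\mathcal{S}^{\mathsf{BN}}$.

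If you want to keep a genuinely local $\mathcal{P}$, you need an extra idea. Two options:
\begin{itemize}
\item Take $P$ injective. Then $a\mapsto\sigma(Pa)$ is a continuous injection into $L^1$, hence a homeomorphism of $K$ onto its image, and you can run the encoder--decoder lemma on a continuous (Dugundji) extension of $\mathcal{G}\circ(\sigma(P\,\cdot))^{-1}$.
\item Use several lifted channels $\sigma(p_ra+q_r)$ whose linear combination approximates a clipped identity. You then need a tail estimate from the uniform $p$-integrability of $K$, since elements of $L^p$ or $W^{m,p}$ need not be bounded.
\end{itemize}

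Two smaller remarks. First, your error-propagation worry disappears if you do as the paper does: approximate $f$ itself uniformly on the compact set $K_1=R(K)\cup\bigcup_n R^{\mathsf{BN}}_n(K)$ and use the uniform continuity of $f$ there. Second, your finite-dimensional step is a valid and simpler alternative to the paper's adaptation of Cybenko's duality argument, but for a different reason than you state. A coordinate frozen at $\sigma(0)$ has $\sigma^{-1}$ equal to $0$, so a Bregman layer computes the ordinary neuron $\sigma(a^\top x+c)$ exactly in that coordinate. No large $\lambda$ is needed, and the choice $W=\mathrm{diag}(\lambda)$ does not produce $a^\top x$ at all.
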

\begin{proof}
    The proof is reported in Appendix~\ref{sec:approx} and partly relies on also proving this same result for Bregman neural networks in finite dimensional spaces.
\end{proof}

\section{Numerical Experiments} \label{sec:exp}

The primary objective of our numerical experiments is to evaluate and assess the added benefits of the Bregman variant of the simplest neural operator, namely Fourier Neural Operator (FNO), and its improvements, as they often serve as the building blocks for more sophisticated models.

\begin{figure}[t]
    \centering
    \includegraphics[width=\linewidth,clip=true, trim=0cm 0cm 0cm .1cm 0cm]{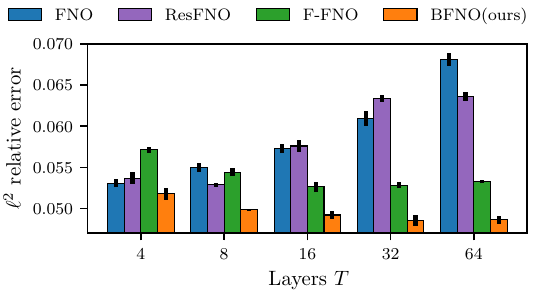}
    \caption{$\ell^2$ relative errors across different number of layers for 2D Navier Stokes ($\nu = 10^{-4}$).}    
    \label{fig:scalingcomp}
\end{figure}

\subsection{Experimental Setting}

\noindent\textbf{Datasets.} We have selected a range of benchmark datasets resulting from the resolution of PDEs used both in the original FNO paper~\citep{2021_Li_Z_p-iclr_fno} and in the PDEBench suite~\citep{takamoto2022pdebench}, which is the top leading repository providing datasets commonly studied in physics-based machine learning. They represent various dynamics and complexities pertinent to physical modeling tasks. Hereafter, we consider initial value problems where the goal is to learn the mapping between the initial condition $a_i$ and the solution at some future time $u_i$ from $n=10^4$ pairs $\{a_i,u_i\}_{i=1}^n$. The only exception is the 2D Darcy problem (marked with * in latter results), where the goal is to predict the steady-state solution from the viscosity function over the domain. A description of the experimental settings and the learning procedure is provided in Appendix~\ref{app:exp_setting}.

\noindent\textbf{Models.} 
We consider four models: the standard FNO~\citep{2021_Li_Z_p-iclr_fno}, our Bregman variant (BFNO) described in Section~\ref{sec:no_hilbert}, the Factorized FNO (F-FNO)~\citep{tran2023factorized}, and a ResNet-inspired variant (ResFNO) that isolates the impact of residual connections by adopting the update \( v \mapsto v + \sigma (\mathcal{K}_t(v) + b_t) \), which should not be confused with \citet{chen2021residual}. Details can be found in Appendix~\ref{app:baselines}. Additional models such as WNO~\citep{2023_Tripura_T_j-cmame_wno} and its Bregman variant are studied in the appendix.\\
The lifting and projection layers, namely $\mathcal{P}$ and $\mathcal{Q}$ in \eqref{eq:neuralop}, are convolutional layers with kernel size $1$ and width $128$. Note that, for BFNO, we add an activation operator after $\mathcal{P}$ to ensure that the conditions of Remark~\ref{rk:bregman_ii} are met. Following the code of \citet{2021_Li_Z_p-iclr_fno}, we use the ReLU activation for FNO while, for BFNO, we resort to an invertible approximation: SoftPlus with parameter $\beta=10^3$ to make it almost indistinguishible from ReLU. Hereafter, we consider models made of $T\in\{4, 8, 16, 32, 64\}$ Fourier layers with a width $64$ (resp. $32$) and $16$ (resp. $12$) maximum number of Fourier modes for 1D (resp. 2D) problems. Note that two ablation studies in Appendices~\ref{app:relu} and \ref{app:batchnorm} reveal marginal improvements from adding batch normalization layers or replacing SoftPlus with ReLU.

\subsection{Results and Analysis}

\begin{figure}[t]
    \centering
    \includegraphics[width=\linewidth]{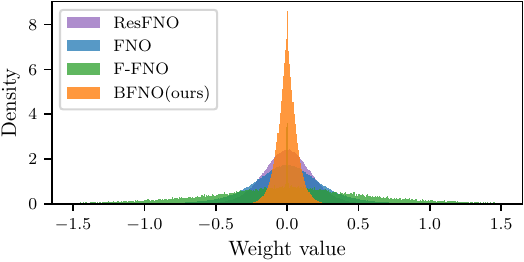}
    \caption{Models' weight density distribution.}    
    \label{fig:weight_distrib}
\end{figure}

\noindent\textbf{Impact of the number of layers $T$.} We investigate how increasing the number of operator layers $T$ affects performance using the 2D Navier-Stokes dataset with viscosity $\nu = 10^{-4}$, chosen for its complexity, that typically favor deeper models to capture fine-grained structures and long-range dependencies. Results in Figure~\ref{fig:scalingcomp} show that BFNO systematically achieves lower prediction error regardless of $T$. In contrast, FNO (resp. ResFNO) degrades from $T=8$ (resp. $T=16$) onward, while F-FNO (resp. BFNO) improves with depth until plateauing at $T=16$ (resp. $T=64$). The earlier saturation of F-FNO suggests that while residual connections help with depth, BFNO scales more effectively. Similar conclusions hold for other datasets and our Bregman variant of WNO, as illustrated in Appendix~\ref{app:wno}. Nevertheless for simpler tasks, when smaller models are sufficient, the benefits of depth become less pronounced as expected.\\
We believe that the added term in BFNO stabilizes learning by allowing its layers to reduce to identity when all weights are zero (see Remark~\ref{rk:identity}). However, the same argument could be used for residual architectures (ResFNO and F-FNO). To gain further insights, we next examine the weight distributions.

\noindent\textbf{Comparison of weight density distribution.} Figure~\ref{fig:weight_distrib} illustrates the learned weight distributions of the operator layers for the best-performing models considered above. ResFNO, F-FNO, and FNO exhibit Gaussian-like distributions, with F-FNO showing an additional peak at 0. In contrast, BFNO has a sharply peaked distribution around 0, resembling a Laplace distribution. While BFNO, ResFNO, and F-FNO layers reduce to the identity when all weights are zero, only BFNO exhibits a distinct   clustering of weights around zero. This suggests implicit regularization, enhancing generalization and stability by preventing large deviations from the identity mapping. Additionally, the weight distribution in BFNO  alleviates issues such as vanishing or exploding gradients, which are common in deeper architectures.

\begin{figure}[t]
    \centering
    \includegraphics[width=\linewidth]{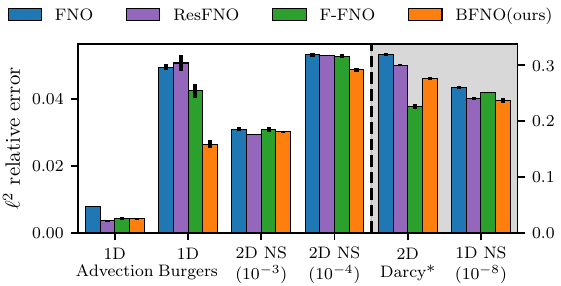}
    \caption{Comparison of $\ell^2$ test relative errors across different models, with cross-validated number of layers in $\{4,8,16,32\}$ and learning rates, on PDE benchmarks.}\label{fig:fullcomp}
\end{figure}

\noindent\textbf{Extensive comparison on multiple datasets.} We now evaluate the models on diverse datasets of varying complexity. For each dataset, splitting realization, and model, we cross-validate the optimal number of layers $T \in \{4, 8, 16, 32\}$ and report the average test error over multiple realizations in Figure~\ref{fig:fullcomp}. Results show that BFNO consistently achieves superior or comparable performance, with notable gains on moderately to highly complex datasets such as 1D Burgers and 2D Navier-Stokes ($\nu = 10^{-4}$). The only exception is that of 2D Darcy, where the task differs: instead of learning a solution map from initial conditions, the goal is to map the spatial viscosity function to the steady-state solution. In this case, it seems that the additional MLP layers added at the end of the operator layers of F-FNO help to better capture the complex dependencies of the viscosity-to-solution mapping. To complement the analysis, following \citet{takamoto2022pdebench}, we include a comparison between BFNO and FNO over several metrics measuring the relative errors in low, mid and high frequency bands in Appendix~\ref{app:detailed_freq}. BFNO consistently outperforms FNO in the low and mid-frequency bands, indicating improved reconstruction of dominant modes. However, its performance in the high-frequency range varies: for simpler datasets, BFNO achieves substantial error reductions, whereas for more complex cases, the improvements are marginal. Prediction examples are reported in Appendix~\ref{app:prediction_example}.

\noindent\textbf{Additional insights.} We highlight that BFNO can also be viewed from the ODE point of view through a change of variables. More precisely, let us consider updates of the form $v_{t+1} = \sigma( \sigma^{-1}(v_{t}) + \mathcal{K}_t v_{t})$. Then, for $z_{t} = \sigma^{-1}(v_{t})$, it follows that $z_{t+1} = z_{t} + \mathcal{K}_t \sigma (z_{t})$ which can be seen as a discretization of $\frac{\mathrm{d}z(t)}{\mathrm{d}t} = \mathcal{K}(t)\sigma(z(t))$. In contrast, a residual-based architecture of the form $v_{t+1} = v_{t}+\sigma(\mathcal{K}_t v_{t})$, such as ResFNO and, to some extent also F-FNO, would lead to the following ODE $\frac{\mathrm{d}v(t)}{\mathrm{d}t} = \sigma(\mathcal{K}(t)v(t))$ on $v$ itself. The fact that, in BFNO, the linear operator is placed outside the activation function can be seen as a different way of mitigating vanishing gradients compared to residual architectures like ResFNO and F-FNO.

\section{Conclusion}

In summary, our contributions are twofold: we have provided a new theoretical framework that broadens the understanding of neural operators through the lens of a Bregman regularized optimization problem, and we have introduced Bregman neural operators  that achieve enhanced performance as their depth increases. As part of our theoretical advancements, we have also established universal approximation results for Bregman neural architectures with sigmoidal-type activation functions. However, it must be acknowledged that a gap exists between this result and common practices, which predominantly rely on ReLU-like activations, as in our work, opening the door to new theoretical developments. Beyond the unifying aspect of our framework and its ability to design novel neural architectures, our setting also paves the way to use the rich body of literature on monotone operators to study neural operators. In the context of neural networks, an example of fruitful application of the latter is given in \citet{Combettes2020} where the authors provide interesting asymptotic properties on the networks  (as the number of layers tends to infinity). One can also consider the work in \citet{combettes2020lipschitz} where the authors yield quantitative insights into the stability properties of neural networks. As for our setting, we can guess that such results might be extended to Bregman neural networks/operators by leveraging the notion of so called D-firm operators studied in \citet{bauschke2003bregman}, meaning operators that are firmly nonexpansive with respect to a Bregman divergence.


\bibliography{bib/biblio, bib/neural-operators}

@InProceedings{2022_Frecon_J_p-icml_bregmanet,
  title = 	 {{B}regman Neural Networks},
  author =       {Frecon, Jordan and Gasso, Gilles and Pontil, Massimiliano and Salzo, Saverio},
  booktitle = 	 {Proceedings of the 39th International Conference on Machine Learning},
  pages = 	 {6779--6792},
  year = 	 {2022},
  editor = 	 {Chaudhuri, Kamalika and Jegelka, Stefanie and Song, Le and Szepesvari, Csaba and Niu, Gang and Sabato, Sivan},
  volume = 	 {162},
  series = 	 {Proceedings of Machine Learning Research},
  month = 	 {17--23 Jul},
  publisher =    {PMLR}
}

@Article{Combettes2020,
  author    = {Combettes, Patrick L. and Pesquet, Jean-Christophe},
  journal   = {Set-Valued and Variational Analysis},
  title     = {Deep Neural Network Structures Solving Variational Inequalities},
  year      = {2020},
  issn      = {1877-0541},
  month     = feb,
  number    = {3},
  pages     = {491--518},
  volume    = {28},
}

@inproceedings{takamoto2022pdebench,
  author       = {Makoto Takamoto and
                  Timothy Praditia and
                  Raphael Leiteritz and
                  Daniel MacKinlay and
                  Francesco Alesiani and
                  Dirk Pfl{\"{u}}ger and
                  Mathias Niepert},
  OPTeditor       = {Sanmi Koyejo and
                  S. Mohamed and
                  A. Agarwal and
                  Danielle Belgrave and
                  K. Cho and
                  A. Oh},
  title        = {PDEBench: An Extensive Benchmark for Scientific Machine Learning},
  booktitle    = {Advances in Neural Information Processing Systems (NeurIPS)},
  year         = {2022},
}

@article{beck2003mirror,
  title={Mirror descent and nonlinear projected subgradient methods for convex optimization},
  author={Beck, Amir and Teboulle, Marc},
  journal={Operations Research Letters},
  volume={31},
  number={3},
  pages={167--175},
  year={2003},
  publisher={Elsevier}
}

@article{nemirovskij1983problem,
  title={Problem complexity and method efficiency in optimization},
  author={Nemirovskij, Arkadij Semenovi{\v{c}} and Yudin, David Borisovich},
  year={1983},
  publisher={Wiley-Interscience}
}

@inproceedings{NEURIPS2023_df543023,
 author = {Serrano, Louis and Le Boudec, Lise and Kassa\"{\i} Koupa\"{\i}, Armand and Wang, Thomas X and Yin, Yuan and Vittaut, Jean-No\"{e}l and Gallinari, Patrick},
 booktitle = {Advances in Neural Information Processing Systems},
 pages = {70581--70611}, title = {Operator Learning with Neural Fields: Tackling PDEs on General Geometries},
 volume = {36},
 year = {2023}
}

@inproceedings{NEURIPS2023_f3c1951b,
 author = {Raonic, Bogdan and Molinaro, Roberto and De Ryck, Tim and Rohner, Tobias and Bartolucci, Francesca and Alaifari, Rima and Mishra, Siddhartha and de B\'{e}zenac, Emmanuel},
 booktitle = {Advances in Neural Information Processing Systems},
 editor = {A. Oh and T. Naumann and A. Globerson and K. Saenko and M. Hardt and S. Levine},
 pages = {77187--77200},
 publisher = {Curran Associates, Inc.},
 title = {Convolutional Neural Operators for robust and accurate learning of PDEs},
 volume = {36},
 year = {2023}
}

@article{combettes2020lipschitz,
  title={{L}ipschitz certificates for layered network structures driven by averaged activation operators},
  author={Combettes, Patrick L and Pesquet, Jean-Christophe},
  journal={SIAM Journal on Mathematics of Data Science},
  volume={2},
  number={2},
  pages={529--557},
  year={2020},
  publisher={SIAM}
}

@article{bregman1967relaxation,
  title={The relaxation method of finding the common point of convex sets and its application to the solution of problems in convex programming},
  author={Bregman, Lev M},
  journal={USSR computational mathematics and mathematical physics},
  volume={7},
  number={3},
  pages={200--217},
  year={1967},
  publisher={Elsevier}
}

@book{rockafellar,
author = {Rockafellar, T.},
title = {Convex Analysis},
year = {1970},
publisher = {Princeton University Press},
address = {Princeton, NJ}
}

@book{bookbauschke,
author = {Bauschke, Heinz H. and Combettes, Patrick L.},
title = {Convex Analysis and Monotone Operator Theory in {H}ilbert Spaces},
year = {2017},
isbn = {978-3-319-48310-8},
publisher = {Springer Publishing Company, Incorporated},
edition = {2nd}
}

@article{bauschke2003bregman,
  title={{B}regman monotone optimization algorithms},
  author={Bauschke, Heinz H and Borwein, Jonathan M and Combettes, Patrick L},
  journal={SIAM Journal on control and optimization},
  volume={42},
  number={2},
  pages={596--636},
  year={2003},
  publisher={SIAM}
}

@inproceedings{tran2023factorized,
title={Factorized Fourier Neural Operators},
author={Alasdair Tran and Alexander Mathews and Lexing Xie and Cheng Soon Ong},
booktitle={The Eleventh International Conference on Learning Representations},
year={2023},
}

@article{chen2021residual,
  title={Residual fourier neural operator for thermochemical curing of composites},
  author={Chen, Gengxiang and Li, Yingguang and Meng, Qinglu and Zhou, Jing and Hao, Xiaozhong and others},
  journal={arXiv preprint arXiv:2111.10262},
  year={2021}
}

@Article{2021_Kovachki_N_j-jmlr_uaebfno,
  author   = {Nikola Kovachki and Samuel Lanthaler and Siddhartha Mishra},
  journal  = {Journal of Machine Learning Research},
  title    = {On Universal Approximation and Error Bounds for {F}ourier Neural Operators},
  year     = {2021},
  number   = {290},
  pages    = {1--76},
  volume   = {22},
}

@InProceedings{2021_Li_Z_p-iclr_fno,
  author    = {Zongyi Li and Nikola Borislavov Kovachki and Kamyar Azizzadenesheli and Burigede liu and Kaushik Bhattacharya and Andrew Stuart and Anima Anandkumar},
  booktitle = {International Conference on Learning Representations},
  title     = {Fourier Neural Operator for Parametric Partial Differential Equations},
  year      = {2021},
}

@Article{2022_Kissas_G_j-jmlr_loca,
  author   = {Kissas, Georgios and Seidman, Jacob H and Guilhoto, Leonardo Ferreira and Preciado, Victor M and Pappas, George J and Perdikaris, Paris},
  journal  = {Journal of Machine Learning Research},
  title    = {Learning operators with coupled attention},
  year     = {2022},
  number   = {215},
  pages    = {1--63},
  volume   = {23},
}

@Article{2021_Lu_L_j-nmi_deeponet,
  author    = {Lu Lu and Pengzhan Jin and Guofei Pang and Zhongqiang Zhang and George Em Karniadakis},
  journal   = {Nature Machine Intelligence},
  title     = {Learning nonlinear operators via {DeepONet} based on the universal approximation theorem of operators},
  year      = {2021},
  month     = {mar},
  number    = {3},
  pages     = {218--229},
  volume    = {3},
  comment   = {Submitted in 2019 in ArXiv as "Learning nonlinear operators for identifying differential equations based on the universal approximation theorem of operators"},
  publisher = {Springer Science and Business Media {LLC}},
}

@InProceedings{2020_Anandkumar_A_w-iclr_gknpde,
  author    = {Anandkumar, Anima and Azizzadenesheli, Kamyar and Bhattacharya, Kaushik and Kovachki, Nikola and Li, Zongyi and Liu, Burigede and Stuart, Andrew},
  booktitle = {ICLR 2020 Workshop on Integration of Deep Neural Models and Differential Equations},
  title     = {Neural operator: Graph kernel network for partial differential equations},
  year      = {2020},
}

@Article{2023_Tripura_T_j-cmame_wno,
  author    = {Tapas Tripura and Souvik Chakraborty},
  journal   = {Computer Methods in Applied Mechanics and Engineering},
  title     = {Wavelet Neural Operator for solving parametric partial differential equations in computational mechanics problems},
  year      = {2023},
  month     = {feb},
  pages     = {115783},
  volume    = {404},
  publisher = {Elsevier {BV}},
}

@Article{2021_Wang_S_j-sa_pideeponet,
  author    = {Sifan Wang and Hanwen Wang and Paris Perdikaris},
  journal   = {Science Advances},
  title     = {Learning the solution operator of parametric partial differential equations with physics-informed {DeepONets}},
  year      = {2021},
  month     = {oct},
  number    = {40},
  volume    = {7},
  publisher = {American Association for the Advancement of Science ({AAAS})},
}

@Article{2021_Li_Z_arxiv_pino,
  author   = {Li, Zongyi and Zheng, Hongkai and Kovachki, Nikola and Jin, David and Chen, Haoxuan and Liu, Burigede and Azizzadenesheli, Kamyar and Anandkumar, Anima},
  journal  = {arXiv preprint arXiv:2111.03794},
  title    = {Physics-informed neural operator for learning partial differential equations},
  year     = {2021},
}

@Article{2022_Goswami_S_j-cmame_vdeeponet,
  author    = {Somdatta Goswami and Minglang Yin and Yue Yu and George Em Karniadakis},
  journal   = {Computer Methods in Applied Mechanics and Engineering},
  title     = {A physics-informed variational {DeepONet} for predicting crack path in quasi-brittle materials},
  year      = {2022},
  month     = {mar},
  pages     = {114587},
  volume    = {391},
  publisher = {Elsevier {BV}},
}

@Article{2023_Kovachki_N_j-jmlr_nolmbfsa,
  author  = {Nikola Kovachki and Zongyi Li and Burigede Liu and Kamyar Azizzadenesheli and Kaushik Bhattacharya and Andrew Stuart and Anima Anandkumar},
  journal = {Journal of Machine Learning Research},
  title   = {Neural Operator: Learning Maps Between Function Spaces With Applications to PDEs},
  year    = {2023},
  number  = {89},
  pages   = {1--97},
  volume  = {24},
}

@Article{2023_Wang_S_j-cp_ltipideeponet,
  author    = {Sifan Wang and Paris Perdikaris},
  journal   = {Journal of Computational Physics},
  title     = {Long-time integration of parametric evolution equations with physics-informed {DeepONets}},
  year      = {2023},
  month     = {feb},
  pages     = {111855},
  volume    = {475},
  publisher = {Elsevier {BV}},
}

@Article{2022_Wang_S_j-sc_modifed-deeponet,
  author    = {Sifan Wang and Hanwen Wang and Paris Perdikaris},
  journal   = {Journal of Scientific Computing},
  title     = {Improved Architectures and Training Algorithms for Deep Operator Networks},
  year      = {2022},
  month     = {jun},
  number    = {2},
  volume    = {92},
  publisher = {Springer Science and Business Media {LLC}},
}

@InProceedings{2023_Hao_Z_p-icml_gnot,
  author    = {Hao, Zhongkai and Wang, Zhengyi and Su, Hang and Ying, Chengyang and Dong, Yinpeng and Liu, Songming and Cheng, Ze and Song, Jian and Zhu, Jun},
  booktitle = {International Conference on Machine Learning},
  title     = {{GNOT}: A General Neural Operator Transformer for Operator Learning},
  year      = {2023},
  editor    = {Krause, Andreas and Brunskill, Emma and Cho, Kyunghyun and Engelhardt, Barbara and Sabato, Sivan and Scarlett, Jonathan},
  month     = {23--29 Jul},
  pages     = {12556--12569},
  publisher = {PMLR},
  series    = {Proceedings of Machine Learning Research},
  volume    = {202},
}

@Article{2023_Rahmam_M_j-tmlr_uno,
  author   = {Md Ashiqur Rahman and Zachary E Ross and Kamyar Azizzadenesheli},
  journal  = {Transactions on Machine Learning Research},
  title    = {U-{NO}: U-shaped Neural Operators},
  year     = {2023},
  issn     = {2835-8856},
}

@InProceedings{2023_Molinaro_R_p-icml_nio,
  author    = {Molinaro, Roberto and Yang, Yunan and Engquist, Bj\"{o}rn and Mishra, Siddhartha},
  booktitle = {International Conference on Machine Learning},
  title     = {Neural Inverse Operators for Solving PDE Inverse Problems},
  year      = {2023},
  articleno = {1046},
  location  = {Honolulu, Hawaii, USA},
  numpages  = {35},
}

@InProceedings{2023_Helwig-J_p-icml_gfno,
  author    = {Helwig, Jacob and Zhang, Xuan and Fu, Cong and Kurtin, Jerry and Wojtowytsch, Stephan and Ji, Shuiwang},
  booktitle = {International Conference on Machine Learning},
  title     = {Group Equivariant Fourier Neural Operators for Partial Differential Equations},
  year      = {2023},
  publisher = {JMLR.org},
  series    = {ICML'23},
  articleno = {525},
  location  = {Honolulu, Hawaii, USA},
  numpages  = {24},
}

@Article{2017_Nguyen,
  author    = {Nguyen, Q.V.},
  journal   = {Vietnam J.~Math,},
  title     = {Forward-backward splitting with Bregman distances},
  year      = {2017},
  pages     = {519--539},
  volume    = {45},
}

@Article{2021_Bui,
  author    = {B\`ui, M.~N. and Combettes,  P.~L.},
  journal   = {Set-Valued and Variational Analysis},
  title     = {Bregman Forward.Backward Operator Splitting},
  year      = {2021},
  pages     = {583--603},
  volume    = {29}
}

@book{Brezis,
  title     = {Functional Analysis, Sobolev Spaces, and Partial Differential Equations},
  author    = {Brezis, H.},
  year      = {2011},
  publisher = {Springer},
  address   = {New York}
}

@Article{Cybenko,
  author    = {Cybenko,  G.},
  journal   = {Mathematics of Control, Signals, and Systems},
  title     = {Approximation by superposition of sigmoidal function},
  year      = {1989},
  pages     = {303--314},
  volume    = {2}
}

@book{2002_Zalinescu,
  title     = {Convex Analysis in General Vector Spaces},
  author    = {Zalinescu, C.},
  year      = {2002},
  publisher = {World Scientific},
  address   = {Singapore}
}

@InProceedings{2020_Li_Z_p-neurips_mgkn,
  author    = {Li, Zongyi and Kovachki, Nikola and Azizzadenesheli, Kamyar and Liu, Burigede and Stuart, Andrew and Bhattacharya, Kaushik and Anandkumar, Anima},
  booktitle = {Advances in Neural Information Processing Systems},
  title     = {Multipole Graph Neural Operator for Parametric Partial Differential Equations},
  year      = {2020},
  editor    = {H. Larochelle and M. Ranzato and R. Hadsell and M.F. Balcan and H. Lin},
  pages     = {6755--6766},
  publisher = {Curran Associates, Inc.},
  volume    = {33},
}
\bibliographystyle{icml2025}

\newpage
\appendix
\onecolumn

\section{Additional Technical Facts}
\label{app:math_supp}

We begin by introducing the necessary notations used throughout the paper.

\noindent\textbf{Notations.} 
Let $\V$ and $\V^*$ be two Banach spaces put in duality via the pairing $\pair{\cdot,\cdot} \colon \V\times \V^* \to \R$. 
If $\Phi\colon \V\to \left]-\infty,+\infty\right]$,
we denote by $\dom \Phi = \{v \in \V \,\vert\, \Phi(v)<+\infty\}$ its \emph{effective domain}.
For every proper convex function $\Phi\colon \V\to \left]-\infty,+\infty\right]$, we set its subdifferential
\begin{equation*}
    \partial \Phi(v) = \{v^* \in \V^* \,\vert\, \text{for all } u \in \V,\ \Phi(u) \geq \Phi(v) + \pair{u-v,v^*}\},
\end{equation*}
if $v \in \dom \Phi$, and $\partial \Phi(v)=\varnothing$, otherwise. 
We set $\dom \partial \Phi = \{v \in \dom \Phi \,\vert\, \partial\Phi(v)\neq \varnothing\}$
and the \emph{range} $\range \partial \Phi = \{v^* \in \V^* \,\vert\, \exists v \in \V\ \text{s.t.}\ v^* \in \partial\Phi(v)\}$.
When $\partial \Phi(v)$ is a singleton, we denote by $\tilde{\nabla}\Phi$ its unique element. If $\Phi\colon \V\to \left]-\infty,+\infty\right]$, its \emph{Fenchel conjugate} is the function 
$\Phi^*\colon \V^*\to \left]-\infty,+\infty\right]$ such that $\Phi^*(v^*) = \sup_{v \in \V} \pair{v,v^*} - \Phi(v)$.
We denote by $\Gamma_0(\V)$ the set of proper convex and lower-semicontinuous functions on $\V$. The Fenchel-Moreau theorem ensures that $\Phi \in \Gamma_0(\V)\ \Rightarrow\ \Phi^* \in \Gamma_0(\V^*)$.
We denote by $\langle\, \cdot\,,\, \cdot\, \rangle$ and $\abs{\,\cdot\,}$ the Euclidean scalar product and norm in $\R^n$. 
If $D\subset \R^d$ is a nonempty bounded Borel set and $p \in [1,+\infty]$, we denote by $L^p(D, \R^n)$ the Lebesgue space of $p$-integrable functions (essentially bounded functions, if $p=+\infty$) from $D$ to $\R^n$.

\subsection{Considerations for Legendre Function and Bregman Proximal Operators}

At the core of our framework, lies the connection between activation operators and Bregman proximity operators whose definition involves the Bregman divergence itself defined from a Legendre function $\Phi\in\Gamma_0(\mathcal{V})$. The latter acts on Lebesgue function space $\mathcal{V}=L^P(D,\mathbb{R}^n)$ and can be built from an elementary legendre function $\phi\in\Gamma_0(\mathbb{R}^n)$ through the convex integral functional described in Fact~\ref{fact:integralfunctionals}. We provide below several considerations.

\begin{remark}
    One can prove that $\phi$ is Legendre if and only if $\phi^*$ is Legendre. Moreover, if $\phi$ is Legendre, then $\phi$ and $\phi^*$ are differentiable on $\inte(\dom \phi)$ and $\inte(\dom \phi^*)$ respectively and 
    \begin{equation*}
        \nabla \phi\colon \inte(\dom \phi)\to \inte(\dom \phi^*)\quad\text{and}\quad
        \nabla \phi^*\colon \inte(\dom \phi^*)\to \inte(\dom \phi)
    \end{equation*}
    are bijective and inverse of each other.
\end{remark}

\begin{fact}[Convex integral functionals on Lebesgue spaces based on Legendre function]
\label{fact:integralfunctionals_extended}
Let $D\subset \R^d$ be an open-bounded set.
Let $p,q \in [1,+\infty]$ be conjugate exponents, that is such that $1/p+1/q = 1$, and set $\V:=L^p(D,\R^n)$ and $\mathcal{V^*} = L^q(D,\R^n)$. The spaces
$\V$ and $\V^*$ can put in duality via the pairing $\V\times\V^*\to \R,\quad(v,u)\mapsto \pair{v,u}=\int_{D} \pair{v(x),u(x)} d x$. 
Let $\phi \in \Gamma_0(\R^n)$ be a Legendre function and let $\Phi\colon 
\V\to \left]-\infty,+\infty\right]$ be such that
\begin{equation}
\Phi(v) = \int_D \phi(v(x)) d x.
\end{equation}
Then $\Phi \in \Gamma_0(\V)$, 
$\dom \partial \Phi = \{v \in \V \,\vert\, \text{for a.e.~$x \in D$},  v(x) \in \inte(\dom \phi)\ \text{and}\ (\nabla\phi)\circ v \in \V^*\}$, 
$\partial\Phi$ is single valued on $\dom \partial \Phi$,
and, for every $v \in \dom \partial \Phi$, $\partial \Phi(v) = \{\nabla\phi\circ v\}$. 
The unique element $\nabla\phi\circ v$ of $\partial \Phi(v)$ will be denoted by $\tilde{\nabla} \Phi(v)$, suggesting it will serve as a kind of gradient of $\Phi$ at $v$\footnote{Note that in general the domain of the function $\Phi$ has empty interior, so G\^ateaux and/or Frechet differential cannot be properly defined.}.
\end{fact}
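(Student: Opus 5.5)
The statement to prove is Fact~\ref{fact:integralfunctionals_extended}: $\Phi\in\Gamma_0(\V)$, the description of $\dom\partial\Phi$, and single-valuedness of $\partial\Phi$ with $\partial\Phi(v)=\{\nabla\phi\circ v\}$. The plan is to go through Rockafellar's theory of convex integral functionals on decomposable spaces. The key tool is the interchange formula $\Phi^*(u)=\int_D\phi^*(u(x))\,dx$ on $\V^*$, which holds because $L^p$ and $L^q$ are decomposable and $\phi$ is a normal convex integrand; we need $\dom\Phi\neq\varnothing$ for this.

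\textbf{Well-posedness and $\Gamma_0$.} First we check that $\Phi$ is well-defined.
\begin{itemize}
\item Since $\phi\in\Gamma_0(\R^n)$, it admits an affine minorant $\phi\geq\langle a,\cdot\rangle+\beta$.
\item For $v\in L^p$ on the bounded set $D$, the function $\langle a,v\rangle+\beta$ is integrable. Hence the negative part of $\phi\circ v$ is integrable and $\Phi(v)\in\left]-\infty,+\infty\right]$.
\item Measurability holds because $\phi$ is lsc, hence Borel.
\end{itemize}
Convexity is immediate. Properness follows by taking a constant function $v\equiv z$ with $z\in\dom\phi$, which lies in $L^p$ because $D$ is bounded. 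For lower semicontinuity, take $v_k\to v$ in $L^p$ and extract a subsequence with $\Phi(v_k)\to\liminf\Phi$ that also converges a.e. Applying Fatou's lemma to $\phi(v_k(x))-\langle a,v_k(x)\rangle-\beta\geq0$, and using $L^1$ convergence of the affine part, gives $\Phi(v)\leq\liminf\Phi(v_k)$.

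\textbf{Characterizing the subdifferential.} By the Fenchel--Young equality, $u\in\partial\Phi(v)$ if and only if $\Phi(v)+\Phi^*(u)=\langle v,u\rangle$. Using the interchange formula, this reads
\[
\int_D\bigl[\phi(v(x))+\phi^*(u(x))-\langle v(x),u(x)\rangle\bigr]dx=0 .
\]
The integrand is nonnegative by pointwise Fenchel--Young, so it vanishes a.e. Thus $u(x)\in\partial\phi(v(x))$ a.e., and the converse implication is the same computation read backwards.

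Because $\phi$ is Legendre, $\partial\phi(z)\neq\varnothing$ exactly when $z\in\inte(\dom\phi)$, and in that case $\partial\phi(z)=\{\nabla\phi(z)\}$. Therefore $\partial\Phi(v)\neq\varnothing$ exactly when $v(x)\in\inte(\dom\phi)$ a.e. and $\nabla\phi\circ v\in L^q$, in which case $\partial\Phi(v)=\{\nabla\phi\circ v\}$. This gives both the domain formula and single-valuedness.

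One point needs care in this converse direction: we must know that $\Phi(v)<\infty$ and that the integrals split. If $v(x)\in\inte\dom\phi$ a.e. and $u=\nabla\phi\circ v\in L^q$, the pointwise subgradient inequality gives, for any fixed $z_0\in\dom\phi$,
\[
\phi(v(x))\leq\phi(z_0)+\langle v(x)-z_0,u(x)\rangle ,
\]
whose right-hand side is integrable. So $v\in\dom\Phi$, and integrating the pointwise subgradient inequality directly gives $u\in\partial\Phi(v)$ without any appeal to $\Phi^*$. For the forward direction we likewise avoid the conjugate formula and localize directly. Given $u\in\partial\Phi(v)$, fix $z\in\dom\phi$ and a measurable set $A$, and test with $w=z\mathbf 1_A+v\mathbf 1_{D\setminus A}\in L^p$. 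This yields
\[
\int_A\bigl[\phi(z)-\phi(v)-\langle z-v,u\rangle\bigr]\geq0
\]
for all $A$, hence the integrand is nonnegative a.e. Running $z$ over a countable dense subset of $\dom\phi$ (dense in the relative sense, which suffices by lsc and convexity along segments) shows $u(x)\in\partial\phi(v(x))$ a.e.

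\textbf{Main obstacle.} The delicate step is the measure-theoretic localization: turning the integral inequality into a pointwise one off a single null set. This needs the countable dense family and the fact that a closed proper convex function is determined on $\dom\phi$ by its values on a relatively dense subset together with limits along segments. If that becomes messy, the fallback is to cite Rockafellar's theorem on integral functionals for $\Phi^*$ and $\partial\Phi$ and use the Fenchel--Young route above.
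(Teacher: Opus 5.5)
The paper gives no proof of this statement. It is recorded as a \emph{Fact}, and it implicitly rests on the standard theory of convex integral functionals on decomposable spaces: Rockafellar's interchange formula $\Phi^*(u)=\int_D\phi^*(u(x))\,dx$, from which the pointwise description of $\partial\Phi$ follows via Fenchel--Young. That is essentially your fallback route.

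Your main argument is correct and more elementary. The affine minorant and Fatou's lemma give $\Phi\in\Gamma_0(\V)$. For the inclusion ``$\supseteq$'', your bound $\phi(v(x))\le\phi(z_0)+\pair{v(x)-z_0,u(x)}$ puts $v$ in $\dom\Phi$, and integrating the pointwise subgradient inequality finishes. For ``$\subseteq$'' you only use decomposability, through the test functions $z\mathbf{1}_A+v\mathbf{1}_{D\setminus A}$, which lie in $L^p$ for every $p\in[1,+\infty]$ because $D$ is bounded. Your route is independent of the interchange theorem, so the Fenchel--Young paragraph can simply be dropped. The literature route gives the formula for $\Phi^*$ as a by-product.

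When you write it out, make two points explicit.

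First, $u\in\partial\Phi(v)$ presupposes $v\in\dom\Phi$, so $\phi\circ v\in L^1(D)$ by the affine minorant. This fact does three things:
\begin{enumerate}
\item it justifies cancelling $\int_{D\setminus A}\phi(v)$;
\item it makes the localized integrand integrable, so taking $A$ to be the set where it is negative gives nonnegativity a.e.;
\item it gives $v(x)\in\dom\phi$ a.e., which you need before speaking of $\partial\phi(v(x))$.
\end{enumerate}

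Second, the extension step is cleaner than you fear. For a Legendre function, $\mathrm{ri}(\dom\phi)\subset\dom\partial\phi=\inte(\dom\phi)$, so the interior is nonempty and no relative topology is needed. Suppose $\phi(z)\ge\phi(v(x))+\pair{z-v(x),u(x)}$ holds on a countable dense subset of $\inte(\dom\phi)$. It extends to all of $\inte(\dom\phi)$ by continuity of $\phi$ there. It extends to $z\in\dom\phi\setminus\inte(\dom\phi)$ via $\phi(z)=\lim_{\lambda\uparrow 1}\phi((1-\lambda)z_0+\lambda z)$ for any $z_0\in\inte(\dom\phi)$ (Theorem~7.5 in Rockafellar's \emph{Convex Analysis}), since the right-hand side is affine in $z$. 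The Legendre property then turns $u(x)\in\partial\phi(v(x))$ a.e.\ into $v(x)\in\inte(\dom\phi)$ and $u=\nabla\phi\circ v$ a.e. This yields both the domain formula and single-valuedness.
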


\begin{remark}
\label{rmk:p=1}
    In Fact~\ref{fact:integralfunctionals_extended}, suppose that $p=1$ and $\dom \phi^* = \R^n$. Then $\range\partial \Phi = \V^*$.
Indeed, we note that $\nabla \phi\colon \inte(\dom \phi)\to \R^n$ is a continuous bijection with inverse $\nabla \phi^*$, which is also continuous. Therefore if 
we let $u \in \V^*=L^\infty(D,\R^n)$ and
 set $v = (\nabla \phi^*)\circ u$, since $u$ is essentially bounded, we have that $v$ is essentially bounded too, and hence integrable. In the end $v \in L^1(D,\R^n)$ and $u = (\nabla \phi)\circ v \in \partial \Phi(v)$.
\end{remark}

Definition~\ref{def:Bprox} of Bregman proximity operators in general Banach spaces requires that $\range \partial (\Phi+g)$ is the full dual space. The following result gives a simple situation in which such condition is satisfied.

\begin{proposition}
\label{prop:pconvex}
    Let $\phi \in \Gamma_0(\R^n)$ be a Legendre function, let $p \in \left[1,+\infty\right[$, and suppose that $\phi$ is $p$-uniformly convex with constant $c>0$, meaning that
    \begin{equation}
    \label{def:pconvex}
        \forall\, y,y^\prime \in \R^n,\forall\, \lambda \in \left]0,1\right[\colon\ \phi((1-\lambda)y+\lambda y^\prime) + \lambda(1-\lambda) \frac{c}{p}\abs{y-y^\prime}^p \leq (1-\lambda)\phi(y)+\lambda \phi(y^\prime).
    \end{equation}
    Let $\V= L^p(D,\R^n)$.
Then the integral functional $\Phi\colon \V\to \left]-\infty,+\infty\right]$ defined  as in Fact~\ref{fact:integralfunctionals} is $p$-uniformly convex with respect to the norm $\|\cdot\|_p$. 
Moreover, for every $g \in \Gamma_0(\V)$
such that $\dom \Phi\cap \dom g\neq \varnothing$,
we have $\dom (\Phi+g)^* = \V^*$
and $(\Phi+g)^*$ is Fr\'echet differentiable on $\V^*$. Thus $\mathcal{V}^*=\dom \partial (\Phi+ g) = \range \partial (\Phi + g)$.

\end{proposition}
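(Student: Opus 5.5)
The first claim is proved pointwise and then integrated. Let $v,v'\in\dom\Phi$ and $\lambda\in\left]0,1\right[$. Applying \eqref{def:pconvex} with $y=v(x)$ and $y'=v'(x)$ for a.e.\ $x\in D$ and integrating over $D$ gives
\begin{equation*}
\Phi((1-\lambda)v+\lambda v') + \lambda(1-\lambda)\frac{c}{p}\int_D\abs{v(x)-v'(x)}^p dx \leq (1-\lambda)\Phi(v)+\lambda\Phi(v').
\end{equation*}
The integral term is exactly $\|v-v'\|_p^p$, computed with the Euclidean norm on $\R^n$ inside the $L^p$ norm. If $v$ or $v'$ lies outside $\dom\Phi$, the right-hand side is $+\infty$ and the inequality is trivial. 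Measurability and well-definedness of the integrals come from Fact~\ref{fact:integralfunctionals_extended}: $\Phi\in\Gamma_0(\V)$, and $\phi$ is bounded below by an affine minorant, so the negative parts are integrable since $D$ is bounded. This shows $\Phi$ is $p$-uniformly convex with constant $c$ for $\|\cdot\|_p$.

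Next set $F=\Phi+g$. It is proper because $\dom\Phi\cap\dom g\neq\varnothing$, lower semicontinuous and convex as a sum of $\Gamma_0$ functions, and still $p$-uniformly convex with the same constant, since adding a convex function preserves the inequality. I would then use the standard duality between uniform convexity and uniform smoothness in Banach spaces (Zălinescu, \citet{2002_Zalinescu}, Thm.~3.5.5 and Sect.~3.5). A proper lsc function that is uniformly convex with modulus $\delta(t)=\frac{c}{p}t^p$ has a conjugate $F^*$ that is finite everywhere on $\V^*$ and uniformly Fréchet differentiable, with modulus of smoothness dominated by $\delta^*$, i.e.\ a $(q-1)$-Hölder gradient.

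Finiteness of $F^*$ can also be checked directly. Take $u_0\in\dom F$ and $u_0^*\in\partial F(u_0)$; such a subgradient exists on a dense subset of $\dom F$ by Brøndsted--Rockafellar. Uniform convexity then gives $F(u)\geq F(u_0)+\pair{u-u_0,u_0^*}+\frac{c'}{p}\|u-u_0\|^p$. When $p>1$ this is supercoercive, so $F^*(v^*)<+\infty$ for every $v^*$. Because $F$ is strictly convex and coercive after tilting by any $v^*$, the tilted problem $\argmin\{F-\pair{\cdot,v^*}\}$ has a unique minimizer. Existence of this minimizer needs some compactness. $L^p$ with $1<p<\infty$ is reflexive, so weak lower semicontinuity together with coercivity yields a minimizer. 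Alternatively, and this also covers $p=1$, uniform convexity shows that minimizing sequences are Cauchy, which gives a minimizer in full generality.

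From the unique minimizer we read off differentiability. We have $\partial F^*(v^*)=\argmin\{F-\pair{\cdot,v^*}\}$, which is a singleton, so $F^*$ is Gâteaux differentiable. Uniform convexity makes the minimizer depend continuously on $v^*$, with the estimate $\|x_1-x_2\|^{p-1}\lesssim\|v_1^*-v_2^*\|$, and this upgrades Gâteaux to Fréchet differentiability. Then $\dom\partial F^*=\V^*$, and since $\partial F^*=(\partial F)^{-1}$ for $F\in\Gamma_0$, we get $\range\partial F=\V^*$. This is also what the statement means by $\dom\partial(\Phi+g)$: read through conjugation, it is $\dom\partial F^*=\V^*$.

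The main obstacle is the case $p=1$. There the modulus $t\mapsto c\,t$ is not superlinear, and a naive argument only gives $\dom F^*\supseteq\{\|v^*\|_\infty<c'\}$ rather than all of $\V^*$. I would handle it by exploiting the pointwise structure: by Remark~\ref{rmk:p=1} it suffices that $\dom\phi^*=\R^n$. I would show that a Legendre $\phi$ satisfying \eqref{def:pconvex} with $p=1$ must have bounded domain, or must otherwise be supercoercive, so that $\phi^*$ is finite. Combining this with the Cauchy minimizing-sequence argument, and with Remark~\ref{rmk:p=1} for the range, would close this case.
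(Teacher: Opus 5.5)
Your route is the paper's. You integrate \eqref{def:pconvex} pointwise to get $p$-uniform convexity of $\Phi$, and hence of $\Phi+g$. You then use the duality between uniform convexity of a function and uniform smoothness of its conjugate, which the paper does in one line by citing \citet[Theorem~3.5.10]{2002_Zalinescu}. Your self-contained check for $p>1$ is correct and unpacks that citation usefully: supercoercivity from the subgradient inequality, a unique tilted minimizer by reflexivity and strict convexity, and the estimate $\|x_1-x_2\|^{p-1}\le\frac{p}{2c}\|v_1^*-v_2^*\|$ upgrading G\^ateaux to Fr\'echet differentiability.

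What does not hold up is your $p=1$ paragraph, which is a plan rather than an argument. The claim that $\phi$ must have bounded domain or be supercoercive is left unproved. Even granting it, Remark~\ref{rmk:p=1} only yields $\range\partial\Phi=\V^*$, i.e.\ the case $g=0$. It says nothing about $\dom(\Phi+g)^*$ or Fr\'echet differentiability for general $g$.

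The missing observation is that this case is vacuous. A Legendre $\phi$ has $\inte(\dom\phi)=\dom\partial\phi\neq\varnothing$, since a proper convex function on $\R^n$ has subgradients on the relative interior of its domain, and $\phi$ is differentiable there. Choosing $m\in\inte(\dom\phi)$, $y=m-h$, $y'=m+h$, $\lambda=1/2$ in \eqref{def:pconvex} gives
\[
\frac{c}{4p}\,|2h|^p\le\frac{\phi(m+h)+\phi(m-h)}{2}-\phi(m)=o(|h|)\qquad(h\to0),
\]
which is impossible for $p=1$. At a point where $\phi$ has a second-order expansion, which exists by Alexandrov's theorem, the right-hand side is even $O(|h|^2)$. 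So no Legendre function satisfies the hypothesis for any $p<2$, and the statement only has content for $p\ge2$. Hence nothing needs to be proved for $p=1$, and the duality argument applies exactly as in the paper, without a case split.
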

\begin{proof}
    It follows by integrating \eqref{def:pconvex}. The second part follows by \citet[Theorem~3.5.10]{2002_Zalinescu}, considering that $\Phi + g$ is also $p$-uniformly continuous.
\end{proof}

We now provide conditions ensuring that the Bregman proximity operator is well-defined by guaranteeing that the subdifferential covers the entire dual space. This is important because having full range means that every possible dual variable has a corresponding primal solution. 

\begin{remark}\label{rmk:Bprox}\ 
\begin{enumerate}[label={\rm (\roman*)}, leftmargin=4ex, noitemsep]
\item If $\V= L^p(D,\R^n)$ with $p \in \left]1,+\infty\right[$, the condition
$\range \partial(\Phi+g)=\V^*$ is satisfied if $\phi$ is $p$-uniformly convex (see Proposition~\ref{prop:pconvex} in the appendix). Moreover, by Remark~\ref{rmk:p=1}, if $p=1$ and $\dom \phi^* = \R^n$, then $\range \partial \Phi = \V^*$.
\item\label{rmk:Bprox_i} If instead of $\range \partial (\Phi+g)=\V^*$, one asks the stronger condition $\range (\partial \Phi+\partial g)=\V^*$, then we have $\partial (\Phi+g)=\partial \Phi + \partial g$ and the Bregman proximity operator writes down as $\mathrm{prox}_g^\Phi = (\partial\Phi+ \partial g)^{-1}$ and $\range(\mathrm{prox}_g^\Phi)\subset \dom \partial\Phi$.
\end{enumerate}
\end{remark}

Finally, we apply these results to an iterative process, specifically the compositional form of (Bregman) neural operators, to ensure well-posedness at each step. By confirming that the proximity operator consistently maps to the correct domain, we establish a stable recursive structure. This prevents domain mismatches and ensures the validity of compositions. Additionally, we emphasize the importance of a compatibility condition on the lifting operator, which guarantees a well-defined initialization for the iterative scheme.

\begin{remark}\label{rmk:oplayer_i}
In view of Remark~\ref{rmk:Bprox}\ref{rmk:Bprox_i}, the condition  $\range(\partial \Phi_t + \partial g_t) = \V_t^*$ implies that  $\mathrm{prox}^{\Phi_t}_{g_t} = (\partial \Phi_t +\partial g_t)^{-1}$ and hence
$\range(\mathrm{prox}^{\Phi_t}_{g_t})\subset \dom \partial \Phi_t$. In this way $\dom \mathcal{L}_t = \mathcal{M}_t^{-1}(\dom \partial \Phi_{t-1})$
and $\range(\mathcal{L}_t) \subset \dom \partial \Phi_{t}$ and the composition \eqref{eq:neuralop} is well-defined provided that for the lifting operator $\mathcal{P}$ it holds  $\range(\mathcal{P})\subset \dom \partial\Phi_1$ {\rm(}e.g., if $\mathcal{P}(v)(x) = \nabla \phi_1^*(P v(x))${\rm)}.
\end{remark}

\subsection{Link Between Activation Function and Proximity Operator}

As demonstrated in the work of \citet{Combettes2020}, many activation functions $\sigma$ can be expressed as proximity operators $\mathrm{prox}_g = {\argmin}_{t\in\mathbb{R}}\; g(t) + \frac{1}{2}(\cdot-t)^2$
for some appropriate convex function $g$. The simplest case is that of the ReLu activation function, recalled below.

\begin{example}[ReLu\label{ex:relu}] The rectified linear unit function $\sigma\colon t \in\mathbb{R}\mapsto \max(t,0)\in\mathbb{R}$ can be expressed as the proximity operator $\mathrm{prox}_{g}$ of $g=\imath_{[0,+\infty[}$. Henceforth, $\mathrm{prox}_{g}$ reduces to the projection onto the positive orthant.
\end{example}
We also provide a novel characterization of SoftPlus.
\begin{example}[SoftPlus\label{ex:softplus}] Given $\beta>0$, the SoftPlus activation function, i.e., $\sigma\colon t\mapsto \mathrm{SoftPlus}_\beta(t)\triangleq(1/\beta)\log(\exp(\beta t) + 1)$, is the proximity operator of 
\begin{equation}\label{eq:Li2}
    g\colon t\in\mathbb{R}_{>0} \mapsto \frac{1}{\beta^2}\mathrm{Li}_2(\mathrm{e}^{-\beta t})\in\mathbb{R}_{>0},
\end{equation}
where $\mathrm{Li}_2$ is the dilogarithm function defined as $\mathrm{Li}_2\colon t \mapsto - \int_0^t \frac{ \log(1-u)}{u}\mathrm{d}u$.
\end{example}
\begin{proof}For every $s\in\mathbb{R}$,  $\mathrm{prox}_g(s) = \argmin_{t\in\mathbb{R}} \{h(t)\triangleq g(t) + (1/2)(s-t)^2\}$ with $h(t)=(1/\beta^2)\mathrm{Li}_2(\mathrm{e}^{-\beta t}) + (1/2)(s-t)^2 = \psi(t) - st + (1/2)s^2$ where we introduced $\psi(t)=(1/\beta^2)\left(\mathrm{Li}_2(\mathrm{e}^{-\beta t}) + (1/2)\log(\mathrm{e}^{-\beta t})^2\right)=(1/\beta^2)\int^{\mathrm{e}^{-\beta t}} \log(r/(1-r))/r \mathrm{d}r$. The latter can be written as $\psi(t)=(1/\beta)\int^t \log(\mathrm{e}^{\beta r}-1)\mathrm{d}r$ up to a constant. Finally, since $h$ is strongly convex, the minimum is attained for $t$ such that $h'(t)=0$, which yields $\log(\mathrm{e}^{\beta t}-1)=\beta s \Leftrightarrow t=\sigma(s)$, thus ending the proof.
\end{proof}

\begin{figure}[!h]
    \centering
    \begin{subfigure}[b]{.45\linewidth}
        \centering
        \includegraphics[width=.7\linewidth, clip=true, trim=0cm .25cm 0cm 0cm]{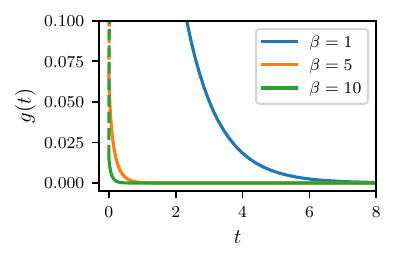}
        \caption{Representation of $t\mapsto g(t)=\frac{1}{\beta^2}\mathrm{Li}_2(\mathrm{e}^{-\beta t})$}
        \label{fig:softplus_g}
    \end{subfigure}
    \hfill
    \begin{subfigure}[b]{.45\linewidth}
        \centering
        \includegraphics[width=.7\linewidth, clip=true, trim=0cm .25cm 0cm 0cm]{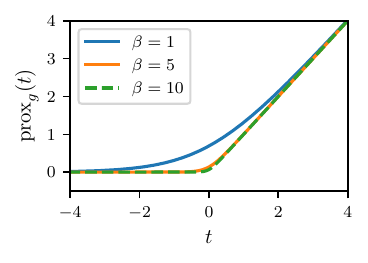}
        \caption{Representation of $\mathrm{prox}_g(t)=\mathrm{SoftPlus}_\beta(t)$}
        \label{fig:softplus_proxg}
    \end{subfigure}
    \caption{\textbf{Illustration of SoftPlus as a proximity operator.}}
    \label{fig:softplus}
\end{figure}

We present an illustration of the convex function $g$ defined in Eq.~\ref{eq:Li2} in Figure~\ref{fig:softplus_g}. Intuitively, it serves as a smooth surrogate for the indicator function of the positive orthant $\imath_{[0,+\infty[}$. A larger value of $\beta>0$ leads to a closer approximation. This aligns with the representation of SoftPlus as the proximity operator of $g$ from Eq.~\ref{eq:Li2}, depicted in Fig.~\ref{fig:softplus_proxg} where a larger $\beta$ makes SoftPlus closer to ReLU.

\section{Approximation Results for Bregman Neural Networks and Operators}
\label{sec:approx}

\subsection{Bregman Neural Networks}

We consider first shallow Bregman neural networks for finite dimensional spaces.
Let $\sigma\colon \R \to I$ be a homeomorphism, where $I$ is an open interval in $\R$.
We $d \in \N_+$ and set
\begin{equation}
\label{eq:BN}
\mathsf{BN}_2(\sigma; I^d) =\mathrm{span} \big\{ \sigma (\sigma^{-1}(m^\top x) + w^\top x + b) \,\big\vert\, m \in \Delta^{d-1}, w \in \R^d, b \in \R \big\}.
\end{equation}

\begin{remark}
Since $m$ belongs to the standard simplex $\Delta^{d-1}$, $m^\top x$ is a convex combination of elements of $I$
and so it is an element of $I$. Thus, since $\sigma^{-1}\colon I \to \R$, the functions in $\mathsf{BN}_2(\sigma; I^d)$ are well-defined
from $I^d\to \R$.
\end{remark}

The following result follows from an adaptation of the argument in \citet{Cybenko} to our different architecture \eqref{eq:BN}.

\begin{theorem}
\label{thm:BNuni}
Suppose that $\sigma$ is 
\emph{sigmoidal}, meaning that $\lim_{t\to -\infty} \sigma(t) =0$ and $\lim_{t\to +\infty} \sigma(t) =1$.
Then, the space $\mathsf{BN}_2(\sigma; I^d)$ is dense in $\mathcal{C}(I^d, \R)$ with respect to the topology of uniform convergence on compact sets.
\end{theorem}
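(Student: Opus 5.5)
Following Cybenko's route, we argue by contradiction via Hahn--Banach and the Riesz representation theorem. Since the topology is that of uniform convergence on compact subsets of $I^d$, it suffices to show that for every compact $C\subset I^d$ the restrictions of $\mathsf{BN}_2(\sigma;I^d)$ are dense in $\mathcal{C}(C,\R)$. If this failed for some $C$, there would be a nonzero finite signed Borel measure $\mu$ on $C$ with
\[
\int_C \sigma\big(\sigma^{-1}(m^\top x)+w^\top x+b\big)\,d\mu(x)=0
\]
for all $m\in\Delta^{d-1}$, $w\in\R^d$ and $b\in\R$. The goal is to derive $\mu=0$, i.e.\ to show that $\sigma$ is \emph{discriminatory} for this modified family.

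The key step is to neutralize the extra term $\sigma^{-1}(m^\top x)$ by a scaling limit. Fix $m$, say $m=e_1$. For $\lambda>0$, $w\in\R^d$ and $b,\beta\in\R$, replace $(w,b)$ by $(\lambda w,\lambda b+\beta)$. On the compact set $C$ the function $x\mapsto \sigma^{-1}(m^\top x)$ is continuous and hence bounded, say by $M_C$. Consequently, as $\lambda\to+\infty$ the argument
\[
\sigma^{-1}(m^\top x)+\lambda(w^\top x+b)+\beta
\]
tends to $+\infty$ where $w^\top x+b>0$ and to $-\infty$ where $w^\top x+b<0$, uniformly on compacta of those open sets. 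On the hyperplane $w^\top x+b=0$ it equals $\sigma^{-1}(x_1)+\beta$, so the integrand tends to $\sigma(\sigma^{-1}(x_1)+\beta)$ there. Since $\sigma$ is bounded, being a homeomorphism onto an interval with finite limits $0$ and $1$, dominated convergence gives
\[
0=\mu(\{w^\top x+b>0\})+\int_{\{w^\top x+b=0\}}\sigma\big(\sigma^{-1}(x_1)+\beta\big)\,d\mu .
\]
Letting $\beta\to-\infty$ kills the hyperplane term by dominated convergence, because $\sigma(\cdot)\to 0$ pointwise. Hence $\mu(\{w^\top x+b>0\})=0$ for every open half-space. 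Letting instead $\beta\to+\infty$ yields $\mu(\{w^\top x+b\ge 0\})=0$, which we do not actually need.

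From the vanishing of $\mu$ on all open half-spaces, the standard Cybenko argument finishes the proof. For fixed $w$, the push-forward $\nu_w=\mu\circ(x\mapsto w^\top x)^{-1}$ satisfies $\nu_w(]-b,+\infty[)=0$ for all $b$. Hence $\nu_w$ vanishes on all intervals, so $\nu_w=0$, and therefore $\int e^{i w^\top x}\,d\mu=0$ for every $w$. The Fourier transform of $\mu$ thus vanishes identically, which forces $\mu=0$, a contradiction.

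I expect the main obstacle to be the scaling step. One must check that the perturbation $\sigma^{-1}(m^\top x)$ is uniformly bounded, which holds precisely because we restrict to compact subsets of the open cube $I^d$: near $\partial I$, $\sigma^{-1}$ blows up, and this is why density is claimed only in the compact-open topology. One must also treat the hyperplane contribution carefully, since the limit there is a nonconstant function of $x$ rather than the constant $\sigma(\beta)$ of Cybenko's original proof; the extra limit $\beta\to-\infty$ handles this.
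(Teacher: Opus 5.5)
Your proposal is correct and follows the paper's proof essentially step for step: you reduce to a compact $K\subset I^d$ via Hahn--Banach and Riesz, use the rescaled family $\sigma(\sigma^{-1}(m^\top x)+\lambda(w^\top x+b)+c)$ with $\lambda\to+\infty$ and dominated convergence, send $c\to-\infty$ to remove the hyperplane term, and finish with Cybenko's Lemma~1. The differences are cosmetic: the paper keeps $m\in\Delta^{d-1}$ general and also derives $\mu(\Pi_{w,b})=0$ via $c\to+\infty$ (which, as you note, is not needed), while you write out the push-forward/Fourier-transform ending that the paper only cites.
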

\begin{proof}
Let $K\subset I^d$ be a compact set. We prove that the trace space
$\mathsf{BN}_2(\sigma; I^d)_{\lvert K}$ is dense in $\mathcal{C}(K,\R)$.
To that purpose, we rely on the following general fact concerning dense sets in Banach space (see, e.g., \citet{Brezis}).
Let $\mathcal{B}$ be a Banach space, let $\mathcal{A}\subset \mathcal{B}$. Then the following propositions are equivalent.
\begin{itemize}
\item $\mathrm{span} \mathcal{A}$ is dense in $\mathcal{B}$
\item $\mathcal{A}^\perp = \{u^* \in \mathcal{B}^*\,\vert\, \forall\, u \in \mathcal{A}\colon \langle u,u^*\rangle=0\}= \{0\}$.
\item $\forall\, u^* \in \mathcal{B}^*$, $(\forall\, u \in \mathcal{A}\colon\ \langle u,u^*\rangle =0)\ \Rightarrow\ u^*=0$.
\end{itemize}
This implies that for our purpose 
we can equivalently prove that
\begin{equation*}
\forall\, \mu \in \mathcal{M}(K)\colon\ \ \bigg(\forall\, \testf \in \mathsf{BN}_2(\sigma; I^d)\colon \int_K \testf \mu =0 \bigg)\ \Rightarrow\ \mu=0,
\end{equation*}
where $\mathcal{M}(K)$ is the space of signed finite Radon measures on $K$ (the dual of $\mathcal{C}(K)$).
Thus, let $\mu$ be a signed measure on $K$ and suppose that
\begin{equation}
\label{eq:20240522a}
\forall\, \testf \in \mathsf{BN}_2(\sigma; I^d)\colon\ \int_K \testf d \mu =0.
\end{equation}
Fix $w \in \R^d, m \in \Delta^{d-1}$, and  $b \in \R$. Define, for every $\lambda>0$ and $c \in \R$
\begin{equation*}
\sigma_{\lambda,c}\colon I\to \R,\,\quad x\mapsto \sigma (\sigma^{-1}(m^\top x) + \lambda(w^\top x + b) + c).
\end{equation*}
It is clear that $\sigma_{\lambda,c} \in \mathsf{BN}_2(\sigma; I^d)$. Moreover,
\begin{equation*}
\lim_{\lambda \to +\infty} \sigma_{\lambda,c}(x) = 
\left\{
\begin{array}{cc}
1 \phantom{aaaaaaaaaaaa} & \text{if}\ w^\top x + b>0 \\
0 \phantom{aaaaaaaaaaaa}& \text{if}\ w^\top x + b<0 \\
\sigma(\sigma^{-1}(m^\top x) + c)& \text{if}\ w^\top x + b=0.
\end{array}
\right\} := \gamma(x).
\end{equation*}
Define the sets
\begin{equation*}
\Pi_{w,b}^+ = \big\{ x \in K \,\big\vert\, w^\top x + b>0\big\},
\quad
\Pi_{w,b}^- = \big\{ x \in K \,\big\vert\, w^\top x + b<0\big\},
\quad
\Pi_{w,b} = \big\{ x \in K \,\big\vert\, w^\top x + b=0\big\}.
\end{equation*}
They are intersections of half-spaces and hyperplanes with $K$. So,
\begin{equation*}
\gamma(x) = \chi_{\substack{\\[0.3ex]\Pi_{w,b}^+}}(x) + \sigma(\sigma^{-1}(m^\top x) + c)\chi_{\substack{\\[0.3ex]\Pi_{w,b}}}(x),
\end{equation*}
where $\chi_{\substack{\\A}}$ is the characteristic functions of the set $A \subset I^d$.
Since $\sigma$ is bounded we can apply the Lebesgue's dominated convergence theorem and
get
\begin{equation*}
\lim_{\lambda\to +\infty} \underbrace{\int_K \sigma_{\lambda, c} d \mu}_{=0} = \int_K \gamma d \mu = \mu (\Pi_{w,b}^+) + \int_{\Pi_{w,b}} \sigma(\sigma^{-1}(m^\top x) + c) d\mu(x).
\end{equation*}
Note that the integral on the left is zero by the hypothesis \eqref{eq:20240522a}.
In this way we proved that
\begin{equation}
\label{eq:20240522b}
\forall\, m \in \Delta^{d-1},\forall\, w \in \R^d, \forall\, b, \forall\, c \in \R\colon\quad \mu (\Pi_{w,b}^+) + \int_{\Pi_{w,b}} \sigma(\sigma^{-1}(m^\top x) + c) d\mu(x) =0.
\end{equation}
Now observe that \eqref{eq:20240522b} implies
\begin{equation*}
\Big\lvert  \mu (\Pi_{w,b}^+) \Big\rvert = \bigg\lvert \int_{\Pi_{w,b}} \sigma(\sigma^{-1}(m^\top x) + c) d\mu(x) \bigg\rvert\leq \int_{\Pi_{w,b}} \abs{\sigma(\sigma^{-1}(m^\top x) + c)}\, d\abs{\mu}(x)\to 0\ \text{as}\ c\to -\infty,
\end{equation*}
since $\abs{\sigma(\sigma^{-1}(m^\top x) + c)}\to 0$ as $c\to-\infty$ (pointwise), where $\abs{\mu}$ is the total variation of $\mu$. Therefore, $\mu (\Pi_{w,b}^+)=0$.
Then \eqref{eq:20240522b} yields 
\begin{equation*}
\forall\, c \in \R\colon \quad\int_{\Pi_{w,b}} \sigma(\sigma^{-1}(m^\top x) + c) d\mu(x)=0.
\end{equation*}
Moreover, by assumption $\sigma(\sigma^{-1}(m^\top x) + c)\to 1$ as $c\to +\infty$ (pointwise) and hence,
again by Lebesgue's dominated convergence theorem, 
\begin{equation*}
\lim_{c\to +\infty}\underbrace{\int_{\Pi_{w,b}} \sigma(\sigma^{-1}(m^\top x) + c) d\mu(x)}_{=0} = \int_{\Pi_{w,b}} 1 d \mu = \mu(\Pi_{w,b}),
\end{equation*}
which yields $\mu(\Pi_{w,b})=0$. In the end we proved that the measure $\mu$ is zero on all the sets of type
\begin{equation*}
\Pi_{w,b}\quad\text{and}\quad\Pi_{w,b}^{+}.
\end{equation*}
Now the proof continues as in \citet[Lemma~1]{Cybenko}, and we can conclude that $\mu=0$.
\end{proof}

Now we address the vectorial case. We set
\begin{equation*}
\mathsf{BN}_2(\sigma; I^d, \R^k):=\bigg\{ Q\sigma (\sigma^{-1}(M x) + W x + b) \,\bigg\vert\ \ \substack{\displaystyle r \in \N_+, Q \in \R^{k\times r}, W,M \in \R^{r\times d},\\[0.2ex] \displaystyle\text{with $M$ right stochastic, and}\ b \in \R^r} \bigg\},
\end{equation*}
where $\sigma$ and $\sigma^{-1}$ are applied component-wise.

\begin{corollary}
We have that 
\begin{equation}
\label{cor:Bnet}
\mathsf{BN}_2(\sigma; I^d, \R^k) = (\mathsf{BN}_2(\sigma; I^d))^k:= \underbrace{\mathsf{BN}_2(\sigma; I^d)\times \cdots \times \mathsf{BN}_2(\sigma; I^d)}_{k\ \text{times}}
\end{equation}
and it is dense in $\mathcal{C}(I^d, \R^k)$,
in the topology of uniform convergence on compact sets.
\end{corollary}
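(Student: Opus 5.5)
The proof splits into two parts: a set equality, shown by double inclusion, and a density claim, which follows from the equality together with Theorem~\ref{thm:BNuni}.

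For the equality, fix $f=Q\sigma(\sigma^{-1}(Mx)+Wx+b)$ in $\mathsf{BN}_2(\sigma;I^d,\R^k)$. Let $m_j^\top$, $w_j^\top$ and $b_j$ denote the $j$-th rows of $M$, $W$ and the $j$-th entry of $b$. Since $M$ is right stochastic, each $m_j$ lies in $\Delta^{d-1}$. Hence each scalar map $x\mapsto\sigma(\sigma^{-1}(m_j^\top x)+w_j^\top x+b_j)$ is one of the generators in \eqref{eq:BN}. The $i$-th component of $f$ is $\sum_{j=1}^r Q_{ij}$ times these generators, so it lies in the span $\mathsf{BN}_2(\sigma;I^d)$. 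This gives the inclusion $\subset$.

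For the reverse inclusion, take $(f_1,\dots,f_k)$ with each $f_i=\sum_{j=1}^{r_i} a_{ij}\,\sigma(\sigma^{-1}(m_{ij}^\top x)+w_{ij}^\top x+b_{ij})$. I would stack all the neurons $(m_{ij},w_{ij},b_{ij})$ over $i$ and $j$ to get $r=\sum_i r_i$ rows. This yields $M$, whose rows lie in the simplex, so $M$ is right stochastic, together with $W$ and $b$. I then take $Q$ block-structured: row $i$ carries the coefficients $a_{ij}$ in the columns of block $i$ and zeros elsewhere. Then $Q\sigma(\sigma^{-1}(Mx)+Wx+b)=(f_1,\dots,f_k)$. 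One detail needs care: $\sigma^{-1}(Mx)$ must be well defined. This holds because each $m_{ij}^\top x$ is a convex combination of coordinates of $x\in I^d$, as in the remark following \eqref{eq:BN}.

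For density, let $g=(g_1,\dots,g_k)\in\mathcal{C}(I^d,\R^k)$, let $K\subset I^d$ be compact, and let $\varepsilon>0$. By Theorem~\ref{thm:BNuni}, for each $i$ there is $f_i\in\mathsf{BN}_2(\sigma;I^d)$ with $\sup_K|g_i-f_i|\le\varepsilon/\sqrt{k}$. The equality just proved gives $f=(f_1,\dots,f_k)\in\mathsf{BN}_2(\sigma;I^d,\R^k)$, and $\sup_K|g-f|\le\varepsilon$ in the Euclidean norm.

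No step is a real obstacle. The only point needing attention is the bookkeeping in the stacking construction, namely checking that the concatenated $M$ stays right stochastic. This is immediate since right stochasticity is a row-wise condition.
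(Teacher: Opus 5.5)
Your proof is correct and follows the paper's own argument. The paper also gets the reverse inclusion by stacking the neurons of the $k$ components into block matrices $M$, $W$, $b$ (with $M$ right stochastic row by row) and a block-diagonal $Q$, and it deduces density componentwise from Theorem~\ref{thm:BNuni}. Your version spells out what the paper calls immediate or clear (the forward inclusion, the well-definedness of $\sigma^{-1}(Mx)$, the $\varepsilon/\sqrt{k}$ bound), and it lets each component use its own number $r_i$ of neurons rather than a common $r$, which makes no real difference.
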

\begin{proof}
In view of Theorem~\ref{thm:BNuni}, it is clear that $(\mathsf{BN}_2(\sigma; I^d))^k$ is dense in $\mathcal{C}(I^d,\R)^k\approxeq\mathcal{C}(I^d, \R^k)$
in the topology of uniform convergence on compact sets. Let's prove equality \eqref{cor:Bnet}.
The inclusion $\mathsf{BN}_2(\sigma; I^d, \R^k) \subset (\mathsf{BN}_2(\sigma; I^d))^k$ is immediate.
Let $\testf \colon I^d \to \R^k$ with components $\testf_j \in \mathsf{BN}_2(\sigma; I^d)$, $j=1, \dots, k$.
Then, there exists $r \in \N_+$, and for each $j \in \{1,\dots, k\}$, 
 $q_j \in \R^r$, $W_j \in \R^{r\times d}$, $b_j \in \R^r$, and $M_j \in \R^{r\times d}$ right stochastic matrix (the rows are positive and sum one), such that
\begin{equation*}
\testf_j(x) = q_j^\top \sigma (\sigma^{-1}(M_j x) + W_j x + b_j).
\end{equation*}
Then considering the block matrices
\begin{equation*}
M = \begin{bmatrix} M_1 \\ \vdots\\ M_k \end{bmatrix}\in \R^{k r\times d},\quad
W = \begin{bmatrix} W_1 \\ \vdots\\ W_k \end{bmatrix}\in \R^{k r\times d},\quad
b = \begin{bmatrix} b_1 \\ \vdots\\ b_k \end{bmatrix}\in \R^{k r},\quad
Q = \begin{bmatrix} 
q^\top_1 & 0 & \cdots & 0 \\
0 & q^\top_2 & \cdots & 0 \\
\vdots  & \vdots  & \ddots & \vdots  \\
0 & 0 & \cdots & q^\top_k
\end{bmatrix}\in \R^{k \times k r},
\end{equation*}
we have
\begin{equation*}
\testf(x) = Q \sigma(\sigma^{-1}(Mx) + W x+ b),
\end{equation*}
and hence $\testf \in \mathsf{BN}_2(\sigma; I^d, \R^k)$.
The statement follows.
\end{proof}

A general deep Bregman neural network with $T$ layers is defined as follows
\begin{equation*}
\mathsf{BN}_T(\sigma; I^d, \R^k) = \big\{ W_T \circ L_{T-1}\circ \dots \circ L_1\big\},
\end{equation*}
where, for every $t =1,\dots, T-1$,
\begin{equation}
\label{eq:BNlayer}
L_t\colon I^{n_{t-1}}\to I^{n_{t}},\quad x \mapsto \sigma(\sigma^{-1}(M_t x) + W_t x+ b_t),
\end{equation}
with $W_t \in \R^{n_{t}\times n_{t-1}}, b_t \in \R^{n_{t}}$ and $M_t \in \R^{n_{t}\times n_{t-1}}$ right stochastic, 
for $t=1,\dots, T-1$,
with $n_{0}=n$ and $W_T \in \R^{k\times n_{T-1}}$. Note that also the dimensions $n_{1}, \dots, n_{T-1}$
can be chosen freely.
Clearly for a deep network with $T > 2$, if we take, for every $t=2, \dots, T-1$, 
$n_{t}=n_{1}$, $W_t=0$, $b_t=0$, and $M_t$ equals to the identity, then the layers $L_t$ with $t=2, \dots, T-1$ act as the identity operator and hence 
\begin{equation*}
\mathsf{BN}_2(\sigma; I^d, \R^k)\subset \mathsf{BN}_T(\sigma; I^d, \R^k).
\end{equation*}
Therefore, $\mathsf{BN}_T(\sigma; I^d, \R^k)$ is dense in $\mathcal{C}(I^d,\R^k)$ for the topology of uniform convergence on compact sets.

\begin{remark}
\label{rmk:Bnet}
Often in applications it is desirable to have functions defined on the entire space $\R^d$.
In this case one can simply precompose the functions in $\mathsf{BN}_T(\sigma; I^d, \R^k)$
 by the homeomorphism
 \begin{equation*}
 x \in \R^d \to \sigma(x) \in I^d
 \end{equation*}
 obtaining a dense set in $\mathcal{C}(\R^d, \R^k)$ (for any $T\geq 2$). Such space is then denoted by 
 $\mathsf{BN}_T(\sigma; \R^d, \R^k)$.
\end{remark}

Let $D\subset \R^d$ be any nonempty bounded open set. If $\mathcal{F}(\R^d)$
is any class of real functions from $\R^d$ to $\R$ we denote by $\mathcal{F}_{\lvert \overline{D}}$
the set of restrictions to $\overline{D}$ of the functions in $\mathcal{F}(\R^d)$.
In the following according to Remark~\ref{rmk:Bnet} we put
\begin{equation}
\label{eq:defBN2}
\mathsf{BN}_T(\sigma; \R^d, \R^k) = \big\{ W_T \circ L_{T-1}\circ \dots \circ L_1\circ \sigma\big\},
\end{equation}
which is a dense space in $\mathcal{C}(\R^d,\R^k)$ with respect to the topology of uniform convergence on compact sets.

\begin{lemma}
Suppose that $\sigma$ is a sigmoidal activation function as in Theorem~\ref{thm:BNuni}.
Let $p \in \left[1,+\infty\right[$. 
Then $\mathsf{BN}_T(\sigma; \R^d, \R^k)_{\lvert \overline{D}}$ is dense in $L^p(D, \R^k)$ {\rm(}in the norm $\norm{\cdot}_{p}${\rm)}.
\end{lemma}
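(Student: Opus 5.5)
The plan is a two-step density argument. First, continuous functions on $\overline{D}$ are dense in $L^p(D,\R^k)$. Second, the restrictions of Bregman networks approximate continuous functions uniformly on $\overline{D}$, and uniform closeness on a bounded set controls the $L^p$ distance.

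\textbf{Step 1: reduce to continuous functions.} Fix $f\in L^p(D,\R^k)$ and $\varepsilon>0$. Since $D$ is open and $p<+\infty$, $\mathcal{C}_c(D,\R^k)$ is dense in $L^p(D,\R^k)$; this is the standard Lusin-type density theorem (see, e.g., \citet{Brezis}), applied componentwise. So there is $g\in\mathcal{C}_c(D,\R^k)$ with $\norm{f-g}_p\le \varepsilon/2$. Extending $g$ by zero outside $D$ gives a continuous function on all of $\R^d$, which I still denote $g$.

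\textbf{Step 2: approximate $g$ uniformly on the compact set $\overline{D}$.} The set $\overline{D}$ is compact because $D$ is bounded. By the density of $\mathsf{BN}_T(\sigma;\R^d,\R^k)$ in $\mathcal{C}(\R^d,\R^k)$ for uniform convergence on compact sets (eq.~\eqref{eq:defBN2}, Remark~\ref{rmk:Bnet}, and the corollary following Theorem~\ref{thm:BNuni}), there is $h\in\mathsf{BN}_T(\sigma;\R^d,\R^k)$ with
\[
\sup_{x\in\overline{D}}\abs{g(x)-h(x)}\le \delta,
\qquad \delta:=\frac{\varepsilon}{2\,\abs{D}^{1/p}},
\]
where $\abs{D}$ is the (finite) Lebesgue measure of $D$. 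Then
\[
\norm{g-h}_p=\Big(\int_D\abs{g-h}^p\,dx\Big)^{1/p}\le \delta\,\abs{D}^{1/p}=\varepsilon/2 .
\]
Since $h$ is continuous on $\overline{D}$, its restriction is measurable and bounded, hence lies in $L^p(D,\R^k)$. The triangle inequality gives $\norm{f-h_{\lvert\overline{D}}}_p\le\varepsilon$, which proves the lemma.

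\textbf{Main obstacle.} The only step needing care is confirming that the uniform density on compacts really holds for the precomposed class $\mathsf{BN}_T(\sigma;\R^d,\R^k)$. This reduces to two facts: compact subsets of $\R^d$ map to compact subsets of $I^d$ under the homeomorphism $x\mapsto\sigma(x)$, and, for $T\ge 2$, $\mathsf{BN}_T(\sigma;I^d,\R^k)$ is dense in $\mathcal{C}(I^d,\R^k)$ for uniform convergence on compacts (corollary following Theorem~\ref{thm:BNuni}, together with the inclusion $\mathsf{BN}_2\subset\mathsf{BN}_T$). The paper asserts this just before the lemma, so I take it as given. Everything else is routine measure theory, with the finiteness of $\abs{D}$ being what allows uniform approximation to control the $L^p$ norm.
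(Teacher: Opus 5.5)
Your proposal is correct and follows the paper's proof essentially verbatim. Both arguments use the density of $\mathcal{C}_c(D,\R^k)$ in $L^p(D,\R^k)$, then uniform approximation on $\overline{D}$ by restricted Bregman networks, then the bound $\norm{g-h}_p\le \norm{g-h}_\infty\abs{D}^{1/p}$ together with an $\varepsilon/2$ triangle-inequality split; your extra remarks (extending $g$ by zero to $\R^d$, and transporting compact sets through the homeomorphism $\sigma$) only make explicit details that the paper leaves implicit.
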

\begin{proof}
It is well known that $\mathcal{C}_c(D,\R^k)$ is dense in $L^p(D,\R^k)$ and 
hence $\mathcal{C}(\R^n,\R^k)_{\lvert \overline{D}}$ is dense in $L^p(D,\R^k)$ (in the norm $\norm{\cdot}_p$).
Moreover, $\mathsf{BN}_T(\sigma; \R^n, \R^k)_{\lvert \overline{D}}$ is dense in $\mathcal{C}(\R^d,\R^k)_{\lvert \overline{D}}$
(in the norm $\norm{\cdot}_{\infty}$). On the other hand
\begin{equation*}
\forall\, f \in \mathcal{C}(\R^d,\R^k)_{\lvert \overline{D}}\colon\ \ \norm{f}_p = \Big(\int_D \abs{f}^p dx\Big)^{1/p} \leq \norm{f}_\infty \abs{D}^{1/p}.
\end{equation*}
Thus, if $f \in L^p(D,\R^k)$ and $\varepsilon>0$, 
\begin{align*}
&\exists\, g \in \mathcal{C}(\R^d,\R^k)_{\overline{D}}\ \ \text{s.t.}\ \norm{f - g}_p \leq \frac{\varepsilon}{2}\\[1ex]
&\exists\, h \in \mathsf{BN}_T(\sigma; \R^d, \R^k)_{\lvert \overline{D}}\ \ \text{s.t.}\ \norm{g - h}_\infty \leq \frac{\varepsilon}{2 \abs{D}^{1/p}}
\ \ \Rightarrow\ \ \norm{g-h}_p \leq \frac{\varepsilon}{2}
\end{align*}
and hence $\norm{f - h}_p \leq \varepsilon$. 
\end{proof}

\begin{remark}
\label{rmk:BN+const}
It is sometimes required that neural networks, of any depth, include constant functions.
Standard feed-forward neural networks have the form
\begin{equation*}
(W_T\,\cdot + b_T)\circ \sigma(W_{T-1}\,\cdot\, + b_{T-1})\circ \dots\circ \sigma(W_{1}\,\cdot\, + b_{1}),
\end{equation*}
so it is clear that they include constant functions (just take $W_T=0$). However, for Bregman neural networks
as defined in \eqref{eq:defBN2}-\eqref{eq:BNlayer} this is not clear. An immediate modification to achieve this goal is to explicitly add a constant $b_T$ in the last layer. Another possibility is to lift the input space by one dimension, precomposing the neural network with a
(free) linear embedding. In particular, 
if we consider the canonical embedding
\begin{equation*}
J\colon \R^{d}\to \R^{d+1}\colon x\mapsto \begin{bmatrix} x\\ 0\end{bmatrix},
\end{equation*}
and define the following matrices
\begin{equation*}
\tilde{W}_t = \begin{bmatrix} W_t & 0\\ 0 &1 \end{bmatrix},\quad
\tilde{M}_t = \begin{bmatrix} M_t & 0\\ 0 &1 \end{bmatrix},\quad
\tilde{b}_t = \begin{bmatrix} b_t \\ -\sigma(0)\end{bmatrix},
\quad \text{{\rm(}for t<T{\rm)}}\quad
\tilde{W}_T = \begin{bmatrix} W_T & b_T/\sigma(0)\end{bmatrix},
\end{equation*}
then, for $t=1,\dots, T-1$, according to \eqref{eq:BNlayer}, we have
\begin{equation*}
    \forall\,y \in I^{n_{t-1}}\colon
    \tilde{L}_t \begin{bmatrix} y \\  \sigma(0)\end{bmatrix} =
    \sigma\bigg(\sigma^{-1}\bigg(\tilde{M}_t \begin{bmatrix} y \\  \sigma(0)\end{bmatrix}\bigg) + \tilde{W}_t \begin{bmatrix} y \\  \sigma(0)\end{bmatrix} + \tilde{b}_t\bigg) = \begin{bmatrix} L_t y \\  \sigma(0)\end{bmatrix}
\end{equation*}
and hence
\begin{equation*}
\tilde{W}_T\circ \tilde{L}_{T-1}\circ\dots\circ\tilde{L}_1\circ\sigma\circ J
= W_T\circ L_{T-1}\circ\dots\circ L_1\circ\sigma + b_T.
\end{equation*}
\end{remark}

\subsection{Bregman Neural Operators}

Now we start addressing the proof of Theorem~\ref{thm:main}. We will rely 
on the work of \citet{2023_Kovachki_N_j-jmlr_nolmbfsa}, from which, for the sake of reader's convenience,
we report the following facts.

\begin{fact}[Lemma~28 and 30 in \citet{2023_Kovachki_N_j-jmlr_nolmbfsa}]\label{fact1}
Let $D\subset \R^d$ be a bounded set and let $L \in (W^{m,p}(D))^*$, for some $m\geq 0$
and $1 \leq p <+\infty$, or $L\in (\mathcal{C}(D))^*$. 
Then, for any closed and bounded set $K\subset \mathcal{A}$ and $\varepsilon>0$,
there exists a function $\kappa \in \mathcal{C}_c^{\infty}(D)$ such that
\begin{equation*}
\sup_{v \in K} \Big\lvert L(v) - \int_D \kappa(x) v(x) d x\Big\rvert<\varepsilon.
\end{equation*}
\end{fact}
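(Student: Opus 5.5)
Note first that the statement must be read with $K$ \emph{compact} (as in \citet{2023_Kovachki_N_j-jmlr_nolmbfsa}), and this is how the result is used in Theorem~\ref{thm:main}. For a merely closed and bounded set it fails in general. For instance, with $L=\delta_{x_0}$ on $\mathcal{C}(\overline{D})$, take the bounded family of bumps concentrating at $x_0$: $\int_D\kappa v$ tends to $0$ along it while $L(v)=1$. I therefore plan a single duality argument that treats both cases, $X=W^{m,p}(D)$ and $X=\mathcal{C}(\overline{D})$, at once.

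\textbf{Step 1 (reduction to weak-$*$ approximation with bounded norms).} For $\kappa\in\mathcal{C}_c^\infty(D)$ write $\ell_\kappa(v)=\int_D\kappa v\,dx$; this is a bounded functional on $X$. Suppose we have a set $B$ of such functionals whose dual norms are at most some constant $C$, and whose weak-$*$ closure in $X^*$ contains $L$. Then the conclusion follows by a finite-net argument. Set $R=\sup_{v\in K}\norm{v}$ and let $v_1,\dots,v_N$ be a $\delta$-net of $K$. Choose $\ell_\kappa\in B$ in the weak-$*$ neighbourhood $\{|\ell(v_j)-L(v_j)|<\delta,\ j=1,\dots,N\}$ of $L$. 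For any $v\in K$ pick $j$ with $\norm{v-v_j}\le\delta$; then
\[
|L(v)-\ell_\kappa(v)|\le |L(v_j)-\ell_\kappa(v_j)|+(\norm{L}+C)\delta<(1+\norm{L}+C)\delta ,
\]
which is below $\varepsilon$ once $\delta$ is small.

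\textbf{Step 2 (the approximating set is norming).} Define
\[
B=\Big\{\ell_\kappa \,\Big\vert\, \kappa=\sum_{|\alpha|\le m}(-1)^{|\alpha|}\partial^\alpha g_\alpha,\ g_\alpha\in\mathcal{C}_c^\infty(D),\ \textstyle\sum_\alpha\norm{g_\alpha}_{q}\le 1\Big\}
\]
in the Sobolev case, and $B=\{\ell_\kappa \mid \norm{\kappa}_{1}\le 1\}$ in the continuous case. Integration by parts gives $\ell_\kappa(v)=\sum_\alpha\int_D g_\alpha\,\partial^\alpha v$. Hence every element of $B$ has dual norm at most $1$ (for $W^{m,p}$ equipped with the equivalent norm $\max_\alpha\norm{\partial^\alpha v}_p$). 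Moreover $B$ is absolutely convex. I will show $\sup_{\ell\in B}\ell(v)=\norm{v}$ for all $v\in X$:
\begin{itemize}
\item In the Sobolev case this follows from the density of $\mathcal{C}_c^\infty(D)$ in $L^q(D)$, with the standard cutoff for $p=1$, $q=\infty$, where one uses weak-$*$ density in $L^\infty$ tested against the fixed $\partial^\alpha v\in L^1$.
\item In the continuous case, because $D$ is open and dense in $\overline{D}$, $\norm{v}_\infty=\sup_{x\in D}|v(x)|$ is attained by approximate identities $\kappa\ge0$, $\int\kappa=1$, concentrated near interior points.
\end{itemize}

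\textbf{Step 3 (bipolar theorem).} By Step 2, the prepolar of $B$ in $X$ is exactly the closed unit ball of $X$. The bipolar theorem for the dual pair $(X,X^*)$ with the weak-$*$ topology then shows that the weak-$*$ closure of $B$ is the unit ball of $X^*$. Scaling, $L$ lies in the weak-$*$ closure of $\norm{L}B$, so Step 1 applies with $C=\norm{L}$.

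The main obstacle is Step 2, specifically the norming property up to the boundary and at the endpoint exponents. If the bipolar route proved delicate there, my fallback for $1<p<\infty$ is direct: $X$ is reflexive, and $\{\ell_\kappa\}$ separates the points of $X$ (if $\ell_\kappa(v)=0$ for all $\kappa$ then $v=0$). Hahn--Banach then gives norm density of $\{\ell_\kappa\}$ in $X^*$, so the approximation is uniform on all bounded sets, and compactness is only needed when $p=1$ or $X=\mathcal{C}(\overline{D})$.
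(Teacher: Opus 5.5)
Your proof is correct, and it takes a genuinely different route from the one behind the statement.

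\textbf{The compactness hypothesis.} You are right that $K$ must be compact. Besides your bump example for $\mathcal{C}(\overline{D})$, the closed-and-bounded version also fails for $p=1$. (In your example, choose the bumps pairwise at sup-distance at least $1/2$, so the set really is closed.) For instance, take $L(v)=\int_D g v$ with $g\in L^\infty(D)$ having a unit jump, and let $K$ be the unit ball of $L^1(D)$. The error is then exactly $\norm{g-\kappa}_\infty\geq 1/2$ for every continuous $\kappa$. The paper only invokes Fact~\ref{fact1} for compact $K$ (in Lemma~\ref{lem8}), so nothing downstream is affected.

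\textbf{Comparison with the cited proof.} The paper gives no proof and defers to Lemmas~28 and~30 of \citet{2023_Kovachki_N_j-jmlr_nolmbfsa}, which argue constructively.
In $W^{m,p}$ they write $L(u)=\sum_{|\alpha|\le m}\int_D g_\alpha\,\partial^\alpha u$ with $g_\alpha\in L^q$. They replace each $g_\alpha$ by a nearby $f_\alpha\in\mathcal{C}_c^\infty(D)$ and integrate by parts to get $\kappa=\sum_\alpha(-1)^{|\alpha|}\partial^\alpha f_\alpha$.
In $\mathcal{C}(\overline{D})$ they start from the Riesz representation of $L$ by a measure on $\overline{D}$. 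Roughly, they reduce it to finitely many point evaluations at interior points and then mollify, using the equicontinuity of the compact $K$ at each step.

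You bypass both representation theorems. All you need is that a bounded absolutely convex family of kernel functionals is norming, which only uses density of $\mathcal{C}_c^\infty$ in $L^q$ and approximate identities. The bipolar theorem then upgrades this to weak-$*$ density in the dual ball, and compactness enters exactly once, through the finite net of Step~1.

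What your route buys:
\begin{itemize}
\item a single argument for both spaces;
\item the endpoint $q=\infty$ at no extra cost, whereas there $\mathcal{C}_c^\infty(D)$ is not norm dense in $L^\infty$ and the constructive proof needs a separate compactness argument;
\item together with your reflexivity remark, an exact picture of when merely bounded sets suffice, namely $1<p<\infty$.
\end{itemize}

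What the constructive route buys in exchange is an explicit kernel built from the representers of $L$. For $q<\infty$ it also gives the quantitative bound $\sup_{v\in K}\norm{v}\,\sum_\alpha\norm{g_\alpha-f_\alpha}_q$, valid on any bounded $K$.
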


\begin{fact}[Lemma~22 and 26 in \citet{2023_Kovachki_N_j-jmlr_nolmbfsa}]\label{fact2}
Let $D\subset \R^d$ be a bounded set and
let $\mathcal{A}$ and $\mathcal{U}$ be any one of the Banach spaces $\mathcal{C}(\overline{D})$ or $W^{m,p}(D)$, with $m\geq0$
and $1\leq p<+\infty$. Let $\mathcal{G}\colon \mathcal{A}\to \mathcal{U}$ be a continuous operator, $K\subset \mathcal{A}$
be a compact set and $\varepsilon>0$. Then there exist $J, J^\prime \in \N$ and
\begin{equation*}
R\colon \mathcal{A}\to \R^J,\quad \testf\colon \R^J\to \R^{J^\prime},\quad S\colon \R^{J^\prime}\to \mathcal{U},
\end{equation*}
with $R$ and $S$ linear continuous and $\testf$ continuous, such that
\begin{equation*}
\sup_{v \in K} \norm{\mathcal{G}(v) - (S\circ \testf\circ R)(v)} < \varepsilon.
\end{equation*}
\end{fact}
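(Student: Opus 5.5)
Since Fact~\ref{fact2} is quoted from \citet{2023_Kovachki_N_j-jmlr_nolmbfsa}, it could simply be cited. A self-contained proof would follow the usual encoder/approximator/decoder construction. The only non-soft ingredient is a family of uniformly bounded finite-rank linear approximations of the identity on $\mathcal{A}$. I would proceed in three steps: discretize the input linearly, approximate the resulting finite-dimensional continuous map, and reconstruct the output linearly from finitely many fixed elements of $\mathcal{U}$.

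\textbf{Step 1 (input encoder $R$).} I would first obtain linear finite-rank operators $P_n\colon\mathcal{A}\to\mathcal{A}$ with $\sup_n\norm{P_n}<\infty$ and $P_n v\to v$ for every $v$. For $\mathcal{C}(\overline{D})$ this follows from the bounded approximation property, e.g.\ piecewise-linear interpolation on finer and finer triangulations, or partition-of-unity quasi-interpolants. For $W^{m,p}(D)$ I would extend functions to a box using an extension operator, which requires some regularity of $D$ (as in the cited work). On the box I would use a Schauder basis or convolution followed by truncated Fourier/spline projections.

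Uniform boundedness together with pointwise convergence gives uniform convergence on the compact set $K$. It also makes $\tilde K=K\cup\bigcup_n P_n(K)$ relatively compact. By the Heine--Cantor theorem, $\mathcal{G}$ is uniformly continuous on the compact set $\overline{\tilde K}$. Hence there is an $n$ with $\sup_{v\in K}\norm{\mathcal{G}(v)-\mathcal{G}(P_nv)}<\varepsilon/2$.

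Writing $P_n v=\sum_{i=1}^J \ell_i(v)e_i$ with $\ell_i\in\mathcal{A}^*$, I would set $R(v)=(\ell_i(v))_{i=1}^J$ and $\iota(c)=\sum_i c_i e_i$. Then $\mathcal{G}\circ P_n=(\mathcal{G}\circ\iota)\circ R$, and $h:=\mathcal{G}\circ\iota\colon\R^J\to\mathcal{U}$ is continuous.

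\textbf{Step 2 (output decoder $S$ and map $\testf$).} The set $R(K)\subset\R^J$ is compact, so $h(R(K))$ is compact in $\mathcal{U}$. I would choose points $c_1,\dots,c_{J'}\in R(K)$ such that $|c-c_j|<\delta$ implies $\norm{h(c)-h(c_j)}<\varepsilon/2$, using uniform continuity of $h$ on a compact neighbourhood of $R(K)$. Let $(\rho_j)_{j=1}^{J'}$ be a continuous partition of unity on $R(K)$ subordinate to the balls $B(c_j,\delta)$, extended continuously to $\R^J$ by Tietze's theorem.

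I would then define $\testf(c)=(\rho_j(c))_{j=1}^{J'}$ and $S(w)=\sum_{j=1}^{J'} w_j\,h(c_j)$, which is linear and continuous. For $c\in R(K)$ this gives $\norm{h(c)-S(\testf(c))}\le\sum_j\rho_j(c)\norm{h(c)-h(c_j)}<\varepsilon/2$. Combining with Step 1 through the triangle inequality yields the bound $\varepsilon$.

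\textbf{Main obstacle.} The delicate part is Step 1. Linear finite-rank approximations converging uniformly on compacta need not exist in an arbitrary Banach space (Enflo). Here they must be built concretely for $\mathcal{C}(\overline{D})$ and $W^{m,p}(D)$, which for the Sobolev case requires a boundary regularity assumption on $D$ to have an extension operator. I would first attempt a mollification-plus-truncation construction on the extended domain, controlling the operator norms uniformly. Alternatively, Fact~\ref{fact1} could be used to replace the functionals $\ell_i$ by integral functionals, although it is needed only later for the neural-operator realization and not for Fact~\ref{fact2} itself.
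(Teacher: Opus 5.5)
The paper gives no argument of its own here. It imports the result from Lemmas~22 and~26 of Kovachki et al. There, encoder and decoder are presumably both built from finite-rank approximations of the identity (the approximation property), in $\mathcal{A}$ and in $\mathcal{U}$ respectively, and the concrete spaces are checked to have that property. This would explain why the statement restricts $\mathcal{U}$ as well as $\mathcal{A}$. Your Step~1 is that same ingredient on the input side. Your Step~2 is a genuinely different decoder, and it is correct as written. The partition of unity on $R(K)$ with $S(w)=\sum_j w_j h(c_j)$ needs only uniform continuity of $h$ on $R(K)$ itself, and nothing of $\mathcal{U}$ beyond being normed.

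\textbf{The gap in Step~1.} As written, Step~1 leaves the Sobolev case unproved under the stated hypothesis that $D$ is merely bounded. Your construction needs an extension operator, which the hypotheses do not provide. (Incidentally, truncated Fourier projections are not uniformly bounded when $p=1$; Fejér means or a spline basis would be needed.)

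\textbf{The fix.} The obstacle disappears if you run your Step~2 idea on the input side too. Enflo's example obstructs \emph{linear} finite-rank approximations of the identity. But $S\circ\testf\circ R$ has a nonlinear middle, so $R$ only needs to separate points of $K$ at the scale of $\mathcal{G}$'s modulus of continuity.
\begin{itemize}
\item Pick $\delta>0$ such that $x,y\in K$ and $\norm{x-y}<\delta$ imply $\norm{\mathcal{G}(x)-\mathcal{G}(y)}<\varepsilon$.
\item The set $E=\{(x,y)\in K\times K:\norm{x-y}\ge\delta\}$ is compact. By Hahn--Banach, each pair in $E$ is separated by some $\ell\in\mathcal{A}^*$, and each set $\{(x,y):\ell(x)\neq\ell(y)\}$ is open. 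Hence finitely many functionals $\ell_1,\dots,\ell_J$ cover $E$.
\item With $R=(\ell_i)_{i=1}^J$ you get $\eta:=\min_E\abs{Rx-Ry}>0$ (if $E=\varnothing$, take $R=0$). Therefore $x,y\in K$ and $\abs{Rx-Ry}<\eta$ imply $\norm{x-y}<\delta$.
\item Cover the compact set $R(K)$ by finitely many balls $B(Rx_j,\eta)$ with $x_j\in K$, and take a subordinate partition of unity $(\rho_j)$. Set $\testf=(\rho_j)_j$ and $S(w)=\sum_j w_j\,\mathcal{G}(x_j)$.
\item Your Step~2 estimate then gives $\sup_{v\in K}\norm{\mathcal{G}(v)-S(\testf(Rv))}<\varepsilon$.
\end{itemize}
This proves the fact for arbitrary normed spaces $\mathcal{A},\mathcal{U}$ and any bounded $D$. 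It needs no approximation property and no boundary regularity, so Step~1 can be deleted.
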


In the following we set $D\subset \R^d$ be a bounded set and
\begin{equation*}
\mathcal{A}(D,\R^{n_{0}}) = W^{m,p}(D, \R^{n_{0}})
\quad\text{or}\quad
\mathcal{A}(D,\R^{n_{0}}) = \mathcal{C}(D, \R^{n_{0}}),
\end{equation*}
where the integer $m \geq 0$ and $p \in \left[1,+\infty\right[$.
Moreover we will assume that (by possibly changing the definition slightly) Bregman neural networks 
include constant functions (recall Remark~\ref{rmk:BN+const}). Because of the density result given in the previous section, we can essentially follow the same line of arguments in \citet{2023_Kovachki_N_j-jmlr_nolmbfsa}, but we need to take special care of the different structure of Bregman neural network/operators (in particular in Lemma~\ref{lem:liftphi}).

\begin{lemma}
\label{lem8}
Let $L \in \mathcal{A}^*$ and $K\subset \mathcal{A}$ be a compact set. 
Then there exists $h \in \mathsf{BN}_2(\sigma; \R^d, \R^{n_{0}})_{\lvert D}$ such that
\begin{equation*}
\sup_{v \in K}\; \Big\lvert L(v) - \int_D \langle h(x), v(x)\rangle \,d x\Big\rvert <\varepsilon.
\end{equation*}
\end{lemma}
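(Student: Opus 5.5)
The plan is a two-step approximation: Fact~\ref{fact1} replaces $L$ by a smooth kernel, and the density of Bregman neural networks then replaces that kernel by a network $h$. The error is split into two halves of $\varepsilon/2$.

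\emph{Step 1 (smooth kernel).} Since $K$ is compact, it is closed and bounded in $\mathcal{A}$. For vector-valued inputs, write $L(v)=\sum_{j=1}^{n_0} L_j(v_j)$, where $L_j(w)=L(w e_j)$ is a continuous linear functional on the scalar space ($W^{m,p}(D)$ or $\mathcal{C}(\overline{D})$). The sets $K_j=\{v_j : v\in K\}$ are compact, as images of $K$ under continuous projections. Applying Fact~\ref{fact1} to each $L_j$ on $K_j$ with tolerance $\varepsilon/(2n_0)$ gives $\kappa_j\in\mathcal{C}_c^\infty(D)$. Setting $\kappa=(\kappa_1,\dots,\kappa_{n_0})$, we get
\begin{equation*}
\sup_{v\in K}\Big\lvert L(v)-\int_D\langle \kappa(x),v(x)\rangle\,dx\Big\rvert<\varepsilon/2 .
\end{equation*}

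\emph{Step 2 (replace $\kappa$ by a Bregman network).} The function $\kappa$ extends by zero to a continuous function on $\R^d$. By the density of $\mathsf{BN}_2(\sigma;\R^d,\R^{n_0})$ in $\mathcal{C}(\R^d,\R^{n_0})$ for uniform convergence on compact sets (the Corollary following Theorem~\ref{thm:BNuni}, combined with Remark~\ref{rmk:Bnet}), we may choose $h$ with $\sup_{x\in\overline{D}}\abs{h(x)-\kappa(x)}\le\delta$; here $\overline{D}$ is compact because $D$ is bounded. Then for all $v\in K$,
\begin{equation*}
\Big\lvert\int_D\langle h-\kappa,v\rangle\,dx\Big\rvert\le \delta\int_D\abs{v(x)}\,dx\le \delta\,C\,\sup_{v\in K}\norm{v}_{\mathcal{A}} .
\end{equation*}
The constant $C$ comes from the continuous embedding of $\mathcal{A}$ into $L^1(D)$: for $\mathcal{C}$ use $\abs{D}$, and for $W^{m,p}$ or $L^p$ use Hölder's inequality with $\abs{D}^{1-1/p}$. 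The supremum is finite since $K$ is compact, hence bounded. Choosing $\delta$ small enough makes this term less than $\varepsilon/2$, and the triangle inequality concludes.

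The main obstacle is minor. One must check that the density statement genuinely applies to $\mathsf{BN}_2(\sigma;\R^d,\R^{n_0})$, the version precomposed with $\sigma$ as in \eqref{eq:defBN2}. That is, precomposing with the homeomorphism $\sigma\colon\R^d\to I^d$ must preserve uniform density on the compact set $\overline{D}$. This holds because $\sigma(\overline{D})$ is a compact subset of $I^d$, so density on $\sigma(\overline{D})$ transfers back through composition. The step also needs the uniform, not merely $L^p$, approximation of $\kappa$, which is available since $\kappa$ is continuous.
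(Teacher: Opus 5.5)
Your proposal is correct and follows the paper's proof essentially step for step. You apply Fact~\ref{fact1} componentwise with tolerance $\varepsilon/(2n_0)$, then approximate $\kappa$ uniformly on $\overline{D}$ by a shallow Bregman network, controlling the error by $\sup_{v\in K}\norm{v}_1<+\infty$; your explicit $L^1$-embedding constant and your check that density transfers through the homeomorphism $\sigma$ make precise what the paper leaves implicit.
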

\begin{proof}
The space $\mathcal{A}$ is (isomorphic to) a product space, meaning $\mathcal{A} = \prod_{i=1}^{n_{0}} \mathcal{A}_i$,
where $\mathcal{A}_i$ is a space of real valued functions on $D$.
Set $K_i = \mathrm{pr}_i(K)$, which is a compact set of $\mathcal{A}_i$, so that $K \subset \prod_{i=1}^{n_{0}} K_i$.
Then 
 $L\colon \mathcal{A}\to \R$ can be written as $L v = \sum_{i=1}^{n_{0}} L_i v_i$ with $L_i\colon \mathcal{A}_i\to \R$.
By Fact~\ref{fact1}, for every $i=1,\dots, n_{0}$, there exists $\kappa_i \in \mathcal{C}_c(D)$ such that
\begin{equation*}
\sup_{v_i \in K_i} \Big\lvert L_i v_i - \int_D \kappa_i v_i\,d x\Big\rvert<\frac{\varepsilon}{2n_{0}}.
\end{equation*}
Let $\kappa \in \mathcal{C}_c(D,\R^{n_{0}})$ with components $\kappa_i \in \mathcal{C}_c(D)$. Then
\begin{equation*}
\Big\lvert L v - \int_D \langle \kappa(x), v(x)\rangle\,d x \Big\rvert
= \Big\lvert \sum_{i=1}^{n_{0}} L_i v_i - \sum_{i=1}^{n_{0}} \int_D \kappa_i v_i\,d x \Big\rvert
\leq \sum_{i=1}^{n_{0}}\abs{L_i v_i - \int_D \kappa_i v_i\,d x }<\frac{\varepsilon}{2}.
\end{equation*}
Since $\mathcal{A}\subset L^1(D, \R^{n_{0}})$ we set $\gamma = \sup_{v \in K} \norm{v}_1<+\infty$.
Moreover, since Bregman shallow neural networks are dense in the space of continuous functions (Remark~\ref{rmk:Bnet}), there exists $h \in \mathsf{BN}_2(\sigma; \R^d, \R^{n_{0}})_{\lvert \overline{D}}$ such that
$\norm{h - \kappa}_{\infty} \leq \varepsilon/(2\gamma)$ and hence, for every $v \in K$,
\begin{equation*}
\Big\lvert \int_D \langle \kappa,v\rangle d\,x - \int_D \langle h,v\rangle d\,x  \Big\rvert
= \Big\lvert \int_D \langle  \kappa - h, v\rangle\,d x \Big\rvert\leq \int_D \abs{\kappa(x)-h(x)}\abs{v(x)}\,d\ x \leq \norm{\kappa - h}_{\infty} \norm{u}_1
<\frac{\varepsilon}{2}.
\end{equation*}
Therefore,
\begin{equation*}
\Big\lvert L v - \int_D \langle h, v\rangle\,d x\Big\rvert
\leq \Big\lvert L v - \int_D \langle \kappa, v\rangle \,d x\Big\rvert + \Big\lvert\int_D \langle \kappa, v\rangle \,d x - \int_D \langle h, v\rangle \,d x\Big\rvert <\varepsilon
\end{equation*}
and the statement follows.
\end{proof}

\begin{lemma}
\label{lem:R}
Let $R\colon \mathcal{A}\to \R^J$ be a linear continuous operator, $K\subset \mathcal{A}$ a compact set and $\varepsilon>0$.
Then there exists a linear continuous operator $R^{\mathsf{BN}}\colon \mathcal{A}\to \R^J$ acting as
\begin{equation*}
v \mapsto R^{\mathsf{BN}} v = \int_D h(y)v(y)\,d y,
\end{equation*}
where $h \in \mathsf{BN}_2(\sigma; \R^d, \R^{J\times n_{0}})_{\lvert \overline{D}}$, such that
\begin{equation*}
\sup_{v \in K}\; \lvert R v-   R^{\mathsf{BN}} v\rvert < \varepsilon.
\end{equation*}
\end{lemma}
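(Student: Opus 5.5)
The plan is to reduce the statement to Lemma~\ref{lem8}, applied once to each component of $R$. Since $R\colon \mathcal{A}\to \R^J$ is linear and continuous, each component $R_j = \langle e_j, R\,\cdot\,\rangle$, for $j=1,\dots,J$, is a continuous linear functional on $\mathcal{A}$, i.e.\ $R_j \in \mathcal{A}^*$. Applying Lemma~\ref{lem8} to $L=R_j$, the compact set $K$, and tolerance $\varepsilon/\sqrt{J}$ gives $h_j \in \mathsf{BN}_2(\sigma; \R^d, \R^{n_{0}})_{\lvert \overline{D}}$ such that
\begin{equation*}
\sup_{v\in K}\Big\lvert R_j v - \int_D \langle h_j(y), v(y)\rangle\,dy\Big\rvert < \frac{\varepsilon}{\sqrt{J}}.
\end{equation*}

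Next, I assemble the matrix-valued kernel $h\colon \overline{D}\to \R^{J\times n_{0}}$ whose $j$-th row is $h_j^\top$, and set $R^{\mathsf{BN}}v = \int_D h(y)v(y)\,dy$. This operator is linear, and it is continuous on $\mathcal{A}$. Indeed, $h$ is bounded on $\overline{D}$ because Bregman networks are continuous on $\R^d$ and $\overline{D}$ is compact, and $\mathcal{A}$ embeds continuously in $L^1(D,\R^{n_0})$. Hence $\lvert R^{\mathsf{BN}}v\rvert \le \norm{h}_\infty \norm{v}_1 \le C\norm{v}_{\mathcal{A}}$. Summing the squared componentwise errors over $j$ gives $\sup_{v\in K}\lvert Rv - R^{\mathsf{BN}}v\rvert < \varepsilon$ in the Euclidean norm.

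The remaining point is to check that $h$ belongs to $\mathsf{BN}_2(\sigma;\R^d,\R^{J\times n_0})_{\lvert\overline{D}}$. I identify $\R^{J\times n_0}$ with $\R^{Jn_0}$, so $h$ corresponds to the stacked vector of all components $(h_j)_i$, each of which lies in $\mathsf{BN}_2(\sigma;\R^d)$, precomposed with $\sigma$ as in Remark~\ref{rmk:Bnet}. The product identity \eqref{cor:Bnet} of the Corollary then shows that a vector of shallow scalar Bregman networks is itself a single shallow vector-valued Bregman network, obtained by stacking $M$, $W$, $b$ and taking a block-diagonal $Q$. This stacking is the only mildly delicate step: all components must share the common precomposition by $\sigma$, and the right-stochastic structure of the stacked $M$ must be preserved. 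Both are immediate from the block construction in the Corollary's proof.
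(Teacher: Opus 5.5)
Your proposal is correct and follows the same route as the paper. Like the paper, you apply Lemma~\ref{lem8} to each component $R_j$ with tolerance $\varepsilon/\sqrt{J}$, stack the resulting $h_j^\top$ as the rows of $h$, and sum the squared componentwise errors. You also spell out two points the paper calls clear: the continuity of $R^{\mathsf{BN}}$ via $\mathcal{A}\hookrightarrow L^1(D,\R^{n_0})$, and the membership of $h$ in $\mathsf{BN}_2(\sigma;\R^d,\R^{J\times n_0})_{\lvert\overline{D}}$ via the Corollary's block construction.
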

\begin{proof}
Consider the components $R_j \colon \mathcal{A}\to \R$, $j=1,\dots, J$. Then $R_j \in \mathcal{A}^*$,
and by Lemma~\ref{lem8}
\begin{equation*}
\exists h_j \in \mathsf{BN}_2(\sigma; \R^d, \R^{n_{0}})_{\lvert D}\quad\text{s.t.}\quad
\sup_{v \in K} \Big\lvert R_j v - \int_D \langle h_j(x), v(x)\rangle\,d x \Big\rvert \leq \frac{\varepsilon}{\sqrt{J}}.
\end{equation*}
Let $h\colon \R^d \to \R^{J\times n_{0}}$ with
\begin{equation*}
h(x) =
\begin{bmatrix}
h_1(x)^\top\\
\vdots\\
h_J(x)^\top
\end{bmatrix}.
\end{equation*}
Clearly $h \in \mathsf{BN}_2(\sigma; \R^d, \R^{J\times n_{0}})_{\lvert D}$ and
\begin{equation*}
\forall\, v \in K\colon\ \Big\lvert R v - \int_D h(x) v(x)\,d x \Big\lvert^2
= \sum_{i=1}^{J} \Big\lvert  R_j v - \int_D \langle h_j(x),v(x)\rangle\,d x \Big\rvert^2 <\varepsilon^2
\end{equation*}
and the statement follows.
\end{proof}

\begin{remark}
\label{rmk:liftR}
Both the linear continuous operators $R$ and $R^{\mathsf{BN}}$ in Lemma~\ref{lem:R}
can be canonically lifted to Lebesgue spaces as follows.
\begin{align*}
\mathcal{R}&\colon\mathcal{A}\to L^p(D,\R^{J}),\quad \mathcal{R}v = (R v)\mathds{1}_D\\
\mathcal{R}^{\mathsf{BN}}&\colon\mathcal{A}\to L^p(D,\R^{J}),\quad \mathcal{R}^{\mathsf{BN}}v = (R^{\mathsf{BN}} v)\mathds{1}_D,
\end{align*}
where $\mathds{1}_D$ denotes the constant function $x\mapsto 1$ on $D$.
Moreover $\mathcal{R}^{\mathsf{BN}}$ is actually an integral operator. Indeed
if we define the kernel
\begin{equation*}
\kappa_h\colon D\times D\to \R^{J\times n_{0}},\quad \kappa_h(x,y) =  h(y)
\end{equation*}
we have 
\begin{equation*}
(\mathcal{R}^{\mathsf{BN}} v)(x) = R^{\mathsf{BN}} v = \int_D  h(y) v(y)\,dy = \int_D \kappa_h(x,y) v(y)\,dy.
\end{equation*}
\end{remark}

The following result is the analogue of \citet[Lemma~35]{2023_Kovachki_N_j-jmlr_nolmbfsa}
and establishes that a finite dimensional Bregman neural network can be canonically lifted  in Lebesgue spaces.
However, here we need to take care of the domain of the Bregman operator layers.

\begin{lemma}
\label{lem:liftphi}
Let $\testf \in \mathsf{BN}_T(\sigma; \R^J, \R^{J^\prime})$, $D\subset \R^d$ a nonempty open set and $p \in [1,+\infty]$.
Then there exists a neural operator 
\begin{equation*}
\mathcal{N}^{\mathsf{BN}}\colon L^p(D, \R^J) \to L^p(D, \R^{J^\prime}),
\quad \mathcal{N}^{\mathsf{BN}} = \mathcal{K}_T\circ \mathcal{L}_{T-1}\circ \cdots \circ \mathcal{L}_1\circ \sigma,
\end{equation*}
where, for every $t=1,\dots, T-1$,
\begin{equation*}
\mathcal{L}_t(v) = \sigma (\sigma^{-1}(\mathcal{M}_t v) + \mathcal{K}_t v + b_t)
\end{equation*}
and such that the linear integral operators $\mathcal{M}_t$ and $\mathcal{K}_t$ and the functions $b_t$
are defined (parametrized) by finite dimensional Bregman shallow neural networks and
\begin{equation*}
\forall\, w \in \R^J\colon\ \mathcal{N}^{\mathsf{BN}}(w \mathds{1}_D) = \testf(w)\mathds{1}_D,
\end{equation*}
where $\mathds{1}_D$ denotes the constant function $x\mapsto 1$ on $D$.
\end{lemma}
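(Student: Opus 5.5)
Fix a representation of $\testf$ as in \eqref{eq:defBN2}, that is $\testf = W_T\circ L_{T-1}\circ\dots\circ L_1\circ\sigma$ with $L_t(x) = \sigma(\sigma^{-1}(M_t x) + W_t x + b_t)$, where $M_t$ is right stochastic, $W_t\in\R^{n_t\times n_{t-1}}$, $b_t\in\R^{n_t}$, $n_0=J$, and $W_T\in\R^{J'\times n_{T-1}}$. The plan is to lift each finite-dimensional layer to a layer acting pointwise on functions, realized through integral operators with kernels given by (constant) Bregman shallow networks. We then check that constant functions are mapped to constant functions with the correct value.

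\textbf{Step 1: the lifted operators.} For $v\in L^p(D,\R^{n_{t-1}})$ set
\[
(\mathcal{M}_t v)(x) = M_t v(x),\qquad (\mathcal{K}_t v)(x) = W_t v(x), \qquad b_t(x) = b_t .
\]
These are linear operators of the form of Remark~\ref{rmk:linearpart} with pure point kernel $\kappa_t^{p} = M_t\delta_x$ (respectively $W_t\delta_x$), that is, with $\varphi_t=\mathrm{Id}$. The matrices $M_t$, $W_t$ and the vector $b_t$ are constant maps $x\mapsto M_t$, etc. Constant maps belong to the Bregman network class because, by the standing convention following Remark~\ref{rmk:BN+const}, Bregman networks include constants. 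This is how the requirement that the operators be ``parametrized by Bregman shallow networks'' is met. As an alternative, one could use the integral form of Remark~\ref{rmk:liftR} with kernel $\kappa(x,y)=W_t/\abs{D}$, which produces averages. I prefer the pointwise local choice, since it makes the lifted map act on every function, not only on constants. The last map $\mathcal{K}_T$ is likewise $(\mathcal{K}_T v)(x)=W_T v(x)$, and the first $\sigma$ acts as a Nemytskii operator.

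\textbf{Step 2: well-definedness on domains (the main obstacle).} This is the delicate point that the paper flags. We must ensure that $\sigma^{-1}(\mathcal{M}_t v)$ makes sense, i.e.\ that $\mathcal{M}_t v(x)\in I^{n_t}$ for a.e.\ $x$, and that every intermediate function stays in $L^p$. The argument proceeds by induction. The function $\sigma\circ v$ takes values in $I^{J}$, and since $\sigma$ is bounded it lies in $L^\infty\subset L^p$ (recall $D$ is bounded). Suppose $u$ takes values in $I^{n_{t-1}}$. Each row of $M_t$ is a convex combination, so $M_t u(x)\in I^{n_t}$, by the remark after \eqref{eq:BN}. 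Then $\mathcal{L}_t(u)=\sigma(\cdots)$ again takes values in $I^{n_t}$ and is bounded and measurable. This keeps the whole composition inside functions valued in $I$. Measurability is preserved because $\sigma$ and $\sigma^{-1}$ are continuous. Finally, $\mathcal{K}_T$ maps bounded functions to $L^p(D,\R^{J'})$. The case $p=+\infty$ needs no extra care.

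\textbf{Step 3: action on constant functions.} Since every operator in the composition acts pointwise in $x$, induction on $t$ gives
\[
(\mathcal{L}_t\circ\dots\circ\mathcal{L}_1\circ\sigma)(w\mathds{1}_D)(x) = (L_t\circ\dots\circ L_1\circ\sigma)(w)
\]
for all $x$. Applying $\mathcal{K}_T$ then yields $\mathcal{N}^{\mathsf{BN}}(w\mathds{1}_D)=\testf(w)\mathds{1}_D$. If one instead insists on genuine integral kernels as in Remark~\ref{rmk:liftR}, the same computation goes through for constant inputs using $\int_D \abs{D}^{-1}\,dy=1$. In that variant the only additional check is Step 2: averaging preserves values in $I$ because $I$ is convex and $M_t$ is stochastic.
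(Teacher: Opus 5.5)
Your argument is correct, but your main construction is not the paper's. The paper uses exactly the averaging variant you sketch at the end. It takes the constant kernels $\mu_t=\abs{D}^{-1}M_t\mathds{1}_{D\times D}$ and $\kappa_t=\abs{D}^{-1}K_t\mathds{1}_{D\times D}$. Then $\mathcal{M}_tv=(M_t\bar v)\mathds{1}_D$ and $\mathcal{K}_tv=(K_t\bar v)\mathds{1}_D$, and every layer collapses an $I$-valued input to the constant $(L_t\bar v)\mathds{1}_D$. The domain check reduces to two facts: the mean of an $I$-valued function lies in $I^{n_{t-1}}$, and a right-stochastic $M_t$ maps $I$-valued vectors to $I$-valued vectors.

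Your pointwise lift uses the pure-point kernels $M_t\delta_x$, $W_t\delta_x$ of Remark~\ref{rmk:linearpart}, and it gives something stronger: $\mathcal{N}^{\mathsf{BN}}(v)=\testf\circ v$ for every $v$. It is also closer to the implemented layer \eqref{eq:parametric_bregman}. You are more careful than the paper about measurability and $L^p$-membership of the intermediate functions.

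The paper's route buys two things in return.
\begin{itemize}
\item Its kernels are literally (constant) Bregman shallow networks on $D\times D$, in line with the lifts of $R^{\mathsf{BN}}$ and $S^{\mathsf{BN}}$ in Remarks~\ref{rmk:liftR} and~\ref{rmk:liftS}. A Dirac kernel is ``parametrized'' only through its matrix coefficient.
\item $\sigma^{-1}(\mathcal{M}_tv)$ is a constant, hence lies in $\V_t^*=L^q$, so $\mathcal{M}_tv\in\dom\partial\Phi_t$ in the full sense of Fact~\ref{fact:integralfunctionals_extended}. In your version, $\sigma^{-1}(M_tu(x))$ need not be $q$-integrable when the values of $u$ accumulate at the endpoints of $I$. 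The Bregman-proximal reading of your layers is therefore guaranteed only on a smaller set, e.g.\ functions valued in a compact subset of $I$, which includes the constants.
\end{itemize}
Neither point affects the lemma as stated, nor its use in Theorem~\ref{thm:main}, where only constant inputs occur.

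One small correction: your remark that the pointwise choice ``makes the lifted map act on every function'' is not a real distinction. The averaged operator is also defined on every $I$-valued function; it just sends all of them to constants.
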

\begin{proof}
By definition
\begin{equation*}
\testf = K_T \circ L_{T-1}\circ\cdots L_1\circ\sigma,
\quad L_t (w) = \sigma (\sigma^{-1}(M_t w) + K_t w + b_t),
\end{equation*}
where $\sigma\colon \R\to I$ and, for $t=1, \dots, T$, $K_t \in \R^{n_{t}\times n_{t-1}}$ and $b_t \in \R^{n_{t}}$,
and for every $t=1,\dots, T-1$, $M_t\in \R^{n_{t}\times n_{t-1}}$, is right stochastic, $n_{0} = J$ and $n_{T} = J^\prime$.
Since, we are assuming that Bregman neural networks contain constant functions (recall the sentence before Lemma~\ref{lem8}), we have
\begin{itemize}
\item $b_t \mathds{1}_D \in \mathsf{BN}_2(\sigma; \R^d, \R^{n_{t}})_{\lvert \overline{D}}\subset \mathcal{C}(\overline{D}, \R^{n_{t}})$
\item $\kappa_t = \dfrac{1}{\abs{D}} K_t \mathds{1}_{D\times D} \in \mathsf{BN}_2(\sigma; \R^d\times \R^d, \R^{n_{t}\times n_{t-1}})_{\lvert \overline{D}\times \overline{D}}\subset \mathcal{C}(\overline{D}\times\overline{D}, \R^{n_{t}\times n_{t-1}})$ and 
\begin{align*}
\mathcal{K}_t \colon L^p(D,\R^{n_{t-1}}) &\to L^q(D,\R^{n_{t}})\\
v &\mapsto(\mathcal{K}_t v)(x) = \int_D \kappa_t(x,y) v(y)\,d y = \int_D \frac{1}{\abs{D}} K_t v(y)\, dy = K_t \bar{v},
\end{align*}
where $\bar{v}$ is the mean value of $v$. So that $\mathcal{K}_t v = (K_t \bar{v}) \mathds{1}_D$ is a constant function.
\item $\mu_t = \dfrac{1}{\abs{D}} M_t \mathds{1}_{D\times D} \in \mathsf{BN}_2(\sigma; \R^d\times \R^d, \R^{n_{t}\times n_{t-1}})_{\lvert \overline{D}\times \overline{D}}\subset \mathcal{C}(\overline{D}\times\overline{D}, \R^{n_{t}\times n_{t-1}})$ 
\begin{align*}
\mathcal{M}_t\colon L^p(D,\R^{n_{t-1}})&\to L^p(D,\R^{n_{t}})\\
v&\mapsto (\mathcal{M}_t v)(x) = \int_D \mu_t(x,y) v(y)\,d y = \int_D \frac{1}{\abs{D}} M_t v(y)\, dy = M_t \bar{v}.
\end{align*}
Moreover, since $M_t$ is right stochastic, if the function $v$ has range (almost everywhere) in $I^{n_{t-1}}$, we have that
 $ \bar{v} \in I^{n_{t-1}}\ \Rightarrow\ M_t \bar{v} \in I^{n_{t}}$,
Hence
\begin{equation*}
\mathcal{M}_t(\dom \partial \Phi_{t-1})\subset \dom \partial \Phi_{t}.
\end{equation*}
Indeed, recall that $\Phi_t\colon L^{p}(D,\R^{n_{t}})\to \left]-\infty,+\infty\right]$ and
\begin{equation*}
\forall\, v \in L^{p}(D,\R^{n_{t}})\colon\  \Phi_t (v) = \int_D \phi_t(v(x))\,d x,\quad \forall\, w \in \R^{n_{t}}\colon\ \phi_t(w)=\sum_{i=1}^{n_{t}} \psi(w_i)
\end{equation*}
with $\psi\colon \R\to \left]-\infty,+\infty\right]$ Legendre, $\inte(\dom \psi) = I$, $\dom \psi^* = \R$, $\sigma = (\psi^*)^\prime$, and $\sigma^{-1} = \psi^\prime$, so that
$\dom \partial\Phi_t = \big\{ v \in L^p(D,\R^{n_{t}}) \,\vert\,  \text{for a.e. }x \in D,\ v(x) \in I^{n_{t}}\big\}$ and for $v \in \dom \Phi_t$,
$\partial \Phi_t(v) = \{\nabla\phi\circ v\}$.
\end{itemize}
It follows from the previous considerations that
if $v \in \dom \partial \Phi_{t-1}\subset L^p(D, \R^{n_{t-1}})$, we have
$\mathcal{K}_t(v) = (K_t \bar{v}) \mathds{1}_D$
and $\mathcal{M}_t v = (M_t \bar{v}) \mathds{1}_D$, 
and hence
\begin{equation*}
\mathcal{L}_t(v) = \sigma ( \sigma^{-1}(\mathcal{M}_t v) + \mathcal{K}_t v + b_t \mathds{1}_D )(x)
= \sigma (\sigma^{-1}(M_t \bar{v}) + K_t \bar{v} + b_t ).
\end{equation*}
Note that here $\mathcal{V}_t = L^p(D,\R^{n_{t}})$.
Thus, we have
\begin{equation*}
\mathcal{L}_t(v) = (L_t \bar{v}) \mathds{1}_D,
\end{equation*}
meaning that the operator layer $\mathcal{L}_t$ transforms any function in $L^p(D,\R^{n_{t}})$
into a constant function, where the constant is the mean value of the function, transformed via the standard
(finite dimensional) Bregman layer $L_t$. 
In particular, if $w \in \R^J$, we have
\begin{align*}
\mathcal{L}_1(\sigma (w \mathds{1}_D)) &= \mathcal{L}_1(\sigma (w) \mathds{1}_D) = L_1(\sigma(w))\mathds{1}_D\\
\mathcal{L}_2(\mathcal{L}_1(\sigma (w \mathds{1}_D))) &= \mathcal{L}_2(L_1(\sigma(w))\mathds{1}_D) = L_2(L_1(\sigma(w)))\mathds{1}_D,
\end{align*}
and so on. Therefore, if we set
\begin{equation*}
\mathcal{N}^{\mathsf{BN}} = \mathcal{K}_T \circ \mathcal{L}_{T-1}\circ\cdots \circ \mathcal{L}_1\circ \sigma,
\end{equation*}
the  statement follows. 
\end{proof}

\begin{remark}
\label{rmk:liftS}
Let $S\colon \R^{J^\prime}\to \mathcal{U}(D,\R^k)$ be linear (and continuous)
and set
\begin{equation*}
\forall\, i=1, \dots, J^\prime\colon\ s_j = S e_j \in \mathcal{U},
\end{equation*}
where $(e_j)_{1 \leq j \leq J^\prime}$ is the canonical basis of $\R^{J^\prime}$. Define
the function $s\colon D\to \R^{k\times J^\prime}$, with $s(x) = [ s_1(x)\, \cdots \, s_J^{\prime}(x) ]$,
which has the $s_j$'s as columns. Then
\begin{equation*}
\forall\, w \in \R^{J^\prime}\colon\ S w = S\Big(\sum_{j=1}^{J^\prime} w_j e_j\Big) = \sum_{j=1}^{J^\prime} w_j s_j\ \ \Rightarrow\ \
(S w)(x) = \sum_{j=1}^{J^\prime} w_j s_j(x) = s(x) w.
\end{equation*}
Thus, the action of $S$ can be represented by a matrix-valued function with columns in $\mathcal{U}$.
Moreover, the linear operator $S$ can be lifted to a linear integral operator from $L^p(D,\R^{J^\prime})$ to $\mathcal{U}$.
Indeed if we define the kernel
\begin{equation*}
\kappa_s\colon D\times D\to \R^{k\times J^\prime},\quad \kappa_s(x,y) = \frac{1}{\abs{D}}s(x),
\end{equation*}
for every $v \in L^p(D,\R^{J^\prime})$, we have
\begin{equation*}
(\mathcal{S} v)(x) = \int_D \kappa_s(x,y) v(y)\,d y = \int_D \frac{1}{\abs{D}}s(x) v(y)\,dy = s(x) \bar{v},
\end{equation*}
where $\bar{v}$ is the mean value of $v$. In the end $\mathcal{S}\colon L^p(D,\R^{J^\prime})\to \mathcal{U}$
and
\begin{equation*}
\forall\, v \in L^p(D,\R^{J^\prime})\colon\ \mathcal{S} v = S \bar{v},
\end{equation*}
and hence, for every $w \in \R^{J^\prime}$, $\mathcal{S}(w\mathds{1}_D) = S w$, meaning that $\mathcal{S}$
is actually an extension of $S$ to the Lebesgue space $L^p(D,\R^{J^\prime})$.
\end{remark}

\begin{lemma}
\label{lem:S}
Let $S\colon \R^{J^\prime}\to \mathcal{U}(D,\R^k)$ be linear (and continuous). Let $K\subset \R^{J^\prime}$ be a compact set
and $\varepsilon>0$.
Then there exists a function $h \in \mathsf{BN}_2(\sigma; \R^d, \R^{ k\times J^\prime})_{\lvert D}$
so that for the corresponding linear operator $S^{\mathsf{BN}}\colon \R^{J^\prime}\to \mathcal{U}$ defined as
\begin{equation*}
\forall\, w \in \R^{J^\prime}\colon\ (S^{\mathsf{BN}} w)(x) =\sum_{i=1}^{J^\prime}w_j h_j(x)= h(x)w,
\end{equation*}
according to Remark~\ref{rmk:liftS}, we have
\begin{equation*}
\sup_{w \in K} \lVert S w - S^{\mathsf{BN}} w\rVert_{\mathcal{U}}<\varepsilon.
\end{equation*}
\end{lemma}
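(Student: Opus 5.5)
The plan is to approximate each column $s_j = S e_j \in \mathcal{U}(D,\R^k)$ separately by a restricted shallow Bregman network, and then assemble the columns into the matrix-valued function $h$. By Remark~\ref{rmk:liftS}, $(Sw)(x) = s(x)w = \sum_{j} w_j s_j(x)$. Set $\gamma = \sup_{w\in K}\abs{w}_1 <+\infty$, which is finite since $K$ is compact. If we find $h_j \in \mathsf{BN}_2(\sigma;\R^d,\R^k)_{\lvert D}$ with $\norm{s_j - h_j}_{\mathcal{U}} < \varepsilon/(\gamma+1)$ for each $j$, the triangle inequality gives, for every $w\in K$,
\[
\norm{Sw - S^{\mathsf{BN}}w}_{\mathcal{U}} \leq \sum_{j=1}^{J^\prime} \abs{w_j}\,\norm{s_j-h_j}_{\mathcal{U}} < \varepsilon .
\]

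The approximation of a single $s_j$ is split by the choice of $\mathcal{U}$.

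\textbf{Case $\mathcal{U}=\mathcal{C}(\overline{D},\R^k)$.} Extend $s_j$ to a continuous function on $\R^d$ by Tietze's theorem applied componentwise. The density of $\mathsf{BN}_2(\sigma;\R^d,\R^k)$ in $\mathcal{C}(\R^d,\R^k)$ for uniform convergence on compacts (Remark~\ref{rmk:Bnet} and the vectorial corollary) then gives uniform approximation on the compact set $\overline{D}$.

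\textbf{Case $\mathcal{U}=L^p(D,\R^k)$.} Use the preceding lemma on the density of $\mathsf{BN}_T(\sigma;\R^d,\R^k)_{\lvert\overline D}$ in $L^p$ with $T=2$. Its proof uses only the density of $\mathsf{BN}_2$ in continuous functions, so it applies directly.

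Finally, assemble $h(x) = [h_1(x)\,\cdots\,h_{J^\prime}(x)]$ into a function with values in $\R^{k\times J^\prime}\cong\R^{kJ^\prime}$. Membership of $h$ in $\mathsf{BN}_2(\sigma;\R^d,\R^{k\times J^\prime})$ follows from the product identity \eqref{cor:Bnet}: each scalar entry of $h$ lies in $\mathsf{BN}_2(\sigma;\R^d)$, and the block construction in the proof of that corollary stacks finitely many shallow Bregman networks into one. One detail needs checking here. The precomposition with $\sigma$ from Remark~\ref{rmk:Bnet} must be the same for all entries, and it is, since every entry is built on the same input map $x\mapsto\sigma(x)$. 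If constants are needed, Remark~\ref{rmk:BN+const} supplies them.

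The main obstacle is light. The construction must give $h\in\mathsf{BN}_2$ itself, not merely a finite sum of such networks, and this is exactly what \eqref{cor:Bnet} guarantees. The other point to handle is that the uniform-on-compacts density has to be transferred to the relevant norm of $\mathcal{U}$. For $\mathcal{C}(\overline D)$ this is immediate because $\overline D$ is compact. For $L^p$ it goes through the bound $\norm{f}_p\le \abs{D}^{1/p}\norm{f}_\infty$ already used in the preceding lemma.
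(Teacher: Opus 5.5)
Your proposal is correct. The paper states Lemma~\ref{lem:S} without proof, leaving it as the evident analogue of Lemma~\ref{lem:R}, and your argument is exactly the componentwise-approximate-then-stack pattern the paper uses there: approximate each column $s_j=Se_j$ by a shallow Bregman network in the norm of $\mathcal{U}$ (uniformly on $\overline{D}$, or in $L^p$ via the preceding density lemma), control the error on $K$ through $\sup_{w\in K}\abs{w}_1$, and stack the $h_j$ into one network via \eqref{cor:Bnet}.
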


Finally we are ready for the proof of Theorem~\ref{thm:main}.

\begin{proof}[Proof of Theorem~\ref{thm:main}]
It follows from Fact~\ref{fact2} that there exist
 $J, J^\prime \in \N$ and
\begin{equation*}
R\colon \mathcal{A}\to \R^J,\quad \testf\colon \R^J\to \R^{J^\prime},\quad S\colon \R^{J^\prime}\to \mathcal{U},
\end{equation*}
with $R$ and $S$ linear continuous and $\testf$ continuous, such that
\begin{equation*}
\sup_{v \in K} \norm{\mathcal{G}(v) - (S\circ \testf\circ R)(v)} < \varepsilon.
\end{equation*}
Now, taking advantage of the previous lemmas we want to replace the operators $R$ and $S$
with analogue operators depending on shallow Bregman neural networks, and the function $\testf$
with a Bregman neural network. 
It follows from Lemma~\ref{lem:R} that for every $n \in \N$
there exist
\begin{equation*}
R_n^{\mathsf{BN}}\colon \mathcal{A}\to \R^J\ \text{linear continuous operator such that}\ 
\sup_{v \in K} \abs{R v- R_n^{\mathsf{BN}} v}<\frac{1}{n+1},
\end{equation*}
where $R_n^{\mathsf{BN}}$ depends on a Bregman shallow network $h_n$ as specified in Lemma~\ref{lem:R}.
Clearly this implies that $\lim_{n\to +\infty} R_n^{\mathsf{BN}} v = R v$ uniformly on $K$, so that the set
\begin{equation*}
K_1:=R(K) \cup \bigcup_{n \in \N} R^{\mathsf{BN}}_n(K) \subset \R^J
\end{equation*}
is compact (see \citet[Lemma~21]{2023_Kovachki_N_j-jmlr_nolmbfsa}). Since $\testf$ is continuous, it is uniformly continuous on $K_1$, hence given $\varepsilon>0$
there exists $\delta>0$ such that
\begin{equation*}
\forall\, w,w^\prime \in K_1\colon\ \abs{w-w^\prime}<\delta\ \Rightarrow\ \abs{\testf(w)-\testf(w^\prime)}<\frac{\varepsilon}{3 \norm{S}}.
\end{equation*}
Moreover, there exists $\testf^{\mathsf{BN}} \in \mathsf{BN}_2(\sigma; \R^J, \R^{J^\prime})$
such that
\begin{equation*}
\sup_{w \in K_1} \abs{\testf(w)- \testf^{\mathsf{BN}}(w)}<\frac{\varepsilon}{3\norm{S}}.
\end{equation*}
Let's take $n\in \N$ such that $1/(n+1)<\delta$. Then, 
\begin{equation*}
\forall\, v \in K\colon\ \ R v, R_n^\mathsf{BN} v \in K_1\ \text{and}\ \abs{R v - R_n^{\mathsf{BN}} v} <\frac{1}{n+1}<\delta\ \Rightarrow\ 
\abs{\testf(R v)- \testf(R_n^{\mathsf{BN}} v)}<\frac{\varepsilon}{3\norm{S}}.
\end{equation*}
Finally, since $\testf^{\mathsf{BN}}(K_1)$ is compact, 
by Lemma~\ref{lem:S}, there exist $S^{\mathsf{BN}}\colon \R^{J^\prime}\to \mathcal{U}$
such that
\begin{equation*}
\sup_{w \in \testf^{\mathsf{BN}}(K_1)}\lVert S w - S^{\mathsf{BN}} w\rVert_{\mathcal{U}}<\frac{\varepsilon}{3}.
\end{equation*}
Therefore, for every $v \in K$ we have
\begin{align*}
\norm{S(\testf(R v)) - S^{\mathsf{BN}}(\testf^{\mathsf{BN}}(R_n^{\mathsf{BN}} v)) }_{\mathcal{U}}
&\leq \norm{S(\testf(R v)) - S(\testf(R_n^{\mathsf{BN}} v)) }_{\mathcal{U}}
 + \norm{S(\testf(R_n^{\mathsf{BN}} v)) -S(\testf^{\mathsf{BN}}(R_n^{\mathsf{BN}} v))}_{\mathcal{U}}\\[0.7ex]
&\qquad+ \norm{S(\testf^{\mathsf{BN}}(R_n^{\mathsf{BN}} v)) - S^{\mathsf{BN}}(\testf^{\mathsf{BN}}(R_n^{\mathsf{BN}} v))}_{\mathcal{U}}\\
& \leq \norm{S} \abs{\testf(R v) - \testf^{\mathsf{BN}}(R_n^{\mathsf{BN}} v)}
 + \norm{S}\abs{\testf(R_n^{\mathsf{BN}} v) -\testf^{\mathsf{BN}}(R_n^{\mathsf{BN}} v)}\\[0.7ex]
&\qquad+ \norm{S(\testf^{\mathsf{BN}}(R_n^{\mathsf{BN}} v)) - S^{\mathsf{BN}}(\testf^{\mathsf{BN}}(R_n^{\mathsf{BN}} v))}_{\mathcal{U}}\\[0.7ex]
& <\frac{\varepsilon}{3}+\frac{\varepsilon}{3}+\frac{\varepsilon}{3} = \varepsilon.
\end{align*}
In the end, for every $v \in K$,
\begin{equation*}
\norm{\mathcal{G}(v) - S^{\mathsf{BN}}(\testf^{\mathsf{BN}}(R_n^{\mathsf{BN}} v))}_{\mathcal{U}}
 \leq \norm{\mathcal{G}(v) - S(\testf(R v))}_{\mathcal{U}} + 
 \norm{S(\testf(R v)) - S^{\mathsf{BN}}(\testf^{\mathsf{BN}}(R_n^{\mathsf{BN}} v)) }_{\mathcal{U}} < 2 \varepsilon.
\end{equation*}
Now in order to conclude the proof, it is sufficient to lift the operators $R^{\mathsf{BN}}$ and $S^{\mathsf{BN}}$
 to Lebesgue spaces, as described in Remark~\ref{rmk:liftR} and
Remark~\ref{rmk:liftS}, and the function $\testf^{\mathsf{BN}}$ to Bregman neural operator as described in 
Lemma~\ref{lem:liftphi} and recognize that
\begin{equation*}
\mathcal{S}^{\mathsf{BN}}\circ \mathcal{N}^{\mathsf{BN}}\circ \mathcal{R}_n^{\mathsf{BN}} = 
S^{\mathsf{BN}}\circ \testf^{\mathsf{BN}} \circ R_n^{\mathsf{BN}}.
\end{equation*}
Indeed, for every $v \in \mathcal{A}$, we have
\begin{equation*}
\mathcal{S}^{\mathsf{BN}}(\mathcal{N}^{\mathsf{BN}}(\mathcal{R}_n^{\mathsf{BN}} v))
= \mathcal{S}^{\mathsf{BN}}(\mathcal{N}^{\mathsf{BN}}((R_n^{\mathsf{BN}} v) \mathds{1}_D))
= \mathcal{S}^{\mathsf{BN}}(\testf^{\mathsf{BN}}((R_n^{\mathsf{BN}} v)) \mathds{1}_D) 
=S^{\mathsf{BN}}(\testf^{\mathsf{BN}}((R_n^{\mathsf{BN}} v))). 
\end{equation*}
The statement follows.
\end{proof}

\section{Experimental Settings\label{app:exp_setting}}
We adopt the same experimental setting as in the PDEBench repository~\citep{takamoto2022pdebench}. For the sake of information, we recall the considered problems and PDEs and the specific settings we consider when appropriate. The learning procedure used is presented at the end of this section.

\subsection{1D Advection Equation}
The advection equation is a linear Partial Differential Equation (PDE) modeling the transport of a fluid quantity $u$, namely its velocity field,  defined by the following equation:
\begin{align}
     \partial_t u(x,t) + \beta \partial_x  u(x,t) &= 0, 
     ~~~ x \in (0,1), t \in (0,2], \\
     u(x,0) &= u_0(x), ~ x \in (0,1),
\end{align}
with $\beta$ a constant advection speed. 
Note that this system admits an exact solution: $u(t, x) = u_0 (x - \beta t)$.

For this dataset, we follow the setting given in~\citet{takamoto2022pdebench}, Section D.1 by taking $\beta = 0.4$. We learn the mapping between the value of the  field at $t=0$ ($u(x, 0)$) and the value at time $t=2$ ($u(x, 2)$), {\it i.e.} we learn the mapping between the first and the last temporal value of each sample. 

\subsection{1D Burgers Equation} 

The Burgers' equation is a  PDE describing the nonlinear advection and diffusion of a velocity field, defined as follows: 
\begin{align}
    \partial_t u(x,t) + \partial_x  (u^2(x,t)/2) &= \nu/\pi \partial_{xx} u(x,t), ~~~ x \in (0,1), t \in (0,2], \\
    u(x,0) &= u_0(x), ~ x \in (0,1),
\end{align}
where $\nu$ is the diffusion coefficient, which is assumed to be constant in this dataset.

We follow again the setup presented in~\citet{takamoto2022pdebench}, section D.2,  with $\nu = 0.001$. As in  the previous dataset, we learn the mapping from the field at $t=0$ as input to the field at $t=2$ as target. 

\subsection{1D Compressible Navier-Stokes Equations (1D NS)}
The compressible Navier-Stokes equations describe the motion of viscous fluids that can change in density due to compression or expansion. This can be described through the following partial differential equations:
\begin{align} 
    \partial_t \sigma + \partial_x \cdot (\sigma \textbf{u}) &= 0, 
    \\
    \sigma (\partial_t \textbf{u} + \textbf{u} \cdot \partial_x \textbf{u}) &= - \partial_x p + \eta \triangle \textbf{u} + (\zeta + \eta/3) \partial_{xx} \textbf{u}),
    \label{eq:nast-2} \\
    \partial_t (\epsilon + \sigma v^2/2) &+ \partial_x \cdot [( p + \epsilon + \sigma v^2/2 ) \bf{u} - \bf{u} \cdot \sigma' ] = 0,\label{eq:nast-3}
\end{align}
where $\sigma$ is the mass density, $\bf{u}=u(x,t)$ is the fluid velocity, $p$ is the gas pressure, $\epsilon$ is an internal energy described by the equation of state, $\sigma'$ is the viscous stress tensor, and $\eta$ and $\zeta$ are shear and bulk viscosity, respectively. 

In our experiments, we consider the setup introduced in~\citet{takamoto2022pdebench}, Section D.5, fixing $\eta = 10^{-8}$, $\zeta = 10^{-8}$ and out-going boundary conditions. We learn the mapping of the velocity $\textbf{v}$ from time $t=10$ as input to time $t=11$ as target. For this dataset, we added a symmetrical padding preprocessing to replicate periodic boundary conditions (as prescribed in the original FNO code~\citep{2021_Li_Z_p-iclr_fno}).

\subsection{2D Incompressible Navier-Stokes Equations (2D NS)}

\label{sec:ns}
We also consider a dataset from the 2D Navier-Stokes equation for a viscous, incompressible fluid in vorticity form on the unit torus~\citep{2021_Li_Z_p-iclr_fno} defined as follows:
\begin{align}
\begin{split}
\partial_t w(x,t) + u(x,t) \cdot \nabla w(x,t) &= \nu \Delta w(x,t) + f(x), \qquad x \in (0,1)^2, t \in (0,T_{final}]  \\
\nabla \cdot u(x,t) &= 0, \qquad \qquad \qquad \qquad \quad x \in (0,1)^2, t \in (0,T_{final}]  \\
w(x,0) &= w_0(x), \qquad \qquad \qquad \quad x \in (0,1)^2 
\end{split}
\end{align}
with $u$ is the 2D velocity field, $w = \nabla \times u$ is the vorticity, 
$w_0: (0,1)^2;\rightarrow \R$ 
is the initial vorticity function,  $\nu \in \R_+$ is the viscosity coefficient, 
and $f:(0,1)^2 \rightarrow \R$
 is the forcing function.

We follow the setup introduced in~\citet{2021_Li_Z_p-iclr_fno}, Section A.3.3, with $\nu = 10^{-3}$ and $\nu = 10^{-4}$. We learn the mapping of the velocity field $\textbf{v}$ from sample time $t=10$ to $t=45$ for  $\nu = 10^{-3}$ and from $t=10$ to $t=15$ for $\nu = 10^{-4}$.

\subsection{Darcy Flow}
\label{sec:darcy}
We consider a dataset based on the steady state of the 2D Darcy Flow equation on the unit square, representing the flow through porous media and defined as follows: 
\begin{align}
\begin{split}
-\nabla(a(x)\nabla u(x))&=f(x), \qquad x \in (0,1)^2,\\
u(x)&=0,\qquad x \in \partial(0,1)^2.
\end{split}
\end{align}
We follow the setup described in~\citet{takamoto2022pdebench}, Section D.4, with $f(x)$ fixed to the constant $\beta=0.1$.

\subsection{FNO Baselines}\label{app:baselines}

In this section, we further detail the FNO improvements considered as baselines.

\paragraph{F-FNO~\citep{tran2023factorized}.} The factorized FNO is a particularly relevant baseline for comparison, as it i) incorporates skip-like connections that share similarities with our additional $\sigma^{-1}$ term and ii) also seeks to enable the development of deeper FNO architectures. We consider the best-performing F-FNO model (as identified by its authors), trained using our optimization strategy and adapted to our specific learning task. It is important to note that F-FNO was originally designed for predicting mappings between multiple consecutive time steps (e.g., from \(t\) to \(t+1\)) and it offers the option to rely on techniques such as the Markov assumption and teacher forcing. Since our task involves predicting the final state directly from the initial conditions, those techniques are not appropriate, and thus we did not include them in the implementation. Additionally, we have found that the original optimization strategy proposed by the F-FNO authors (AdamW with cosine annealing, noise injection and input normalization) did not perform well on our tasks, so we also employed the optimization strategy detailed in Appendix C.6.

\paragraph{ResFNO.}
Moreover, to isolate the impact of residual connections from the broader structural modifications introduced by F-FNO, we have also implemented and compared a ResNet-inspired variant of FNO, referred to as ResFNO. We did this to better understand the role of the residual connection.

\subsection{Learning procedure}
\label{app:learn_exp_setting}
Models are trained using the Adam optimizer with a constant learning rate, a batch size of 128 for 1D problems (resp. 16 for 2D problems), a maximum of 2000 epochs and an early stopping strategy with patience of 250 epochs and $\delta=10^{-3}$. The learning rate is validated on a grid of multiple values equally spaced in logarithmic scale. If not mentioned otherwise, we use 8000 (resp. 1000) training samples for 1D (resp. 2D) problems, and 1000 samples each for validation and testing. All results are averaged over four random splittings.

Experiments have been made on an internal clusters of GPUs with memory from 10Go to 45Go. All the experiments can be achieved with GPUs with a memory of 10Go, except for models with 32 or 64 layers which require at least a memory of 24Go.

A summary of the experimental setting along with some learning hyperparameter is detailed in Table~\ref{tab:exp_settings}.

\begin{table}[h]
\centering
\caption{Experimental settings.}
\label{tab:exp_settings}
\begin{tabular}{lccccccc}
\toprule
Dataset & Number of modes & Batch size & Width & Tinit & Tfinal & Train samples \\
\midrule
1D Advection & [16] & 128 & 64 & 0 & 200 & 8000 \\
1D Burgers & [16] & 128 & 64 & 0 & 200 & 8000 \\
1D NS & [16] & 128 & 64 & 10 & 11 & 8000 \\
2D NS ($10^{-4}$) & [12,12] & 16 & 32 & 10 & 15 & 1000 \\
2D NS ($10^{-3}$) & [12,12] & 16 & 32 & 10 & 45 & 1000 \\
2D Darcy & [12,12] & 16 & 32 & - & - & 1000 \\
\bottomrule
\end{tabular}
\end{table}

\section{Additional Results}

\subsection{Comparison of Predictions}\label{app:prediction_example}

In this section, we visually inspect to what extent the prediction made by FNO, ResFNO and BFNO is close to the ground truth. We provide three examples on the Navier Stokes dataset with viscosity $10^{-4}$ where we have selected the best performing models.

\begin{figure}[!h]
\begin{subfigure}[b]{.5\linewidth}
    \centering
    \includegraphics[scale=.5]{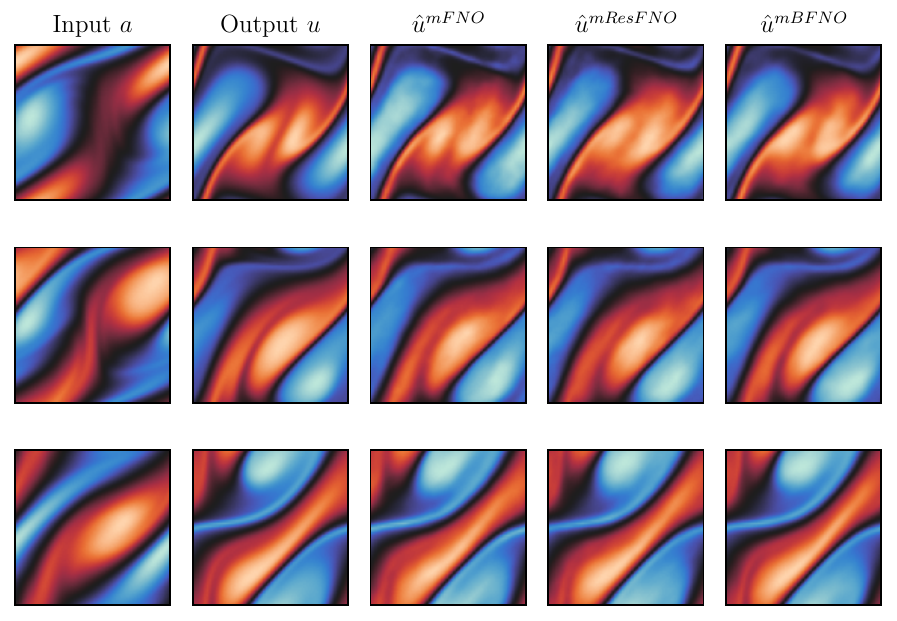}
    \caption{Predictions}
    \label{fig:NS_1e4_predictions}
\end{subfigure}\begin{subfigure}[b]{.5\linewidth}
    \centering
    \includegraphics[scale=.5]{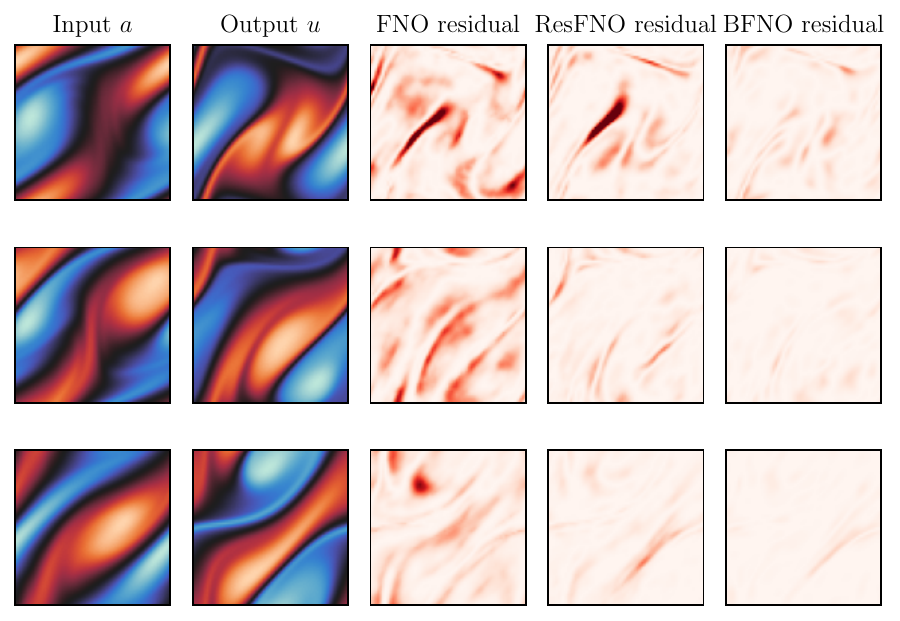}
    \caption{Residuals}
    \label{fig:NS_1e4_residuals}
\end{subfigure}
\caption{Predictions and residuals for Navier Stokes $10^{-4}$.}
\end{figure}

\subsection{Extension to Wavelet Neural Operators\label{app:wno}}

We also extended our experiments to Wavelet Neural Operators (WNO). In Table \ref{tab:wno_64} is reported the comparison between standard WNO and the Bregman version BWNO. We can observe similar results as Fourier models, where our models outperform the standard models and are able to gain performance when increasing the number of layers. Furthermore, even with gradient clipping, 32 and 64-layer standard models could not converge during training, leading to 100\% relative error rate. Further analysis shows that this divergence can be linked with the high error rates on low frequencies and boundary conditions.

\begin{table}[!h]
  \caption{Relative error of WNO and BWNO models on benchmark PDEs.}
  \label{tab:wno_64}
  \footnotesize\centering
  \setlength{\tabcolsep}{4pt} 
  \begin{tabular}{l|cc|cc|cc}
    \toprule
     & \multicolumn{2}{c|}{\textbf{1D Advection}} & \multicolumn{2}{c|}{\textbf{1D Burgers}} & \multicolumn{2}{c}{\textbf{1D NS}} \\
    & \textbf{WNO} & \textbf{BWNO} & \textbf{WNO} & \textbf{BWNO} & \textbf{WNO} & \textbf{BWNO} \\
    \midrule
    \textbf{4 layers} & $3.0 \pm 0.0$\% & $\textbf{2.8} \pm 0.2$\% & $21.5 \pm 0.5$\% & $\textbf{21.3} \pm 0.4$\% & $59.2 \pm 0.6$\% & $\textbf{58.3} \pm 0.6$\% \\
    \textbf{8 layers} & $2.5 \pm 0.1$\% & $\textbf{2.1} \pm 0.1$\% & $19.1 \pm 0.6$\% & $\textbf{17.9} \pm 0.6$\% & $59.0 \pm 0.6$\% & $\textbf{58.0} \pm 0.6$\% \\
    \textbf{16 layers} & $3.9 \pm 0.8$\% & $\textbf{2.0} \pm 0.2$\% & $19.7 \pm 0.5$\% & $\textbf{16.4} \pm 0.3$\% & $61.1 \pm 0.6$\% & $\textbf{57.6} \pm 0.7$\% \\
    \textbf{32 layers} & $100 \pm 0$\% & $\textbf{1.9} \pm 0.1$\% & $100 \pm 0$\% & $\textbf{16.4} \pm 0.5$\% & $100 \pm 0$\% & $\textbf{57.2} \pm 0.6$\% \\
    \textbf{64 layers} & $100 \pm 0$\% & $\textbf{1.8} \pm 0.2$\% & $100 \pm 0$\% & $\textbf{16.1} \pm 0.4$\% & $100 \pm 0$\% & $\textbf{57.5} \pm 0.6$\% \\
    \bottomrule
  \end{tabular}
\end{table}

\subsection{Detailed Analysis of the Prediction Performance\label{app:detailed_freq}}

In the same spirit of \citet{takamoto2022pdebench}, we include several metrics providing a deeper understanding of the models' behavior, including relative mean squared error on the boundary (rMSE) as well as in the low, mid, and high frequency bands (fRMSE low, fRMSE mid, fRMSE high). Results are provided in Table~\ref{tab:frequency}.

\begin{table}[!h]
\caption{Additional comparison of the performance in terms of relative $\ell^2$ error (rMSE), relative mean squared error on the boundary (rMSE) as well as in the low, mid and high frequency bands (fRMSE low, fRMSE mid, fRMSE high).}
\label{tab:frequency}
\scriptsize
\setlength{\extrarowheight}{2pt}
\centering
\begin{tabular}{|l|l|ll|ll|ll|}
\hline
      & & \multicolumn{2}{c|}{$T=4$} & \multicolumn{2}{c|}{$T=8$} & \multicolumn{2}{c|}{$T=16$} \\ \hline
PDE & Metric & \multicolumn{1}{c|}{BFNO} & \multicolumn{1}{c|}{FNO} & \multicolumn{1}{c|}{BFNO} & \multicolumn{1}{c|}{FNO} & \multicolumn{1}{c|}{BFNO} & \multicolumn{1}{c|}{FNO} \\ \hline
\multirow{5}{*}{\rotatebox[origin=c]{90}{1D Advection}} & rMSE & \multicolumn{1}{l|}{$\mathbf{1.55\cdot 10^{-2}}$} & $2.43\cdot 10^{-2}$ & \multicolumn{1}{l|}{$\mathbf{1.45\cdot 10^{-2}}$} & $3.22\cdot 10^{-2}$ & \multicolumn{1}{l|}{$\mathbf{1.43\cdot 10^{-2}}$} & $4.38\cdot 10^{-2}$ \\ \cline{2-8}
    & bRMSE & \multicolumn{1}{l|}{$\mathbf{9.04\cdot 10^{-2}}$} & $1.21\cdot 10^{-1}$ & \multicolumn{1}{l|}{$\mathbf{8.28\cdot 10^{-2}}$} & $1.51\cdot 10^{-1}$ & \multicolumn{1}{l|}{$\mathbf{7.82\cdot 10^{-2}}$} & $2.43\cdot 10^{-1}$ \\ \cline{2-8}
    & fRMSE low & \multicolumn{1}{l|}{$\mathbf{4.56\cdot 10^{-6}}$} & $9.38\cdot 10^{-6}$ & \multicolumn{1}{l|}{$\mathbf{4.45\cdot 10^{-6}}$} & $1.14\cdot 10^{-5}$ & \multicolumn{1}{l|}{$\mathbf{4.21\cdot 10^{-6}}$} & $1.93\cdot 10^{-5}$ \\ \cline{2-8}
    & fRMSE mid & \multicolumn{1}{l|}{$\mathbf{3.89\cdot 10^{-6}}$} & $6.81\cdot 10^{-6}$ & \multicolumn{1}{l|}{$\mathbf{3.39\cdot 10^{-6}}$} & $8.84\cdot 10^{-6}$ & \multicolumn{1}{l|}{$\mathbf{3.61\cdot 10^{-6}}$} & $1.27\cdot 10^{-5}$ \\ \cline{2-8}
    & fRMSE high & \multicolumn{1}{l|}{$\mathbf{3.15\cdot 10^{-7}}$} & $4.87\cdot 10^{-7}$ & \multicolumn{1}{l|}{$\mathbf{2.89\cdot 10^{-7}}$} & $5.96\cdot 10^{-7}$ & \multicolumn{1}{l|}{$\mathbf{2.76\cdot 10^{-7}}$} & $8.08\cdot 10^{-7}$ \\  \hline
\multirow{5}{*}{\rotatebox[origin=c]{90}{1D Burgers}} & rMSE & \multicolumn{1}{l|}{$\mathbf{8.24\cdot 10^{-2}}$} & $8.28\cdot 10^{-2}$ & \multicolumn{1}{l|}{$\mathbf{5.83\cdot 10^{-2}}$} & $8.16\cdot 10^{-2}$ & \multicolumn{1}{l|}{$\mathbf{4.67\cdot 10^{-2}}$} & $7.92\cdot 10^{-2}$ \\ \cline{2-8}
    & bRMSE & \multicolumn{1}{l|}{$3.83\cdot 10^{-1}$} & $\mathbf{3.69\cdot 10^{-1}}$ & \multicolumn{1}{l|}{$\mathbf{2.41\cdot 10^{-1}}$} & $3.65\cdot 10^{-1}$ & \multicolumn{1}{l|}{$\mathbf{1.85\cdot 10^{-1}}$} & $3.72\cdot 10^{-1}$ \\ \cline{2-8}
    & fRMSE low & \multicolumn{1}{l|}{$5.18\cdot 10^{-5}$} & $\mathbf{4.88\cdot 10^{-5}}$ & \multicolumn{1}{l|}{$\mathbf{3.01\cdot 10^{-5}}$} & $4.52\cdot 10^{-5}$ & \multicolumn{1}{l|}{$\mathbf{2.37\cdot 10^{-5}}$} & $4.88\cdot 10^{-5}$ \\ \cline{2-8}
    & fRMSE mid & \multicolumn{1}{l|}{$\mathbf{3.41\cdot 10^{-5}}$} & $3.44\cdot 10^{-5}$ & \multicolumn{1}{l|}{$\mathbf{2.52\cdot 10^{-5}}$} & $3.61\cdot 10^{-5}$ & \multicolumn{1}{l|}{$\mathbf{2.01\cdot 10^{-5}}$} & $3.29\cdot 10^{-5}$ \\ \cline{2-8}
    & fRMSE high & \multicolumn{1}{l|}{$\mathbf{1.17\cdot 10^{-6}}$} & $1.32\cdot 10^{-6}$ & \multicolumn{1}{l|}{$\mathbf{1.01\cdot 10^{-6}}$} & $1.33\cdot 10^{-6}$ & \multicolumn{1}{l|}{$\mathbf{8.71\cdot 10^{-7}}$} & $1.29\cdot 10^{-6}$ \\  \hline
\multirow{5}{*}{\rotatebox[origin=c]{90}{1D NS ($10^{-8}$)}} & rMSE & \multicolumn{1}{l|}{$\mathbf{4.91\cdot 10^{-1}}$} & $5.05\cdot 10^{-1}$ & \multicolumn{1}{l|}{$\mathbf{4.90\cdot 10^{-1}}$} & $5.22\cdot 10^{-1}$ & \multicolumn{1}{l|}{$\mathbf{4.86\cdot 10^{-1}}$} & $5.35\cdot 10^{-1}$ \\ \cline{2-8}
    & bRMSE & \multicolumn{1}{l|}{$\mathbf{2.16\cdot 10^{0}}$} & $2.44\cdot 10^{0}$ & \multicolumn{1}{l|}{$\mathbf{2.06\cdot 10^{0}}$} & $2.85\cdot 10^{0}$ & \multicolumn{1}{l|}{$\mathbf{1.95\cdot 10^{0}}$} & $3.18\cdot 10^{0}$ \\ \cline{2-8}
    & fRMSE low & \multicolumn{1}{l|}{$\mathbf{2.65\cdot 10^{-4}}$} & $2.78\cdot 10^{-4}$ & \multicolumn{1}{l|}{$\mathbf{2.65\cdot 10^{-4}}$} & $2.86\cdot 10^{-4}$ & \multicolumn{1}{l|}{$\mathbf{2.57\cdot 10^{-4}}$} & $2.91\cdot 10^{-4}$ \\ \cline{2-8}
    & fRMSE mid & \multicolumn{1}{l|}{$\mathbf{2.19\cdot 10^{-4}}$} & $2.26\cdot 10^{-4}$ & \multicolumn{1}{l|}{$\mathbf{2.18\cdot 10^{-4}}$} & $2.34\cdot 10^{-4}$ & \multicolumn{1}{l|}{$\mathbf{2.13\cdot 10^{-4}}$} & $2.43\cdot 10^{-4}$ \\ \cline{2-8}
    & fRMSE high & \multicolumn{1}{l|}{$1.12\cdot 10^{-5}$} & $\mathbf{1.11\cdot 10^{-5}}$ & \multicolumn{1}{l|}{$\mathbf{1.11\cdot 10^{-5}}$} & $1.13\cdot 10^{-5}$ & \multicolumn{1}{l|}{$1.21\cdot 10^{-5}$} & $\mathbf{1.13\cdot 10^{-5}}$ \\  \hline
\multirow{5}{*}{\rotatebox[origin=c]{90}{2D Darcy*}} & rMSE & \multicolumn{1}{l|}{$\mathbf{9.99\cdot 10^{-1}}$} & $1.00\cdot 10^{0}$ & \multicolumn{1}{l|}{$\mathbf{9.96\cdot 10^{-1}}$} & $1.01\cdot 10^{0}$ & \multicolumn{1}{l|}{$\mathbf{1.00\cdot 10^{0}}$} & $1.02\cdot 10^{0}$ \\ \cline{2-8}
    & bRMSE & \multicolumn{1}{l|}{$\mathbf{1.45\cdot 10^{-2}}$} & $1.47\cdot 10^{-2}$ & \multicolumn{1}{l|}{$\mathbf{1.43\cdot 10^{-2}}$} & $1.52\cdot 10^{-2}$ & \multicolumn{1}{l|}{$1.41\cdot 10^{-2}$} & $\mathbf{1.40\cdot 10^{-2}}$ \\ \cline{2-8}
    & fRMSE low & \multicolumn{1}{l|}{$\mathbf{6.41\cdot 10^{-4}}$} & $6.44\cdot 10^{-4}$ & \multicolumn{1}{l|}{$\mathbf{6.35\cdot 10^{-4}}$} & $6.45\cdot 10^{-4}$ & \multicolumn{1}{l|}{$\mathbf{6.37\cdot 10^{-4}}$} & $6.52\cdot 10^{-4}$ \\ \cline{2-8}
    & fRMSE mid & \multicolumn{1}{l|}{$3.29\cdot 10^{-5}$} & $\mathbf{3.21\cdot 10^{-5}}$ & \multicolumn{1}{l|}{$3.23\cdot 10^{-5}$} & $\mathbf{3.17\cdot 10^{-5}}$ & \multicolumn{1}{l|}{$\mathbf{3.15\cdot 10^{-5}}$} & $3.23\cdot 10^{-5}$ \\ \cline{2-8}
    & fRMSE high & \multicolumn{1}{l|}{$\mathbf{1.68\cdot 10^{-6}}$} & $1.99\cdot 10^{-6}$ & \multicolumn{1}{l|}{$\mathbf{1.86\cdot 10^{-6}}$} & $1.99\cdot 10^{-6}$ & \multicolumn{1}{l|}{$\mathbf{1.73\cdot 10^{-6}}$} & $2.01\cdot 10^{-6}$ \\  \hline
\multirow{5}{*}{\rotatebox[origin=c]{90}{2D NS ($10^{-3}$)}} & rMSE & \multicolumn{1}{l|}{$\mathbf{5.49\cdot 10^{-1}}$} & $5.65\cdot 10^{-1}$ & \multicolumn{1}{l|}{$\mathbf{5.23\cdot 10^{-1}}$} & $5.48\cdot 10^{-1}$ & \multicolumn{1}{l|}{$\mathbf{5.41\cdot 10^{-1}}$} & $5.46\cdot 10^{-1}$ \\ \cline{2-8}
    & bRMSE & \multicolumn{1}{l|}{$\mathbf{2.88\cdot 10^{-2}}$} & $2.97\cdot 10^{-2}$ & \multicolumn{1}{l|}{$\mathbf{2.75\cdot 10^{-2}}$} & $2.92\cdot 10^{-2}$ & \multicolumn{1}{l|}{$\mathbf{2.83\cdot 10^{-2}}$} & $2.84\cdot 10^{-2}$ \\ \cline{2-8}
    & fRMSE low & \multicolumn{1}{l|}{$\mathbf{5.82\cdot 10^{-4}}$} & $5.98\cdot 10^{-4}$ & \multicolumn{1}{l|}{$\mathbf{5.59\cdot 10^{-4}}$} & $5.76\cdot 10^{-4}$ & \multicolumn{1}{l|}{$5.75\cdot 10^{-4}$} & $\mathbf{5.70\cdot 10^{-4}}$ \\ \cline{2-8}
    & fRMSE mid & \multicolumn{1}{l|}{$1.46\cdot 10^{-4}$} & $\mathbf{1.41\cdot 10^{-4}}$ & \multicolumn{1}{l|}{$\mathbf{1.13\cdot 10^{-4}}$} & $1.27\cdot 10^{-4}$ & \multicolumn{1}{l|}{$1.11\cdot 10^{-4}$} & $\mathbf{1.07\cdot 10^{-4}}$ \\ \cline{2-8}
    & fRMSE high & \multicolumn{1}{l|}{$1.31\cdot 10^{-5}$} & $\mathbf{8.69\cdot 10^{-6}}$ & \multicolumn{1}{l|}{$9.87\cdot 10^{-6}$} & $\mathbf{8.37\cdot 10^{-6}}$ & \multicolumn{1}{l|}{$\mathbf{1.01\cdot 10^{-5}}$} & $1.14\cdot 10^{-5}$ \\  \hline
\multirow{5}{*}{\rotatebox[origin=c]{90}{2D NS ($10^{-4}$)}} & rMSE & \multicolumn{1}{l|}{$\mathbf{1.31\cdot 10^{0}}$} & $1.34\cdot 10^{0}$ & \multicolumn{1}{l|}{$\mathbf{1.25\cdot 10^{0}}$} & $1.38\cdot 10^{0}$ & \multicolumn{1}{l|}{$\mathbf{1.23\cdot 10^{0}}$} & $1.43\cdot 10^{0}$ \\ \cline{2-8}
    & bRMSE & \multicolumn{1}{l|}{$\mathbf{7.03\cdot 10^{-2}}$} & $7.16\cdot 10^{-2}$ & \multicolumn{1}{l|}{$\mathbf{6.74\cdot 10^{-2}}$} & $7.45\cdot 10^{-2}$ & \multicolumn{1}{l|}{$\mathbf{6.68\cdot 10^{-2}}$} & $7.73\cdot 10^{-2}$ \\ \cline{2-8}
    & fRMSE low & \multicolumn{1}{l|}{$\mathbf{9.54\cdot 10^{-4}}$} & $1.00\cdot 10^{-3}$ & \multicolumn{1}{l|}{$\mathbf{9.42\cdot 10^{-4}}$} & $9.94\cdot 10^{-4}$ & \multicolumn{1}{l|}{$\mathbf{9.22\cdot 10^{-4}}$} & $1.11\cdot 10^{-3}$ \\ \cline{2-8}
    & fRMSE mid & \multicolumn{1}{l|}{$\mathbf{7.34\cdot 10^{-4}}$} & $7.51\cdot 10^{-4}$ & \multicolumn{1}{l|}{$\mathbf{6.93\cdot 10^{-4}}$} & $7.89\cdot 10^{-4}$ & \multicolumn{1}{l|}{$\mathbf{6.83\cdot 10^{-4}}$} & $7.98\cdot 10^{-4}$ \\ \cline{2-8}
    & fRMSE high & \multicolumn{1}{l|}{$\mathbf{1.29\cdot 10^{-4}}$} & $1.33\cdot 10^{-4}$ & \multicolumn{1}{l|}{$\mathbf{1.22\cdot 10^{-4}}$} & $1.43\cdot 10^{-4}$ & \multicolumn{1}{l|}{$\mathbf{1.21\cdot 10^{-4}}$} & $1.48\cdot 10^{-4}$ \\  \hline
\end{tabular}
\end{table}

\subsection{Impact of the Activation Function\label{app:relu}}
A limitation of our framework is the fact that it requires Bregman variants (such as BFNO) to have a \emph{strictly} monotonic activation function, which excludes a few functions such as ReLU. This justifies why in our experiments we used Softplus as a surrogate of ReLU. On the contrary, for classical neural operators within our framework, the activation function only needs to be monotonic, not strictly monotonic. Therefore, ReLU is still valid and can be used.

As a thought experiment, we also implemented BFNO with ReLU and evaluated it on the 2D Navier-Stokes dataset ($\nu\myeq10^{-4})$. Table~\ref{tab:architecture_comparison} shows that BFNO with ReLU achieves comparable or better performance than Softplus for the same number of layers. However, the best results are the same (i.e., 12.2\%  for 16 layers).
\begin{table}[h!]
\centering
\begin{tabular}{|l|c|c|c|}
\hline
Architecture & 4 layers & 8 layers & 16 layers \\ \hline
FNO (ReLU)         & 13.5 ± 0.1 & 13.0 ± 0.1 & 12.6 ± 0.1 \\ \hline
BFNO (Softplus)        & 13.7 ± 0.1 & 12.6 ± 0.1 & \textbf{12.2 ± 0.1} \\ \hline
BFNO (ReLU)   & \textbf{13.4 ± 0.2} & \textbf{12.2 ± 0.2} & \textbf{12.2 ± 0.1} \\ \hline
\end{tabular}
\caption{Comparison BFNO with Softplus or ReLU.}
\label{tab:architecture_comparison}
\end{table}

\subsection{Impact of Batch Normalization\label{app:batchnorm}}
For all the experiments presented in the previous sections, we relied on the latest available version of the FNO implementation, which does not include \emph{Batch Normalization} (BN), while it was used in the original FNO paper \cite{2021_Li_Z_p-iclr_fno}. We note that the original FNO code was removed from the GitHub repository by its author (i.e., the 'master' branch was deleted). While we retrieved an earlier version of the code, we observed that BN was implemented in the initial commit but was subsequently removed in a later commit titled "\emph{remove unnecessary batchnorm}", suggesting that adding BN layers does not lead to better prediction performance.

To complement our results, we have conducted an experiment with BN for both FNO and BFNO architectures on the 2D Navier-Stokes dataset ($\nu\myeq10^{-4})$. Results, reported in Table~\ref{tab:batchnorm}, show marginal improvements for 8-layer models (FNO: 13.0\% → 12.8\%, BFNO: 12.6\% → 12.4\%) but no consistent benefits for other configurations. This aligns with the conclusion of the recent FNO implementations that removed BN.
\begin{table}[!h]
\centering
\begin{tabular}{|l|c|c|c|}
\hline
Architecture & 4 layers & 8 layers & 16 layers \\
\hline
FNO & \textbf{13.5 ± 0.1} & 13.0 ± 0.1 & 12.6 ± 0.1 \\
FNO + BN & \textbf{13.5 ± 0.1} & 12.8 ± 0.2 & 12.6 ± 0.1 \\
BFNO & 13.7 ± 0.1 & 12.6 ± 0.1 & \textbf{12.2 ± 0.1} \\
BFNO + BN & \textbf{13.5 ± 0.1} & \textbf{12.4 ± 0.0} & 12.3 ± 0.1 \\
\hline
\end{tabular}
\caption{Impact of BatchNormalization (BN) with FNO and BFNO\label{tab:batchnorm}}
\end{table}

\end{document}